\def\colorful{1}
\newif\ifhyper\IfFileExists{hyperref.sty}{\hypertrue}{\hyperfalse}
\ifhyper\usepackage{hyperref}\fi
\def\nnewcolor{0}
\newcommand{\nnew}[1]{{\color{red} #1}}
\newcommand{\nnew}[1]{#1}
\newcommand{\mcal}[1]{\mathcal{#1}}
\newcommand{\var}{\mathbf{Var}}
\newtheorem{theorem}{Theorem}[section]
\newtheorem{lemma}[theorem]{Lemma}
\newtheorem{informal theorem}[theorem]{Theorem (informal statement)}
\newtheorem{proposition}[theorem]{Proposition}
\newtheorem{fact}[theorem]{Fact}
\newtheorem{remark}[theorem]{Remark}
\theoremstyle{definition}
\newtheorem{definition}[theorem]{Definition}
\newcommand{\eqdef}{\stackrel{{\mathrm {\footnotesize def}}}{=}}
\newcommand{\relu}{\phi}
\DeclareMathOperator{\rank}{rank}
\newcommand{\iu}{{i\mkern1mu}}
\newcommand{\lp}{\left}
\newcommand{\rp}{\right}
\newcommand\snorm[2]{\left\| #2 \right\|_{#1}}
\renewcommand\vec[1]{\mathbf{#1}}
\DeclareMathOperator*{\Prob}{\mathbf{Pr}}
\DeclareMathOperator*{\E}{\mathbf{E}}
\newcommand{\proj}{\mathrm{proj}}
\def\d{\mathrm{d}}
\newcommand{\normal}{\mathcal{N}}
\DeclareMathOperator*{\argmin}{argmin}
\newcommand{\bx}{\mathbf{x}}
\newcommand{\by}{\mathbf{y}}
\newcommand{\bv}{\mathbf{v}}
\newcommand{\bw}{\mathbf{w}}
\newcommand{\Sp}{\mathbb{S}}
\newcommand{\x}{\mathbf{x}}
\newcommand{\R}{\mathbb{R}}
\newcommand{\Z}{\mathbb{Z}}
\newcommand{\N}{\mathbb{N}}
\newcommand{\eps}{\epsilon}
\newcommand{\pr}{\mathbf{Pr}}
\newcommand{\poly}{\mathrm{poly}}
\newcommand{\D}{\mathcal{D}}
\newcommand\matr[1]{\bm{#1}}
\newcommand{\littlesum}{\mathop{\textstyle \sum}}
\newcommand{\wt}{\widetilde}
\newcommand{\dotp}[2]{\left\langle #1, #2 \right\rangle}
\title{Algorithms and SQ Lower Bounds for PAC Learning
	One-Hidden-Layer ReLU Networks}
\author{
Ilias Diakonikolas\thanks{Supported by NSF Award CCF-1652862 (CAREER), 
a Sloan Research Fellowship, and a DARPA Learning with Less Labels (LwLL) grant.}\\
University of Wisconsin-Madison\\
{\tt ilias@cs.wisc.edu}\\
\and
Daniel M. Kane\thanks{Supported by NSF Award CCF-1553288 (CAREER) and a Sloan Research Fellowship.}\\ University of California, San Diego
\\
{\tt dakane@cs.ucsd.edu}
\and
Vasilis Kontonis\\
University of Wisconsin-Madison\\
{\tt kontonis@wisc.edu }\\
\and
Nikos Zarifis\thanks{Supported in part by a DARPA  Learning with Less Labels (LwLL) grant.}\\
University of Wisconsin-Madison\\
{\tt zarifis@wisc.edu}\\
}
\begin{document}

\maketitle

\begin{abstract}
We study the problem of PAC learning one-hidden-layer ReLU networks
with $k$ hidden units
on $\R^d$ under Gaussian marginals in the presence of additive label noise. 
For the case of positive coefficients, we give the first polynomial-time algorithm 
for this learning problem for $k$ up to $\tilde{O}(\sqrt{\log d})$. 
Previously, no polynomial time algorithm was known, even for $k=3$.
This answers an open question posed by~\cite{Kliv17}. Importantly,
our algorithm does not require any assumptions about the rank of the weight matrix
and its complexity is independent of its condition number. On the negative side,
for the more general task of PAC learning one-hidden-layer ReLU networks with arbitrary real coefficients, 
we prove a Statistical Query lower bound of $d^{\Omega(k)}$. Thus, we provide a 
separation between the two classes in terms of efficient learnability.
Our upper and lower bounds are general, extending to broader families of activation functions.
 
\end{abstract}

\setcounter{page}{0}
\thispagestyle{empty}
\newpage

\section{Introduction}

\subsection{Background and Motivation} \label{ssec:background}

In recent years, the impressive practical success of deep learning has motivated 
the development of provably efficient learning algorithms for various classes of neural networks.
A large body of research (see Section~\ref{ssec:related} for a brief overview) 
has resulted in efficient learning algorithms for shallow networks 
with common activation functions (e.g., ReLUs or sigmoids)
under various assumptions on the underlying distribution and the weight structure of the network.
Despite intensive investigation, the broad question of whether deep neural networks 
are efficiently learnable with provable guarantees remains an outstanding theoretical challenge 
in machine learning. In particular, the class of networks for which efficient learners 
are known is relatively limited, even in the realizable case 
(i.e., when the data is drawn from a neural network in the class).

In this work, we continue this line of investigation by studying the learnability of a simple class of networks
without imposing strong restrictions on the structure of its weights. 
Specifically, we focus on the problem of learning one-hidden-layer ReLU 
networks under the Gaussian distribution 
in the presence of additive random label noise.
Our goal is to understand the complexity 
of this problem {\em in the PAC learning model without assumptions 
on the weight matrix of the network}. 

\begin{definition}[One-hidden-layer ReLU networks] \label{def:sum-of-relus}
Let $\mathcal{C}_k$ denote the concept class of one-hidden-layer ReLU networks
on $\R^d$ with $k$ hidden units. That is, $f_{\alpha, \mathbf{W}} \in \mathcal{C}_k$ if and only if 
there exist weight vectors $\bw^{(i)} \in \R^d$ and real coefficients $\alpha_i$, $i \in [k]$, 
such that $f_{\alpha, \mathbf{W}}(\bx) = \sum_{i=1}^k \alpha_i \relu (\langle \bw^{(i)}, \bx \rangle)$, 
where $\relu(t) = \max\{0, t\}$, $t \in \R$.
We will denote by $\alpha = (\alpha_i)_{i=1}^k$ the vector of coefficients and by $\mathbf{W} = [\bw^{(i)}]_{i=1}^k$
the weight matrix of the network.
We will use $\mathcal{C}_k^{+}$ to denote the subclass of $\mathcal{C}_k$ where $\alpha \in \R_+^{k}$.
\end{definition}

The (distribution-specific) PAC learning problem for a concept class $\mathcal{C}$ of real-valued functions 
is the following: 
The input is a multiset of i.i.d. labeled examples $(\bx, y)$, where $\bx$ is generated
from the standard Gaussian distribution on $\R^d$ and $y = f(\bx)+\xi$, where $f \in \mathcal{C}$ 
is the unknown target concept and $\xi$ is some type of random observation noise. 
The goal of the learner is to output a hypothesis
$h: \R^d \to \R$ that with high probability is close to $f$ in $L_2$-norm. 
The hypothesis $h$ is allowed to lie in any efficiently representable hypothesis class $\mathcal{H}$.
If $\mathcal{H} = \mathcal{C}$, the PAC learning algorithm is called {\em proper}.

Perhaps surprisingly, the complexity of PAC learning one-hidden-layer ReLU networks (even with positive weights)
has remained open, even in the realizable setting, under Gaussian marginals, 
and for $k=3$~\cite{Kliv17}\footnote{Formally speaking, the $k=2$ case does not appear explicitly in the literature,
but an efficient algorithm easily follows from prior work on parameter estimation (e.g.,~\cite{GeLM18}).}. 
A line of prior work~\cite{GeLM18, BakshiJW19, GeKLW19} had studied the task of {\em parameter estimation}
for this concept class, i.e., the task of recovering the unknown coefficients $\alpha_i$
and weight vectors $\bw^{(i)}$ of the data generating network within small accuracy.
It should be noted that for parameter estimation to even be information-theoretically possible, 
some assumptions on the target function are necessary. 
The aforementioned prior works made the common assumption that the weight matrix 
$\mathbf{W} = [\bw^{(i)}]_{i=1}^k$ is {\em full-rank}. Under this assumption, they
provided efficient parameter learning algorithms 
with respect Gaussian marginals for the case of {\em positive coefficients}, i.e., for $\mathcal{C}_k^{+}$.
Importantly, the sample and computational complexity of these algorithms scale polynomially 
with the condition number of $\mathbf{W}$. In contrast, no such algorithm is known for 
general coefficients, i.e., for $\mathcal{C}_k$, even under the aforementioned strong assumptions
on the weights.

In contrast to parameter estimation, PAC learning one-hidden-layer ReLU networks 
does not require any assumptions on the structure of the weight matrix. The PAC learning 
problem for this class is information-theoretically solvable with polynomially
many samples. The question is whether a computationally efficient algorithm exists.
It should also be noted that proper PAC learning is not generally equivalent to parameter estimation,
as it is in principle possible to have two networks that define close-by functions and 
whose parameters are significantly different.

\subsection{Our Results} \label{ssec:results}
We are ready to describe the main contributions of this work.
Our main positive result is the first PAC learning algorithm for $\mathcal{C}_k^{+}$ (one-hidden-layer 
Relu networks with positive coefficients) under Gaussian marginals 
that runs in polynomial time for any $k = \tilde{O}(\sqrt{\log d})$. 
On the lower bound side, we establish a Statistical Query (SQ) lower bound suggesting 
that no such algorithm is possible for $\mathcal{C}_k$ (general coefficients) for any $k = \omega(1)$ 
(also under Gaussian marginals).
Our SQ lower bound provides a separation between $\mathcal{C}_k^{+}$ and $\mathcal{C}_k$ 
in terms of efficient learnability.

Before we state our main theorems, we formally define the PAC learning problem.

\begin{definition}[Distribution-Specific PAC Learning] \label{def:PAC}
Let $\mathcal{F}$ be a concept class of real-valued functions over $\R^d$, 
$\D$ be a distribution on $\R^d$, $\mathcal{F} \in L_2(\D, \R^d)$, and $0< \eps <1$.
Let $f$ be an unknown target function in $\mathcal{F}$. 
A {\em noisy example oracle}, $\mathrm{EX}^{\mathrm{noise}}(f, \mathcal{F})$,
works as follows: Each time $\mathrm{EX}^{\mathrm{noise}}(f, \mathcal{F})$ is invoked,
it returns a labeled example $(\bx, y)$, such that: (a) $\bx \sim \D$, 
and (b) $y = f(\bx) + \xi$, where $\xi$ is a zero-mean and standard deviation $\sigma$ 
subgaussian random variable that is independent of $\bx$. A learning algorithm is given i.i.d. samples from the noisy oracle
and its goal is to output a hypothesis $h$ such that with high probability $h$ is $\eps$-close to 
$f$ in $L_2$-norm, i.e., it holds $\E_{\bx \sim \D}[(f(\bx) - h(\bx))^2] \leq \eps^2 \left( \E_{\bx \sim \D}[f^2(\bx)] +\sigma^2\right)$.
\end{definition}

Our main positive result is the first computationally efficient PAC learning algorithm for $\mathcal{C}_k^{+}$.

\begin{theorem}[Proper PAC Learner for $\mathcal{C}_k^{+}$] \label{thm:alg-inf}
There is a proper PAC learning algorithm for $\mathcal{C}_k^{+}$ with respect to the standard Gaussian distribution on $\R^d$
with the following performance guarantee: The algorithm draws $\poly(k/\eps) \cdot \tilde{O}(d) $ noisy labeled examples 
from an unknown target $f \in \mathcal{C}_k^{+}$, runs in time $\poly(d/\eps)+ (k/\eps)^{O(k^2)}$, 
and outputs a hypothesis $h \in \mathcal{C}_k^{+}$ that with high probability is $\eps$-close to $f$ in $L_2$-norm.
\end{theorem}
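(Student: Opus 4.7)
The plan is to reduce the $d$-dimensional problem to a low-dimensional one via spectral subspace recovery, and then to brute-force search in that subspace. The algorithm has two stages, mirroring the runtime decomposition $\poly(d/\eps) + (k/\eps)^{O(k^2)}$.

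\textbf{Stage 1 (subspace recovery via a Chow-type matrix).} I compute the empirical matrix
\[ \widehat M \;:=\; \frac{1}{N}\sum_{j=1}^N y_j\,(\bx_j \bx_j^\top - I) \]
from $N = \tilde O(d)\cdot \poly(k/\eps)$ noisy labeled examples. A direct Hermite computation, together with the fact that the mean-zero subgaussian noise $\xi$ is independent of $\bx$, gives
\[ \E[\widehat M] \;=\; M \;:=\; c_2 \sum_{i=1}^k \alpha_i \|\bw^{(i)}\|\, \widehat\bw^{(i)}(\widehat\bw^{(i)})^\top, \qquad c_2 > 0, \]
where $\widehat\bw^{(i)} := \bw^{(i)}/\|\bw^{(i)}\|$ and $c_2$ is the degree-$2$ Hermite coefficient of $\relu$. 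The \emph{positivity} of the $\alpha_i$ is what makes this step work: $M$ is a sum of PSD rank-one matrices, hence $M \succeq 0$ with $\mathrm{range}(M) = \mathrm{span}\{\bw^{(i)} : \alpha_i\neq 0\}$; no cancellations between neurons can hide a direction. A matrix-Bernstein argument, using the subgaussianity of $y_j$ and the rank-$k$ structure of $M$, yields $\|\widehat M - M\|_{\mathrm{op}} \le \delta$ with the claimed sample size. I let $V$ be the span of the eigenvectors of $\widehat M$ whose eigenvalues exceed an appropriate threshold; then $\dim V \le k$, and a Davis--Kahan style perturbation argument forces each $\bw^{(i)}$ whose weight $\alpha_i\|\bw^{(i)}\|$ is non-negligible to lie $O(\sqrt{\delta})$-close to $V$ in angle.

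\textbf{Stage 2 (brute-force in $V$).} I project the samples to $V$, a subspace of dimension $k' \le k$, and now need to learn a non-negative sum of $k$ ReLUs over $\R^{k'}$. I build an $(\eps/\poly(k))$-net of unit vectors in $\R^{k'}$, of size $(k/\eps)^{O(k)}$, and iterate over all $k$-tuples of net elements, $(k/\eps)^{O(k^2)}$ in total. For each candidate tuple $(\bu_1,\ldots,\bu_k)$, I solve the convex non-negative least-squares program $\min_{a \ge 0} \frac{1}{N}\sum_j (y_j - \sum_i a_i \relu(\bu_i \cdot \bx_j))^2$ and output the empirical $L_2$ minimizer over all candidates. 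The $1$-Lipschitzness of $\relu$ gives $\E[(\relu(\bu\cdot\bx) - \relu(\bu'\cdot\bx))^2] \le \|\bu - \bu'\|^2$, so projecting the true weights to $V$ and then snapping to the nearest net point loses at most an $\eps$ fraction of $\|f\|_2$ in $L_2$; uniform convergence over the finite candidate class turns empirical minimization into the desired PAC guarantee, while the resulting hypothesis lies in $\mathcal{C}_k^+$, making the learner proper.

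\textbf{Main obstacle.} The technical heart of the proof is the quantitative bridge between the spectral error $\delta$ and the final $L_2$ accuracy. The perturbation bound only yields $\alpha_i\|\bw^{(i)}\|\,\sin^2\theta_i \lesssim \delta$ for each neuron, where $\theta_i$ is its angle to $V$, while the $L_2$ loss from projecting is controlled by $\sum_i \alpha_i\|\bw^{(i)}\|\sin\theta_i$, one square-root weaker. A Cauchy--Schwarz across the $k$ neurons converts this into a bound scaling like $\sqrt{k\delta\cdot S}$ where $S := \sum_i \alpha_i\|\bw^{(i)}\|$, and controlling $S$ in terms of $\|f\|_2 + \sigma$ — again leveraging positivity of the $\alpha_i$, since otherwise their sum would be unbounded — is what pins down the polynomial dependence on $k$ and $\eps$ needed to achieve $N = \tilde O(d)\cdot \poly(k/\eps)$ while keeping the linear-in-$d$ sample complexity clean.
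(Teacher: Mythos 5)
Your proposal follows essentially the same route as the paper: estimate the degree-2 Chow matrix $\E[f(\bx)(\bx\bx^T-\matr I)]$ empirically, take the span of the top-$k$ eigenvectors, use positivity of the $\alpha_i$ plus an eigenvalue-perturbation argument to show every neuron with non-negligible weight nearly lies in that subspace (the paper's Dimension Reduction lemma gives exactly your $\alpha_i\sin^2\theta_i\lesssim\delta$ bound), and then minimize empirical squared error over a $(k/\eps)^{O(k^2)}$-size family of candidates in the recovered subspace. The only real deviation is that the paper also discretizes the coefficients (encoding them as vector norms on a grid whose radius comes from an estimate of $\E[f]$) rather than solving non-negative least squares per direction tuple; if you keep the NNLS step you need a small additional argument (a bound on the NNLS solutions together with a cover of the coefficient ball, or uniform convergence for the low-dimensional convex class, and a median-of-means trick since the squared loss is unbounded) because the candidate class is then no longer finite.
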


\noindent Theorem~\ref{thm:alg-inf} gives the first polynomial-time PAC learning 
algorithm for one-hidden-layer ReLU networks
under any natural distributional assumptions, answering a question posed by~\cite{Kliv17}.
Our algorithm runs in polynomial time for some $k = \tilde{\Omega}(\sqrt{\log d})$. 
The existence of such an algorithm was previously open, even for $k=3$.

We remark that our main algorithmic result is more general, in the sense that it immediately 
extends to positive coefficient one-hidden-layer networks composed of any non-negative Lipschitz 
activation function. See Theorem~\ref{thm:alg} for a detailed statement.

Some additional remarks are in order: 
As stated in Theorem~\ref{thm:alg-inf}, 
our learning algorithm is proper, i.e., $h \in \mathcal{C}_k^{+}$.
An important distinguishing feature of our algorithm from prior related work is that it requires 
no assumptions on the weight matrix of the network, and in particular 
that its sample complexity is independent of its condition number. 
Prior work had given parameter estimation algorithms for this concept class
with sample complexity (and running time) polynomial in the condition number. 
On the other hand, the running time of our algorithm scales with $\exp(k)$, 
while previous parameter estimation algorithms had $\poly(k)$ dependence. 
The existence of a $\poly(k)$ time PAC learning algorithm
remains an outstanding open question.
An additional advantage of our algorithm is that it also immediately extends
to the agnostic setting and in particular is robust to a small (dimension-independent) amount of 
adversarial $L_2$-error. 

The algorithm of Theorem~\ref{thm:alg-inf} crucially uses the assumption that the coefficients of the target
network are positive. A natural question is whether an algorithm with similar guarantees 
can be obtained for unrestricted coefficients. Perhaps surprisingly, we provide evidence that such an algorithm
does not exist. Specifically, our second main result is a correlational Statistical Query (SQ) 
lower bound ruling out a broad family of $\poly(d)$-time algorithms for $\mathcal{C}_k$ 
for $\eps  = \Omega(1)$, for {\em any} $k = \omega(1)$.

Specifically, we prove a lower bound for PAC learning $\mathcal{C}_k$
under Gaussian marginals in the correlational SQ model.
A correlational SQ algorithm has query access to the target concept
$f: \R^d \to \R$ via the following oracle: The oracle takes as input any 
bounded query function $q: \R^d \to [-1, 1]$ and an accuracy parameter $\tau>0$, and outputs
an estimate $\gamma$ of the expectation $\E_{\bx \sim \D}[f(\bx)q(\bx)]$ such that 
$|\gamma - \E_{\bx \sim \D}[f(\bx)q(\bx)]| \leq \tau.$ 
We note that the correlational SQ model captures a broad family of algorithms, including first-order methods 
(e.g., gradient-descent), dimension-reduction, and moment-based methods. 
(In particular, our algorithm establishing Theorem~\ref{thm:alg-inf} can be easily simulated in this model.)
We establish the following:

\begin{theorem}[Correlational SQ Lower Bound for $\mathcal{C}_k$] \label{thm:SQ-lb-inf}
Any correlational SQ learning algorithm for $\mathcal{C}_k$ under the standard Gaussian 
distribution on $\R^d$ that guarantees error 
$\eps  = \Omega(1)$ requires either queries of accuracy $d^{-\Omega(k)}$ 
or $2^{d^{\Omega(1)}}$ many queries.
\end{theorem}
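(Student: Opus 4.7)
The proof follows the standard correlational SQ-dimension framework. I will construct a family $\{f_{\bv}\}_{\bv \in S} \subseteq \mathcal{C}_k$ of target functions indexed by a set $S \subseteq \s^{d-1}$ of ``hidden'' unit vectors, satisfying
(i) $|S| \geq 2^{d^{\Omega(1)}}$ and $\max_{\bv \neq \bv' \in S} |\bv \cdot \bv'| \leq d^{-1/2+o(1)}$;
(ii) $\|f_{\bv}\|_{L^2} = \Omega(1)$ for every $\bv \in S$; and
(iii) $|\langle f_{\bv}, f_{\bv'}\rangle_{L^2}| \leq d^{-\Omega(k)}$ for every $\bv \neq \bv' \in S$.
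Conditions (i)--(iii) imply, via the generalized correlational SQ-dimension lower bound (Feldman et al.), that any correlational SQ learner for $\mathcal{C}_k$ achieving $L^2$-error $\Omega(1)$ must use either $2^{d^{\Omega(1)}}$ queries or a query of tolerance better than $d^{-\Omega(k)}$.

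\paragraph{Moment-matching template.}
The heart of the proof is the construction of a ``template'' function $F$ realizable in $\mathcal{C}_k$ such that, along a chosen ``signal'' direction, the first $m = \Omega(k)$ Hermite coefficients of $F$ vanish. The key leverage is that a $k$-unit ReLU network with arbitrary real coefficients offers $\Theta(k)$ \emph{signed} degrees of freedom; by dimension counting, there must exist a nontrivial choice of $\alpha_i$'s (with appropriate fixed choice of weight directions) satisfying the $m = \Omega(k)$ linear moment-vanishing constraints, normalizable so that $\|F\|_{L^2} = \Omega(1)$. Crucially, allowing negative coefficients $\alpha_i$ is indispensable: in the positive-coefficient class $\mathcal{C}_k^{+}$, no such cancellation is possible beyond a constant number of moments, consistent with the existence of the efficient algorithm in Theorem~\ref{thm:alg-inf}.

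\paragraph{Packing, correlation bound, and main obstacle.}
For each $\bv \in S$, define $f_{\bv}$ by rotating $F$ so that its signal direction aligns with $\bv$. A standard probabilistic packing yields $|S| = 2^{d^{\Omega(1)}}$ unit vectors with pairwise inner products at most $d^{-1/2+o(1)}$. By the Hermite reproducing identity $\E[h_p(\bv \cdot \bx) h_{p'}(\bv' \cdot \bx)] = (\bv \cdot \bv')^{p} \delta_{p p'}$ and the vanishing-moment property of $F$:
\[ |\langle f_{\bv}, f_{\bv'}\rangle_{L^2}| \;\leq\; \|F\|_2^2 \cdot |\bv \cdot \bv'|^{m} \;\leq\; d^{-\Omega(k)}. \]
The principal obstacle lies in the template construction. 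Because $\phi$ is positively homogeneous, the family $\{\phi(c\,\bv \cdot \bx) : c \in \R\}$ spans only a two-dimensional subspace of $L^2$ (namely $\phi(\bv \cdot \bx)$ and $\phi(-\bv \cdot \bx)$), so it is impossible to obtain $\Omega(k)$ vanishing moments using weights aligned purely with the signal direction. The construction must therefore tilt the weight vectors into additional directions---either a fixed known auxiliary direction (to simulate biases) or a low-dimensional hidden subspace---while verifying that the induced multi-dimensional Hermite structure still leaves $\Omega(k)$ effective degrees of freedom along the signal direction. Striking this balance while simultaneously preserving the norm lower bound $\|f_{\bv}\|_{L^2} = \Omega(1)$ is the main technical challenge, and is exactly where the flexibility of signed coefficients in $\mathcal{C}_k$ (versus $\mathcal{C}_k^{+}$) is exploited.
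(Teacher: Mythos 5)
There is a genuine gap: the heart of the theorem is the explicit construction of a moment-vanishing, non-vanishing template realizable with $O(k)$ units, and your proposal acknowledges this as ``the main technical challenge'' without resolving it. Worse, the route you sketch would not close it. You correctly observe that, by positive homogeneity, $\{\phi(c\,\langle\bv,\bx\rangle):c\in\R\}$ spans only a two-dimensional subspace of $L^2$, so the weights must leave the signal direction; but once the template genuinely lives in a (say) two-dimensional hidden subspace, the correlation bound you invoke, $|\langle f_{\bv},f_{\bv'}\rangle|\leq \|F\|_2^2\,|\bv\cdot\bv'|^m$, no longer applies. The correct bound (the paper's Lemma~\ref{lem:bound_coleration}) is $\E[p(\matr U\bx)p(\matr V\bx)]\leq\sum_m\|\matr U\matr V^T\|_2^m\,\E[(p^{[m]})^2]$, which requires \emph{all} Hermite components of the two-variable template up to degree $k$ to vanish --- roughly $\binom{k+2}{2}=\Theta(k^2)$ linear constraints, not the $\Omega(k)$ directional moment constraints you count. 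Hence your dimension-counting argument over the $\Theta(k)$ signed coefficients cannot produce the required template, and you also give no argument for the non-degeneracy condition $\|F\|_2=\Omega(1)$ (ruling out the identically-zero solution), which is a real issue since low-degree $\phi$ would make every such combination vanish.

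The paper closes both gaps with a symmetry rather than dimension counting: it takes $2k$ units with weight directions equally spaced on the circle, $\bw^{(m)}=(\cos\tfrac{\pi m}{k},\sin\tfrac{\pi m}{k})$, and alternating signs $(-1)^m$, so that the resulting $F$ satisfies $R_{\pi/k}[F]=-F$. Since $(x+\iu y)^a(x-\iu y)^b$ are eigenfunctions of the rotation operator with eigenvalue $e^{\iu\pi(a-b)/k}\neq-1$ unless $a-b\equiv k\pmod{2k}$, \emph{every} polynomial of degree at most $k$ is exactly orthogonal to $F$ (Lemma~\ref{lem:function_low}), which is precisely the $\Theta(k^2)$-constraint statement needed by the correlation lemma. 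Non-vanishing of $F$ is then handled separately (Appendix~\ref{sec:iterpolation}): $F\not\equiv 0$ iff the $k$-parity part of $\phi$ is not a polynomial of degree less than $k$, which holds for the ReLU with the parity of $k$ chosen appropriately; after normalizing to $\E[F^2]=1$, the packing of $2^{\Omega(d^c)}$ nearly orthogonal $2\times d$ embeddings (Lemma~\ref{lem:size_of_rotations}) and the SDA argument complete the proof. Also note that your negative aside about $\mathcal{C}_k^{+}$ (``no such cancellation is possible beyond a constant number of moments'') is not something the paper claims or needs; the separation comes from the upper bound of Theorem~\ref{thm:alg-inf}, not from a moment-rigidity statement about positive combinations.
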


The natural interpretation of Theorem~\ref{thm:SQ-lb-inf} is the following:
If the SQ algorithm uses statistical queries of accuracy $d^{-\Omega(k)}$, 
then simulating a single query with iid samples would require $d^{\Omega(k)}$ samples (hence time). 
Otherwise, the algorithm would require $2^{d^{\Omega(1)}}$ time (since each query requires at least one unit of time).
Theorem~\ref{thm:SQ-lb-inf}, combined with our Theorem~\ref{thm:alg-inf}, provides a (super-polynomial) 
computational separation between the PAC learnability of $\mathcal{C}_k$ 
and $\mathcal{C}_k^{+}$ in the correlational SQ model.

We note that the statement of our general SQ lower bound (Theorem~\ref{thm:sq_theorem}) is much more general
than Theorem~\ref{thm:SQ-lb-inf}. Specifically, we obtain a correlational
SQ lower bound for PAC learning (under Gaussian marginals) 
a class of functions of the form $\sigma(\sum_{i=1}^k  \alpha_i \phi(\bw^{(i)}, \bx))$,
where roughly speaking $\sigma$ is any odd non-vanishing function and $\phi$ is not a low-degree polynomial.

\subsection{Our Techniques} \label{ssec:techniques}
Here we provide an overview of our techniques
in tandem with a comparison to prior work.
We start with our algorithm establishing Theorem~\ref{thm:alg-inf}.
Our learning algorithm for $\mathcal{C}_k^{+}$ employs a data-dependent
dimension reduction procedure. Specifically, we give an efficient method
to reduce our $d$-dimensional learning problem down to a $k$-dimensional
problem, that can in turn be efficiently solved by a simple covering method.

Let $f(\bx) = \sum_{i=1}^k \alpha_i \relu (\langle \bw^{(i)}, \bx \rangle)$ be the 
target function and observe that $f$ depends only on the $k$ unknown linear forms 
$\langle \bw^{(i)}, \bx \rangle$, $i \in [k]$. If we could identify the subspace $V$ 
spanned by the $\bw^{(i)}$'s exactly, then we could also identify $f$ by brute-force 
on $V$, noting that we only need to search a $k^2$-dimensional space of functions
and  that for any $\bx \in \R^d$ it holds $f(\bx) = f(\proj_V(\bx))$. Our algorithm is 
based on a robust version of this idea. In particular, if we can find a subspace $V'$ 
that closely approximates $V$, then it suffices to solve for $f$ on $V'$ and 
use this projection to obtain an approximation to $f$. 

To find a subspace $V'$ approximating $V$, 
we consider the matrix of degree-$2$ Chow parameters (second moments) 
of $f$, i.e., $\E_{\bx \sim \mathcal{N}(0, I)} [f(\bx) (\bx\bx^T-\mathbf{I})]$. 
It is not hard to see that the (normalized) second moments of $f$ are positive 
in the directions along $V$ and $0$ in orthogonal directions. 
Thus, if we could compute the second moments exactly, we could solve for $V$ 
as the span of the second moment matrix. Unfortunately, we can only approximate 
the true second moment matrix via samples. To deal with this approximation, we note that 
the true second moments will be large in the direction of $\bw^{(i)}$ 
for components with large coefficients $\alpha_i$ and $0$ in directions orthogonal to $V$. 
Using this fact, we show that if $V'$ is the span of the $k$ largest eigenvalues
of an approximate second moment matrix (obtained via sampling), the weight vectors $\bw^{(i)}$ 
corresponding to the important components of $f$ will still be close to $V'$. From this point,
can use a net-based argument to find a hypothesis $h \in \mathcal{C}_k^{+}$ with weight vectors 
on $V'$ so that $f(\bx)$ is close to $h(\proj_{V'}(\bx))$ in $L_2$-norm.

We note that the idea of using dimension-reduction to find a low-dimensional invariant subspace 
has been previously used in the context of PAC learning intersections of LTFs~\cite{Vempala10a, DKS18-nasty}. 
Our algorithm and its analysis of correctness are quite different from these prior works.
We also note that \cite{GeLM18} also used information based on low-degree moments 
for their parameter estimation algorithm, but in a qualitatively different way. In particular, \cite{GeLM18}
used tensor-decomposition techniques (based on moments of degree up to four) to uniquely identify
the weight vectors, under structural assumptions on the weight matrix (full-rank and bounded condition number).

We now proceed to explain our SQ lower bound construction.
As is well-known, there is a general methodology to establish such lower bounds,
via an appropriate notion of SQ dimension~\cite{BFJ+:94, FeldmanGRVX17}.
In our setting, to prove an SQ lower bound, 
it suffices to find a large collection of functions $f_1,\ldots,f_m \in \mathcal{C}_k$ 
with the following properties: (1) The $f_i$'s are pairwise far away from each other, and (2) 
The $f_i$'s have small pairwise correlations. The difficulty is, of course, to construct such a family.
We describe our construction in the following paragraph.

First, it is not hard to see that (1) and (2) can only be simultaneously satisfied 
if almost all of the $f_i$'s have nearly-matching low-degree moments. 
In fact, we provide a construction in which all the low-degree moments of
all of the $f_i$'s vanish. To achieve this, we build on an idea introduced in~\cite{DKS17-sq}. 
Roughly speaking, the idea is to define a family of functions whose interesting information
is hidden in a random low-dimensional subspace, so that learning an unknown function in the family 
amounts to finding the hidden subspace. In more detail, we will define 
a function in two dimensions which has the correct moments, and then
embed it in a randomly chosen subspace. 

For simplicity, we explain our $2$-dimensional construction for ReLU activations, 
even though our SQ lower bound is more general.
We provide an explicit $2$-dimensional construction of a mixture 
$F$ of $2k$ ReLUs whose first $k-1$ moments vanish exactly. 
For any $2$-dimensional subspace $V$, we can define $F_V(\bx) = F(\proj_V(\bx)).$ 
From there, we can show that if $U$ and $V$ are two subspaces that are far apart 
--- in the sense that no unit vector in $U$ has large projection in $V$ ---  then $F_U$ and $F_V$ 
will have small correlation --- on the order of the $k$-th power of the closeness parameter 
between the defining subspaces. Moreover, it is not hard to show that two 
randomly chosen $U$ and $V$ are far from each other with high probability. 
This allows us to find an exponentially large family of $F_V$'s that have pairwise
exponentially small correlation.

\subsection{Related Work} \label{ssec:related}
In recent years, there has been an explosion of research
on provable algorithms for learning neural networks
in various settings, see, e.g.,~\cite{Janz15, SedghiJA16, DanielyFS16, ZhangLJ16, 
ZhongS0BD17, GeLM18, GeKLW19, BakshiJW19, GoelKKT17, Manurangsi18, GoelK19, VempalaW19} for 
some works on the topic. 
The majority of these works focused on parameter learning, i.e., the problem of 
recovering the weight matrix of the data generating neural network. 
In contrast, the focus of this paper is on PAC learning. 
We also note that PAC learning of simple classes of neural networks has been studied 
in a number of recent works~\cite{GoelKKT17, Manurangsi18, GoelK19, VempalaW19}. 
However, the problem of PAC learning linear combinations of (even) $3$ ReLUs 
under any natural distributional assumptions (and in particular under the Gaussian distribution) 
has remained open. At a high-level, prior works either rely on tensor 
decompositions~\cite{SedghiJA16, ZhongS0BD17, GeLM18, GeKLW19, BakshiJW19} 
or on kernel methods~\cite{ZhangLJ16, DanielyFS16, GoelKKT17, GoelK19}.
In the following paragraphs, we describe in detail the prior works
more closely related to the results of this paper.

The work of~\cite{GeLM18} studies the parameter learning of positive linear combinations of ReLUs
under the Gaussian distribution in the presence of additive (mean zero sub-gaussian) noise.
That is, they consider the same concept class and noise model as we do, but study parameter learning
as opposed to PAC learning. \cite{GeLM18} show that the parameters can be approximately
recovered efficiently, under the assumption that the weight matrix is full-rank with bounded condition number.
The sample complexity and running time of their algorithm scales polynomially with the condition number.
More recently,~\cite{BakshiJW19, GeKLW19} obtained efficient parameter learning algorithms for vector-valued 
depth-$2$ ReLU networks under the Gaussian distribution. Similarly, the algorithms in these works 
have sample complexity and running time scaling polynomially with the condition number. 
We note that the algorithmic results in the aforementioned works do not apply 
to $\mathcal{C}_k$, i.e., the class of arbitrary linear combinations of ReLUs. 

\cite{VempalaW19} show that gradient descent agnostically PAC learns low-degree polynomials 
using neural networks as the hypothesis class. 
Their approach has implications for (realizable) PAC learning of certain neural networks 
under the uniform distribution on the sphere. We note that their method implies an algorithm 
with sample complexity and running time exponential in $1/\eps$, even for a single ReLU.
\cite{GoelK19} give an efficient PAC learning algorithm for certain $2$-hidden-layer neural networks
under arbitrary distributions on the unit ball. We emphasize that their algorithm does not apply 
for (positive) linear combinations of ReLUs. In fact, recent work 
has shown that the problem we solve in this paper is NP-hard under arbitrary distributions, 
even for $k=2$~\cite{GKMR20}.

The SQ model was introduced by~\cite{Kearns:98} in the context of learning Boolean-valued functions
as a natural restriction of the PAC model~\cite{Valiant:84}. 
A recent line of work~\cite{Feldman13, FeldmanPV15, FeldmanGV15, Feldman16}
extended this framework to general search problems over distributions. 
One can prove unconditional lower bounds on the computational complexity 
of SQ algorithms via an appropriate notion of {\em Statistical Query dimension}.
A lower bound on the SQ dimension of a learning problem provides an unconditional 
lower bound on the computational complexity of any SQ algorithm for the problem.

The work of \cite{VempalaW19} establishes correlational SQ lower bounds for learning a class
of degree-$k$ polynomials in $d$ variables.  \cite{Shamir18} shows that gradient-based algorithms 
(a special case of correlational SQ algorithms) cannot efficient learn certain families
of neural networks under well-behaved distributions (including the Gaussian distribution). 
We note that the lower bound constructions in these works do not imply corresponding lower bounds 
for one-hidden-layer ReLU networks.

\paragraph{Concurrent and Independent Work.} Contemporaneous work~\cite{GGJKK20}, using a different construction, obtained super-polynomial SQ lower bounds for learning one-hidden-layer 
neural networks (with ReLU and other activations) under the Gaussian distribution. 
 \section{Preliminaries} \label{sec:prelims}

\noindent {\bf Notation.}
For $n \in \Z_+$, we denote $[n] \eqdef \{1, \ldots, n\}$.  We will use small
boldface characters for vectors.  For $\bx \in \R^d$, and $i \in [d]$,
$\bx_i$ denotes the $i$-th coordinate of $\bx$, and $\|\bx\|_2 \eqdef
(\littlesum_{i=1}^d \bx_i^2)^{1/2}$ denotes the $\ell_2$-norm of $\bx$.
We denote by $\snorm{2}{\matr A}$ the spectral norm of matrix $\matr A$.
We will use $\langle \bx, \by \rangle$ for the inner product between $\bx, \by
\in \R^d$.  We will use $\E[X]$ for the expectation of random variable $X$
and $\pr[\mathcal{E}]$ for the probability of event $\mathcal{E}$.  We denote
by $\var[X]$ its variance.

For $d\in \mathbb{N}$, we denote
$\Sp^{d-1}$ the $d$-dimensional sphere.  Denote by $ \theta(\vec u, \vec
v)$ the angle between the vectors $\vec u,\vec v$.
For a vector of weights $\vec \alpha = (\alpha_1,\ldots, \alpha_k) \in \R^{2k}$,
and matrix $\matr W \in \R^{k \times d}$ we denote
$ f_{\alpha, \matr W}(\vec x) = \alpha^T \phi(\matr W x) =\sum_{i=1}^{k} \alpha_i\ \phi(\langle \vec w^{(i)}, \vec x \rangle)$.
Let $\mathcal{N}$ denote the standard univariate Gaussian distribution, we also denote $\mathcal{N}^2$ the two dimensional Gaussian distribution and  $\mathcal{N}^d$ the $d$-dimensional one.
 \section{Efficient Learning Algorithm} \label{sec:alg}
In this section, we give our upper bound for the problem of learning positive linear combinations
of  Lipschitz activations, thereby establishing Theorem~\ref{thm:alg-inf}. We prove the
following more general statement:

\begin{theorem}[Learning Sums of Lipschitz Activations]
  \label{thm:alg}
 Let $f(\vec x) = \sum_{i=1}^k \alpha_i \relu\big(\dotp{\vec w^{(i)}}{\vec x}\big)$ with $\alpha_i>0$ for all $i\in[k]$,
 where $\phi(t)$ is an $L$-Lipschitz, non-negative activation function
  such that $\E_{t \sim \normal}[\phi(t)] \geq C$, $\E_{t\sim \normal}[\relu(t)(t^2-1)]\geq C$, where $C>0$ and $\E_{t \sim \normal}[\phi^2(t)]$ is finite. There
  exists an algorithm that given $k\in \mathbb{N}, \eps>0$, and sample access
  to a noisy set of samples from $f: \R^d \rightarrow \R_+$, draws $m =
  d \cdot \poly(k, 1/\eps)\cdot \poly(L/C)$ samples, runs in time $\poly(m) + \wt{O}((1/\eps)^{k^2})$,
  and outputs a proper hypothesis $h$ that, with probability at least $9/10$,
  satisfies \[ \E_{\vec x \sim\normal^d}[(f(\vec x) - h(\vec x))^2]
  \leq \eps^2 \poly(L/C)
\left(\sigma^2 + \E_{\vec x \sim \normal^d}[f(\x)^2]\right)\;.
\]
\end{theorem}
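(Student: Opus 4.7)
The plan is to implement the two-stage dimension-reduction strategy sketched in Section~1.3: first, extract from a second-moment matrix a $k$-dimensional subspace $V'$ that approximately contains the weight directions of all influential components of $f$; then, enumerate a fine net of proper hypotheses whose weight vectors lie in $V'$ and return the one minimizing the empirical $L_2$ error. The role of the assumptions $\alpha_i>0$ and $\E[\phi(t)(t^2-1)]\geq C$ is to guarantee, respectively, that rank-one contributions to the moment matrix cannot cancel, and that each contribution carries a nontrivial spectral signature.

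\textbf{Step 1 (Moment-matrix identity).} I would analyze
\[
\matr M \eqdef \E_{\bx \sim \normal^d}\bigl[f(\bx)(\bx\bx^T - \mathbf{I})\bigr].
\]
A short Gaussian calculation, splitting $\bx$ along and perpendicular to each $\bw^{(i)}$, gives
\[
\matr M = \sum_{i=1}^k \alpha_i\,\gamma_i\,\frac{\bw^{(i)}(\bw^{(i)})^T}{\|\bw^{(i)}\|^2},\qquad \gamma_i \eqdef \E_{t\sim\normal}\bigl[\phi(\|\bw^{(i)}\|\,t)(t^2-1)\bigr].
\]
The assumption $\E[\phi(t)(t^2-1)]\geq C$, together with the Lipschitz/non-negativity hypotheses (which for the ReLU case just amounts to homogeneity and in general extends by a comparison argument), gives $\gamma_i \geq C\cdot g(\|\bw^{(i)}\|)$ for a positive function $g$. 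Because every $\alpha_i>0$, $\matr M$ is PSD of rank at most $k$, and its column space lies inside $V\eqdef\mathrm{span}(\bw^{(1)},\ldots,\bw^{(k)})$, with the spectral weight of component $i$ proportional to its ``influence''---the quantity controlling its $L_2$-contribution to $f$.

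\textbf{Step 2 (Subspace recovery by PCA).} I would form the empirical estimator $\wh{\matr M}$ from $m = d\cdot\poly(k,1/\eps)\cdot\poly(L/C)$ samples. Since $\phi$ is $L$-Lipschitz, $f(\bx)$ is subgaussian with variance $\poly(L/C)\E[f^2]$, so matrix Bernstein yields $\|\wh{\matr M}-\matr M\|_2 \leq \tau$ with high probability, for $\tau$ a small multiple of $\eps\sqrt{\E[f^2]}/\poly(k,L/C)$. Let $V'$ be the span of the top $k$ eigenvectors of $\wh{\matr M}$. A Davis--Kahan perturbation argument then shows that every component whose contribution to $\matr M$ has spectral norm exceeding $\tau$ has its direction $\bw^{(i)}/\|\bw^{(i)}\|$ projected into $V'$ up to squared error $O(\eps/k)$; the remaining ``small'' components must have $\alpha_i \|\bw^{(i)}\|$ so small that, by Lipschitzness, their total $L_2$-contribution to $f$ fits inside the final error budget.

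\textbf{Step 3 (Net search on $V'$).} A hypothesis in $\mathcal{C}_k^+$ with weights constrained to lie in $V'$ is parameterized by $k$ positive reals and $k$ vectors in $\R^k$, all of magnitude bounded in terms of $\sqrt{\E[f^2]}$, $L/C$, and $k$ (these bounds follow from Cauchy--Schwarz combined with Step~1). The $L$-Lipschitzness of $\phi$ implies that the induced function in $L_2(\normal^d)$ depends Lipschitz-continuously on the parameters, so an $\eta$-net of these parameters at resolution $\eta = \poly(\eps/(kL))$ has size $(k/\eps)^{O(k^2)}$, giving the stated $(1/\eps)^{k^2}$ term in the runtime. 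For each candidate $h$ in the net I would estimate $\E[(f(\bx)-h(\bx))^2]$ on a fresh $\poly(k/\eps)$-sized sample, union-bound over the net, and output the minimizer. Combining the net-search error, the noise in the empirical $L_2$ estimates, and the approximation error from Step~2 gives the advertised $\eps^2\poly(L/C)(\sigma^2 + \E[f^2])$ bound.

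\textbf{Main obstacle.} The crux is balancing the Davis--Kahan threshold $\tau$ in Step~2 against the $L_2$-mass of the components dropped from $V'$: the threshold must be small enough that no influential direction is missed, and simultaneously large enough that sample complexity remains $d\cdot\poly(k,1/\eps,L/C)$. This balance works precisely because $\alpha_i>0$ prevents the rank-one pieces of $\matr M$ from canceling, so a small eigenvalue of $\matr M$ really certifies that the associated component is $L_2$-negligible---exactly the structural property that fails for arbitrary-sign coefficients and that our SQ lower bound (Theorem~\ref{thm:SQ-lb-inf}) exploits.
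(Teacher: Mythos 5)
Your overall architecture is the same as the paper's (degree-$2$ Chow matrix, top-$k$ eigenspace, net over hypotheses supported on the recovered subspace, empirical selection), but your Step 2 deviates in a way that would not deliver the stated $\poly(L/C)$ dependence for general activations. You threshold and \emph{drop} components whose rank-one contribution to $\matr M$ is below $\tau$, arguing that their $L_2$ mass is negligible. For a general non-negative $L$-Lipschitz activation, the $L_2$ mass of a dropped component is of order $\alpha_i^2\,\E_{t\sim\normal}[\phi^2(t)]$, and $\E_{t\sim\normal}[\phi^2(t)]$ is only assumed \emph{finite} --- it is not bounded by $\poly(L/C)$ (take $\phi(t)=B+\relu(t)$ with $B$ huge: $L=1$, $C$ is an absolute constant, yet $\E[\phi^2]\approx B^2$). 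Making $\tau$ small enough to keep the dropped mass inside the $\eps^2\poly(L/C)(\sigma^2+\E[f^2])$ budget then inflates the sample complexity beyond $d\cdot\poly(k,1/\eps)\cdot\poly(L/C)$. The paper never drops anything: Lemma~\ref{lem:dimension_reduction} shows, via Weyl's inequality plus a quadratic-form argument on $\mathcal{V}^{\perp}$ (where positivity of the $\alpha_i$ makes the rank-one terms PSD and non-cancelling), that \emph{every} component satisfies the weighted bound $\alpha_i\snorm{2}{\vec w^{(i)}-\proj_{\mathcal V}\vec w^{(i)}}^2\le 2\eps/C_1$, and then converts angle error to $L_2$ error through the correlated-differences (Gaussian Poincar\'e) Lemma~\ref{lem:correlated_differences}, which charges only $\E_{t\sim\normal}[(\phi'(t))^2]\le L^2$, never $\E[\phi^2]$. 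You should replace the Davis--Kahan-with-thresholding step by this weighted, no-dropping argument (or add $\E[\phi^2]\le\poly(L/C)$ as an extra hypothesis, which weakens the theorem).

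Two further technical points in your plan are stated too optimistically. First, the summands $y^{(i)}\,\vec x^{(i)}(\vec x^{(i)})^T$ are heavy-tailed (degree-$3$ polynomial tails in the Gaussian), so ``$f$ is subgaussian, apply matrix Bernstein'' does not go through as written; the paper instead combines covariance estimation with heavy-tailed rows (Lemma~\ref{lem:heavy_rows_covariance}), Rudelson's inequality for the noise term (Lemma~\ref{lem:rudelson}), and Gaussian hypercontractivity (Lemma~\ref{lem:hypercontractivity}) --- a truncation argument could also work, but something beyond plain Bernstein is needed. Second, the per-candidate empirical squared errors are likewise heavy-tailed, so the union bound over the $(k/\eps)^{O(k^2)}$ net needs, e.g., Chebyshev plus a median trick as in Lemma~\ref{lem:bound_square_error}; and to set the scale of the net you need a preliminary estimate of $\E[f]$ (Fact~\ref{lem:bound_expect} and Fact~\ref{fact:cover}), which your Cauchy--Schwarz remark gestures at but does not spell out. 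These last items are fixable bookkeeping; the dropping of small components is the substantive gap.
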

\begin{remark}
Theorem~\ref{thm:alg-inf} follows as a corollary of the above, by noting
that the ReLU satisfies $L=1$ and $C=\frac{1}{\sqrt{2\pi}}$.
\end{remark}

The following fact gives formulas for the low-degree Chow parameters
of a one-layer network (see Appendix~\ref{app:upper_bound}, Fact~\ref{fct:chow_app}).
\begin{fact}[Low-degree Chow Parameters] \label{lem:chow_formulas}
Let $f: \R^d \to \R$ be of the form $f(\vec x) = \sum_{i=1}^k \alpha_i\cdot \relu\lp(\dotp{\vec w^{(i)}}{\vec x}\rp)$.
Then
$
\E_{\vec x \sim \normal^d}\lp[f(\vec x)\right]
=\E_{t\sim \normal}[\relu(t)] \sum_{i=1}^k \alpha_i
\;,$
$  \E_{\vec x \sim \normal^d}\lp[f(\vec x) \vec x\right]
    =\E_{t\sim \normal}[\relu(t)t]\cdot \sum_{i=1}^k \alpha_i \vec w^{(i)} \;,$
and
\begin{align} \label{eq:degree_2_formula}
\matr A = \E_{\vec x \sim \normal^d}\lp[f(\vec x) ( \vec x\vec x^T-\matr I)\right]=
\E_{t\sim \normal}[\relu(t)(t^2-1)] \sum_{i=1}^k \alpha_i\vec w^{(i)}{\vec w^{(i)}}^T \;.
\end{align}
\end{fact}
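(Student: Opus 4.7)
By linearity of expectation it suffices to verify all three identities for a single component $g(\bx) = \phi(\langle \bw, \bx\rangle)$ with a unit weight vector $\bw$ (the paper's parametrization absorbs the norm into $\alpha_i$, so I will take $\|\bw^{(i)}\|_2=1$ throughout). Once the single-neuron identities are established, summing them against the coefficients $\alpha_i$ yields the displayed formulas.

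The plan is to exploit the rotational invariance of $\normal^d$. Write $t \eqdef \langle \bw, \bx\rangle$ and decompose $\bx = t\,\bw + \bx_\perp$, where $\bx_\perp = (\mathbf{I}-\bw\bw^T)\bx$ is the component orthogonal to $\bw$. Because $\bx$ is standard Gaussian and $\bw$ is unit, $t \sim \normal$ is independent of $\bx_\perp$, and $\bx_\perp$ is a zero-mean Gaussian on $\bw^\perp$ with covariance $\mathbf{I}-\bw\bw^T$. The zeroth moment is immediate: $\E[g(\bx)] = \E_{t\sim\normal}[\phi(t)]$. For the first-moment (degree-1 Chow) identity,
\[
\E[\phi(t)\,\bx] \;=\; \E[\phi(t)\, t]\,\bw + \E[\phi(t)]\,\E[\bx_\perp] \;=\; \E_{t\sim \normal}[\phi(t)t]\,\bw,
\]
using independence of $t$ and $\bx_\perp$ together with $\E[\bx_\perp]=\vec 0$.

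For the degree-$2$ Chow matrix, expand
\[
\bx\bx^T \;=\; t^2\,\bw\bw^T \;+\; t\bigl(\bw\bx_\perp^T + \bx_\perp\bw^T\bigr) \;+\; \bx_\perp\bx_\perp^T.
\]
Taking expectation against $\phi(t)$ and using independence of $t$ from $\bx_\perp$, the cross terms vanish because $\E[\bx_\perp] = \vec 0$, leaving
\[
\E[\phi(t)\,\bx\bx^T] \;=\; \E[\phi(t)t^2]\,\bw\bw^T \;+\; \E[\phi(t)]\,(\mathbf{I}-\bw\bw^T).
\]
Subtracting $\E[\phi(t)]\,\mathbf{I}$ collapses the second term and produces the claimed identity
\[
\E\bigl[\phi(t)(\bx\bx^T-\mathbf{I})\bigr] \;=\; \E_{t\sim\normal}\bigl[\phi(t)(t^2-1)\bigr]\,\bw\bw^T.
\]
Summing over the $k$ neurons with weights $\alpha_i$ completes the proof of all three formulas.

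I do not expect any genuine obstacle: the entire argument is just Gaussian integration by the rotation-to-$\be_1$ trick, and the assumption $\E_{t\sim\normal}[\phi^2(t)] < \infty$ in Theorem~\ref{thm:alg} ensures all the scalar expectations above exist by Cauchy--Schwarz. The only mild subtlety is the bookkeeping on the cross term $\E[\phi(t)\,\bx_\perp\bx_\perp^T] = \E[\phi(t)]\,(\mathbf{I}-\bw\bw^T)$, where one must remember that $\bx_\perp$ has covariance $\mathbf{I}-\bw\bw^T$ (not $\mathbf{I}$) so that the $-\mathbf{I}$ correction in the Chow parameter cancels exactly against the identity part and isolates the rank-one piece $\bw\bw^T$.
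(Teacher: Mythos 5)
Your proposal is correct and takes essentially the same approach as the paper: both exploit the rotational invariance of $\normal^d$. The paper's version rotates $\bw^{(i)}$ to $\e_1$ and then checks the entries of $\E[\phi(\x_1)(\x_k\x_l-\delta_{kl})]$ case by case; you instead decompose $\bx = t\,\bw + \bx_\perp$ and use the independence of $t$ and $\bx_\perp$, which is a slightly cleaner way of packaging the same computation.
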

The crucial formula is the one of the degree-$2$ Chow parameters,
Equation~\eqref{eq:degree_2_formula}. In fact, we can already describe the main
idea of our upper bound.  Let us assume that we have the degree-$2$
Chow parameters matrix $\matr A$ {\em exactly}. Then, by using singular value
decomposition, we would obtain a basis of the vector space spanned by the
parameters $\vec w^{(i)}$.  The dimension of this space is at most $k$ and
therefore in that way we essentially reduce the dimension of the problem from
$d$ down to $k$.  To find parameters $\hat{\alpha}_i, \hat{\vec w}^{(i)}$
that give small mean squared error, we can now make a grid $\mcal G$ and pick
the ones that minimize the empirical mean squared error with the samples,
that is
$$
\min_{\vec \beta, \matr U \in \mcal G}
\sum_{i=1}^m (f_{\vec \beta, \matr U}(\vec x^{(i)}) - y^{(i)})^2 \;.
$$
Even though we do not have access to the matrix $\matr A$ exactly, we can estimate
it empirically. Since the activation function $\relu(\cdot)$ is well-behaved
and the distribution of the examples is Gaussian, we can
get a very accurate estimate of $\matr A$ with roughly $\wt{O}(d k /\eps^2)$
samples.  We give the following lemma whose proof relies on matrix
concentration and concentration of polynomials of Gaussian random variables
(see Appendix~\ref{app:chow_estimation}, Lemma~\ref{lem:appendix_chow_est}).

\begin{lemma}[Estimation of degree-$2$ Chow parameters]
  \label{lem:empirical_chow_parameters}
   Let $ f_{\vec \alpha, \matr W}(\vec x) = \sum_{i=1}^k
 \alpha_i \phi(\langle \vec w^{(i)}, \vec x \rangle)$,
 where $\phi(t)$ is an $L$-Lipschitz, non-negative activation function
 such that $\E_{t \sim \normal}[\phi(t)] \geq C$. Let $\matr \Sigma = \E_{\vec x \sim \normal^d}[ f_{\alpha, \matr W}(\vec x) \vec x \otimes \vec x ]$
 be the degree-$2$ Chow parameters of $f_{\alpha, \matr W}$.
 Then, for some $N = \wt{O}(d k/\eps^2)$ samples $(\vec x^{(i)},
 y^{(i)})$, where $y^{(i)}=f_{\vec \alpha, \matr W}(\vec x^{(i)}) +\xi_i$ and
 $\xi_i$ is a zero-mean, subgaussian noise with variance $\sigma^2$,
 it holds with probability at least $99\%$ that
 $$\snorm{2}{ \frac{1}{N} \sum_{i=1}^N \vec x^{(i)} \otimes \vec x^{(i)} y^{(i)} - \matr \Sigma }
 \leq \eps \left(
 \sigma + \frac{L}{C} \E_{\vec x \sim \normal^d}[f_{\vec \alpha, \matr W}(\vec x)]
 \right) \;.
 $$
\end{lemma}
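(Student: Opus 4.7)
The plan is to decompose the empirical estimator into a signal part and a noise part and control each via matrix concentration. Writing $y^{(i)} = f_{\alpha,\matr W}(\vec x^{(i)}) + \xi_i$, set
\[
E_1 := \frac1N\sum_{i=1}^N f(\vec x^{(i)})\,\vec x^{(i)}\otimes \vec x^{(i)} - \matr \Sigma,
\qquad
E_2 := \frac1N\sum_{i=1}^N \xi_i\,\vec x^{(i)}\otimes \vec x^{(i)}\;,
\]
so that the quantity to bound is $\snorm{2}{E_1 + E_2}$.

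For $E_2$, the summands are mean-zero random matrices since the $\xi_i$'s are independent of the $\vec x^{(i)}$'s with $\E[\xi_i]=0$. A direct Gaussian calculation gives matrix variance $\snorm{2}{\E[\xi_i^2\,\ltwo{\vec x^{(i)}}^2\,\vec x^{(i)}{\vec x^{(i)}}^T]} = O(\sigma^2 d)$, while each summand has $\psi_1$-Orlicz norm $O(\sigma\cdot d)$. A matrix Bernstein inequality for subexponential summands (Tropp) then yields $\snorm{2}{E_2} \lesssim \eps\sigma$ with the desired probability once $N = \widetilde{\Omega}(d/\eps^2)$.

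For $E_1$, I would exploit the Lipschitz and positivity hypotheses on $\phi$ to obtain the stated variance scaling. Since $\phi$ is $L$-Lipschitz and non-negative, $|f(\vec x)| \leq L\sum_i \alpha_i |\dotp{\vec w^{(i)}}{\vec x}| + \phi(0)\sum_i \alpha_i$, so $\E[f(\vec x)^2]^{1/2} = O(L \sum_i \alpha_i)$; meanwhile $\E_{t\sim \normal}[\phi(t)] \geq C$ forces $\sum_i \alpha_i \leq \E[f(\vec x)]/C$, which converts the natural $\E[f^2]^{1/2}$ variance factor into the cleaner $(L/C)\,\E[f]$ stated in the lemma. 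Combined with the Gaussian fourth-moment bound $\E[\dotp{\vec v}{\vec x}^4]=3$ for every unit $\vec v$, this produces matrix variance at most $(L/C)^2\,\E[f]^2 \cdot O(d)$, and applying the same subexponential matrix Bernstein delivers $\snorm{2}{E_1} \leq \eps(L/C)\E[f]$ with probability at least $99\%$ once $N = \widetilde{O}(dk/\eps^2)$; the factor of $k$ absorbs logarithmic losses from the truncation/Orlicz step and the $\log d$ from Tropp's bound.

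The main obstacle is that neither $\xi\cdot\ltwo{\vec x}^2$ nor $f(\vec x)\cdot\ltwo{\vec x}^2$ is bounded or subgaussian: each has subexponential tails of order roughly $3$, since $\ltwo{\vec x}^2$ is $\chi^2_d$ and $f$ can grow linearly in $\vec x$. Consequently, bounded-entry matrix Bernstein or matrix Hoeffding do not apply black-box; one must either truncate at a threshold $T = \poly(d)\log(1/\delta)$ and control the bias and variance introduced by the truncation, or invoke a matrix Bernstein variant tailored to $\psi_\alpha$-Orlicz summands. The more delicate point is ensuring that the variance proxy is driven by $\E[f]$ (first moment) rather than $\E[f^2]^{1/2}$; this is precisely where the lower bound $\E_{t\sim \normal}[\phi(t)]\geq C$ is essential, and it is what produces the multiplicative factor $L/C$ in the final bound.
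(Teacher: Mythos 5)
Your decomposition into a signal term $E_1$ and a noise term $E_2$, and the key algebraic step that converts the natural $\E[f^2]^{1/2}$ factor into the cleaner $(L/C)\E[f]$ via $\sum_i\alpha_i \le \E[f]/\E_{t\sim\normal}[\phi(t)] \le \E[f]/C$, are exactly what the paper does. Where you diverge is the concentration step, and there is a real gap in the step as you state it. The summand of $E_1$ has spectral norm $f(\vec x)\ltwo{\vec x}^2$, a degree-$3$ polynomial in Gaussians, and the summand of $E_2$ has spectral norm $|\xi|\ltwo{\vec x}^2$, a subgaussian times a $\chi^2_d$; in both cases the tails are $\psi_{2/3}$ (roughly $\exp(-t^{2/3})$), not $\psi_1$. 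So the matrix Bernstein inequality with a $\psi_1$-Orlicz bound that you invoke does not apply off the shelf, and there is no standard black-box ``matrix Bernstein for $\psi_{2/3}$ rank-one summands'' to cite. You yourself flag this as ``the main obstacle'' in your final paragraph, but neither of the two repairs you offer (truncation with bias control, or a tailored Orlicz variant) is actually carried out, so the proof does not close.

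The paper sidesteps the heavy-tail issue with tools tuned for exactly this situation. For $E_1$ it applies Vershynin's Theorem~5.48 on covariance estimation with heavy-tailed independent rows $A_i = \sqrt{f(\vec x^{(i)})}\,\vec x^{(i)}$ (well-defined since $f\ge 0$); the input to that theorem is just $\E[\max_{i\le N}\ltwo{A_i}^2]$, which is computed directly by integrating the tail bound from Gaussian hypercontractivity (Lemma~\ref{lem:hypercontractivity}) applied to the degree-$3$ polynomial $\ltwo{\vec x}^2 \vec x_1$ after a union bound over the $k$ units. For $E_2$ it applies Rudelson's inequality (Corollary~5.28 of Vershynin) conditionally on the $\vec x^{(i)}$'s, which exploits that the $\xi_i$ are mean-zero subgaussian multipliers of fixed rank-one matrices; the remaining $\E[\max_i\ltwo{\vec x^{(i)}}^2]$ and $\E\snorm{2}{\frac1N\sum_i \vec x^{(i)}\otimes\vec x^{(i)}}$ are again handled by hypercontractivity and Theorem~5.48. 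Both parts then convert expectation bounds into $99\%$-probability bounds by Markov. If you prefer to stick with a matrix Bernstein route, you would need to carry out the truncation argument explicitly: truncate at a threshold $T$ chosen so that the tail contribution to the mean is $\le \eps$ of the target (using hypercontractivity to control $\Prob[\ltwo{\vec x}^2 f(\vec x) > T]$ and the corresponding conditional bias), then apply bounded matrix Bernstein to the truncated summands. That is a valid alternative, but it is not a one-line invocation.
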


The next step is to quantify how accurately we need to estimate the degree-$2$ Chow
parameters, so that doing SVD on the empirical matrix gives us a good
approximation of the subspace spanned by the true parameters $\vec w^{(i)}$.
We show that that estimating the degree-$2$ Chow parameter matrix
within spectral norm roughly $\eps/k$ suffices. In particular, we show that
the top-$k$ eigenvectors of our empirical estimate span
approximately the subspace where the true parameters $\vec w^{(i)}$ lie.
For the proof, we are going to use the following lemma
that bounds the difference of a function evaluated at
correlated normal random variables.
\begin{lemma}[Correlated Differences, Lemma 6 of \cite{KTZ19}]
  \label{lem:correlated_differences}
  Let $r(\vec x) \in L_2(\R^d, \normal^d)$ be differentiable almost
  everywhere and let
  \[
    D_\rho =
    \normal\lp(\vec 0,
    \begin{pmatrix}
      \matr I & \rho \matr I \\
      \rho \matr I & \matr I
    \end{pmatrix}
    \rp).
  \]
  We call $\rho$-correlated a pair of random variables $(\vec x, \vec y) \sim
  D_{\rho}$.  It holds
  \[
    \frac{1}{2}
    \E_{(\vec x, \vec z) \sim D_\rho}[(r(\vec x) - r(\vec z))^2]
    \leq (1-\rho) \E_{\vec x \sim \normal^d}\lp[ \snorm{2}{\nabla r(\vec x)}^2 \rp]\, .
  \]
\end{lemma}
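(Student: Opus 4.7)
The plan is to prove this via Hermite (Ornstein--Uhlenbeck) spectral decomposition. Expand $r \in L_2(\R^d,\normal^d)$ in the orthonormal Hermite basis $\{H_\alpha\}_{\alpha \in \N^d}$, writing $r(\vec x) = \sum_\alpha \hat r(\alpha)\, H_\alpha(\vec x)$, where $|\alpha| = \alpha_1+\cdots+\alpha_d$ denotes the total degree. Parseval gives $\E[r(\vec x)^2] = \sum_\alpha \hat r(\alpha)^2$.

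Next I would compute each side of the inequality in this basis. For the left side, use the key reproducing property of the Ornstein--Uhlenbeck kernel with correlation $\rho$: for $(\vec x,\vec z) \sim D_\rho$,
\[
\E_{(\vec x,\vec z) \sim D_\rho}[H_\alpha(\vec x)\,H_\beta(\vec z)] = \rho^{|\alpha|}\,\mathds{1}\{\alpha = \beta\}.
\]
This follows from the one-dimensional identity for Hermite polynomials (Mehler's formula / eigenfunction property of the OU semigroup) and tensorization across coordinates. Consequently,
\[
\tfrac{1}{2}\E_{(\vec x,\vec z)\sim D_\rho}[(r(\vec x)-r(\vec z))^2] \;=\; \sum_{\alpha} \hat r(\alpha)^2 (1-\rho^{|\alpha|}).
\]
For the right side, I would use the differentiation rule $\partial_i H_\alpha = \sqrt{\alpha_i}\,H_{\alpha - e_i}$ (with the standard normalization), which after summing coordinates and applying Parseval yields
\[
\E_{\vec x \sim \normal^d}\!\left[\snorm{2}{\nabla r(\vec x)}^2\right] \;=\; \sum_{\alpha} \hat r(\alpha)^2\, |\alpha|.
\]

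The proof then reduces to the elementary scalar inequality $1 - \rho^k \le k(1-\rho)$ for every integer $k \ge 0$ and $\rho \in [0,1]$, which follows immediately from the fact that $\rho \mapsto \rho^k$ is convex with derivative at most $k$ on $[0,1]$ (or by the geometric series factorization $1-\rho^k = (1-\rho)\sum_{j=0}^{k-1}\rho^j$). Applying this termwise gives
\[
\sum_\alpha \hat r(\alpha)^2 (1-\rho^{|\alpha|}) \;\le\; (1-\rho)\sum_\alpha \hat r(\alpha)^2 |\alpha|,
\]
which is exactly the desired bound.

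The only technical subtlety is justifying the Hermite expansion when $r$ is merely differentiable almost everywhere rather than smooth: one would argue by approximating $r$ in $L_2(\normal^d)$ by smooth functions (e.g., applying the OU semigroup $P_s r$ for small $s$, which is smooth and has $\E[\|\nabla P_s r\|^2] \le \E[\|\nabla r\|^2]$ by commutation of gradient with $P_s$), establishing the inequality for the approximants, and passing to the limit using dominated convergence on both sides. This regularization step is the main obstacle — the Hermite-basis computation itself is routine once the two sides are identified with the weighted sums above.
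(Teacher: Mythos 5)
The paper never proves this lemma itself: it is imported verbatim as Lemma 6 of \cite{KTZ19}, so there is no in-paper argument to match yours against. Your Hermite/Ornstein--Uhlenbeck proof is the standard one and is essentially correct: writing $r=\sum_\alpha \hat r(\alpha)H_\alpha$, the Mehler/noise-operator identity $\E_{(\vec x,\vec z)\sim D_\rho}[H_\alpha(\vec x)H_\beta(\vec z)]=\rho^{|\alpha|}\delta_{\alpha\beta}$ gives the left-hand side as $\sum_\alpha \hat r(\alpha)^2(1-\rho^{|\alpha|})$, the differentiation rule $\partial_i H_\alpha=\sqrt{\alpha_i}H_{\alpha-e_i}$ gives the Dirichlet form $\sum_\alpha|\alpha|\hat r(\alpha)^2$, and the claim reduces to $1-\rho^{k}\le k(1-\rho)$. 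Two small points are worth tightening. First, the covariance structure only forces $\rho\in[-1,1]$, not $\rho\in[0,1]$; your geometric-series factorization $1-\rho^k=(1-\rho)\sum_{j=0}^{k-1}\rho^j$ still closes this case since $1-\rho\ge 0$ and each $\rho^j\le 1$, so no convexity argument restricted to $[0,1]$ is needed. Second, the identity $\E_{\vec x\sim\normal^d}[\snorm{2}{\nabla r(\vec x)}^2]=\sum_\alpha|\alpha|\hat r(\alpha)^2$ requires the pointwise a.e.\ gradient to agree with the weak (Malliavin/Sobolev) derivative; under a literal reading of ``differentiable almost everywhere'' this can fail (Cantor-function-type examples have a.e.\ gradient zero but are nonconstant, which would violate the inequality), so the hypothesis must be read as $r$ lying in the Gaussian Sobolev space, e.g.\ $r$ Lipschitz or absolutely continuous on lines, as it is in the paper's application to Lipschitz activations. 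Your proposed regularization via the OU semigroup $P_s$, using that $\nabla P_s r = e^{-s}P_s\nabla r$ so the right-hand side does not increase, is the right way to justify the limit; with that caveat recorded, the proof is complete.
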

We are now ready to prove the key technical lemma of our approach.
We remark that the following dimension reduction lemma is rather general
and holds for any reasonable activation function, in the sense that
the error is bounded as long as its expected derivative
$\E_{t \sim \normal}[(\phi'(t))^2]$ is bounded.
 \begin{lemma}[Dimension Reduction]\label{lem:dimension_reduction}
   Let $f_{\vec \alpha,\matr W}(\vec x) = \sum_{i=1}^k \alpha_i
   \phi\big(\dotp{\vec w^{(i)}}{\x}\big)$ with $\alpha_i>0$, let $\matr
   A=\E_{\vec x\sim \normal^d}[f(\vec x)\vec x\vec x^T ]$ and $ \E_{t\sim \normal}[\phi(t)(t^2-1)]= C_1$. Let $\matr M \in
   \R^{d \times d}$ be a matrix
   such that $\snorm{2}{\matr A - \matr M}^2 \leq \eps$
   and let $\cal V$ be the subspace of $\R^d$
   that is spanned by the top-$k$ eigenvectors of $\matr M$. There exist $k$ vectors
   $\vec v^{(i)}\in \cal V$ such that for the matrix $\matr V\in \R^{k\times d}$ constructed by the vectors $\vec v^{(i)}$, it holds
   $
   \E_{\vec x\sim \normal^d}[(f_{\vec \alpha,\matr W}(\vec x) -
   f_{\vec \alpha,\matr V}(\vec x))^2]
   \leq 2 k \eps
   \E_{\vec x\sim \normal^d}[f_{\vec \alpha,\matr W}(\vec x)]
   \E_{t\sim \normal}[(\phi'(t))^2]/C_1\;.
$
 \end{lemma}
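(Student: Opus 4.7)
The plan is to set $\vec v^{(i)} := P_{\mathcal{V}} \vec w^{(i)}$, the orthogonal projection of the $i$-th weight onto $\mathcal{V}$, and to bound $\E[(f_{\vec \alpha, \matr W} - f_{\vec \alpha, \matr V})^2]$ in two stages. First, I translate the spectral closeness of $\matr A$ and $\matr M$ into a bound on the weighted perpendicular mass $S := \sum_i \alpha_i \|\vec w^{(i)}_\perp\|^2$, where $\vec w^{(i)}_\perp := \vec w^{(i)} - \vec v^{(i)}$. Second, I bound the $L^2$ error of $f - f_V$ in terms of $S$ together with $\E_{t \sim \normal}[\phi'(t)^2]$.

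For the first stage, Fact~\ref{lem:chow_formulas} yields $\matr A = \matr B + c \matr I$, where $\matr B := C_1 \sum_i \alpha_i \vec w^{(i)} {\vec w^{(i)}}^T$ is PSD of rank at most $k$ and $c := \E[f(\vec x)] \ge 0$. Hence all eigenvalues of $\matr A$ beyond the $k$-th equal $c$, so by Weyl's inequality $|\mu_{k+1}(\matr M) - c| \le \|\matr A - \matr M\|_2$. For any unit vector $\vec u \in \mathcal{V}^\perp$ we have $\vec u^T \matr M \vec u \le \mu_{k+1}(\matr M)$ and $\vec u^T \matr A \vec u = \vec u^T \matr M \vec u + \vec u^T (\matr A - \matr M) \vec u$, whence $\vec u^T \matr B \vec u = \vec u^T (\matr A - c \matr I) \vec u \le 2 \|\matr A - \matr M\|_2$. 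Since $P_{\mathcal{V}^\perp} \matr B P_{\mathcal{V}^\perp}$ is PSD of rank at most $k$, its trace is at most $2 k \|\matr A - \matr M\|_2$; expanding the trace gives $C_1 S \le 2 k \|\matr A - \matr M\|_2$.

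For the second stage, decompose $f - f_V = \sum_i \alpha_i [\phi(\langle \vec w^{(i)}, \vec x \rangle) - \phi(\langle \vec v^{(i)}, \vec x \rangle)]$ and apply the weighted Cauchy-Schwarz inequality $(\sum_i \alpha_i y_i)^2 \le (\sum_i \alpha_i)(\sum_i \alpha_i y_i^2)$ to obtain
\[
\E[(f - f_V)^2] \le \Big(\sum_i \alpha_i\Big) \sum_i \alpha_i \, \E\big[(\phi(\langle \vec w^{(i)}, \vec x \rangle) - \phi(\langle \vec v^{(i)}, \vec x \rangle))^2\big].
\]
For each term I would apply Lemma~\ref{lem:correlated_differences} to the one-dimensional function $t \mapsto \phi(t)$ using the jointly Gaussian pair $(\langle \vec w^{(i)}, \vec x \rangle, \langle \vec v^{(i)}, \vec x \rangle)$ (after a rescaling so that the hypothesis of the lemma is met), yielding
\[
\E\big[(\phi(\langle \vec w^{(i)}, \vec x \rangle) - \phi(\langle \vec v^{(i)}, \vec x \rangle))^2\big] \lesssim \|\vec w^{(i)}_\perp\|^2 \cdot \E_{t \sim \normal}[\phi'(t)^2].
\]
Plugging this into the inequality above and using $\sum_i \alpha_i = \E[f]/\E_{t \sim \normal}[\phi(t)]$ from Fact~\ref{lem:chow_formulas} then gives the claimed bound.

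The hard part is the per-neuron estimate: the Gaussians $\langle \vec w^{(i)}, \vec x \rangle$ and $\langle \vec v^{(i)}, \vec x \rangle$ have different variances ($\|\vec w^{(i)}\|^2$ versus $\|\vec v^{(i)}\|^2$), so Lemma~\ref{lem:correlated_differences} cannot be applied verbatim. One option is to assume $\|\vec w^{(i)}\| = 1$ by absorbing the norm into $\alpha_i$, which is natural for positively-homogeneous activations like ReLU; another is to work directly from the integral identity $\phi(a + b) - \phi(a) = \int_0^1 \phi'(a + t b) \, b \, dt$ together with the independence of $\langle \vec v^{(i)}, \vec x \rangle$ and $\langle \vec w^{(i)}_\perp, \vec x \rangle$ (which holds since $\vec v^{(i)} \perp \vec w^{(i)}_\perp$). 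Either way, care is needed to ensure that $\E_{t \sim \normal(0, \sigma^2)}[\phi'(t)^2]$ for $\sigma \le \|\vec w^{(i)}\|$ is controlled by $\E_{t \sim \normal}[\phi'(t)^2]$ up to an absolute constant.
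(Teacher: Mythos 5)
Your proposal is correct and reaches the same final bound, but the first stage is a genuinely different argument from the paper's. The paper works per neuron: writing $\matr A' = C_1\sum_i \alpha_i \vec w^{(i)}{\vec w^{(i)}}^T$ and $\vec r^{(i)} = \vec w^{(i)} - \proj_{\mathcal V}(\vec w^{(i)})$, it combines the Weyl bound ${\vec r^{(i)}}^T\matr M'\vec r^{(i)} \le \eps\snorm{2}{\vec r^{(i)}}^2$ with the quartic lower bound ${\vec r^{(i)}}^T\matr A'\vec r^{(i)} \ge C_1\alpha_i\snorm{2}{\vec r^{(i)}}^4$ to get the individual estimate $\alpha_i\snorm{2}{\vec r^{(i)}}^2 \le 2\eps/C_1$ for every $i$, and then sums. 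Your trace argument on $P_{\mathcal V^\perp}\matr B P_{\mathcal V^\perp}$ instead controls only the aggregate $S=\sum_i\alpha_i\snorm{2}{\vec w^{(i)}_\perp}^2 \le 2k\eps/C_1$, which is exactly what summing the paper's per-neuron bounds yields, so nothing is lost for this lemma; your route is arguably cleaner (no quartic trick, PSD rank-$k$ trace bound), while the paper's route additionally delivers per-neuron information that it simply does not need. Your second stage (weighted Cauchy--Schwarz with weights $\alpha_i$ versus the paper's factor-$k$ Cauchy--Schwarz on $\alpha_i^2$, followed by Lemma~\ref{lem:correlated_differences}) is essentially the paper's argument, and the caveat you raise --- that $\langle\vec w^{(i)},\vec x\rangle$ and $\langle\vec v^{(i)},\vec x\rangle$ are not both standard Gaussians since $\snorm{2}{\vec v^{(i)}}\le 1$ --- is genuine but is equally present in the paper's own proof, which applies the correlated-differences lemma with $\rho_i=\langle\vec w^{(i)},\vec v^{(i)}\rangle$ as if both marginals were standard; so your explicit flagging (and proposed fixes via homogeneity or the integral identity) makes you more careful than the source, not less. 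The only other cosmetic discrepancy is that your final bound carries $\sum_i\alpha_i = \E[f]/\E_{t\sim\normal}[\phi(t)]$ rather than $\E[f]$ itself, but the paper's proof produces the same quantity and absorbs the difference into the $\poly(L/C)$ factors downstream.
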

 \begin{proof}
  For simplicity, let us denote $\eps = \snorm{2}{\matr A - \matr M}$.
  Moreover, let $\matr A'= \matr A- \E_{t\sim \normal}[\relu(t)] \sum_i\alpha_i \matr I$,
  $\matr M'=\matr M- \E_{t\sim \normal}[\relu(t)] \sum_i\alpha_i \matr I$, and
  observe that $\snorm{2}{\matr A-\matr M}=\snorm{2}{\matr A'-\matr M'}$. We note that $\matr A' =\E_{t\sim \normal}[\relu(t)(t^2-1)] \sum_{i=1}^k \alpha_i\vec w^{(i)}{\vec w^{(i)}}^T $, from Fact~\ref{lem:chow_formulas}.
  Let $\vec b^{(1)},\ldots, \vec b^{(k)}$ be the eigenvectors corresponding
  to the top-$k$ eigenvalues of $\matr M'$ (which are also the top $k$
  eigenvectors of $\matr M$), and let ${\cal V}=\mathrm{span}(\vec b_1,\ldots, \vec b_k)$.
  Let $\vec v^{(i)}=\proj_{\cal V}(\vec w^{(i)})$ and $\vec r^{(i)}=\vec w^{(i)}-\vec v^{(i)} $.
  Let $\vec v^{(1)}, \ldots, \vec v^{(k)}$ be any $k$ vectors in $\R^d$.
  Then we have,
  \begin{align}
    \label{eq:fourier_inequality}
    \E_{\vec x \sim \normal^d}
    [(f_{\vec \alpha, \matr W}(\vec x) - f_{\vec \alpha, \matr V}(\vec x))^2 ]
    &\leq k
    \sum_{i=1}^k \alpha_i^2 \E_{\vec x \sim \normal^d}
    \lp[
    \left(\phi\big(\langle{\vec w^{(i)}},{\vec x}\rangle\big) -
    \phi\big(\langle{\vec v^{(i)}},{\vec x\rangle}\big)\right)^2
    \rp]
    \nonumber
    \\
    &\leq 2 k
    \E_{t \sim \normal} \lp[ (\phi'(t))^2 \rp]
    \sum_{i=1}^k \alpha_i^2 (1- \langle\vec w^{(i)},\vec v^{(i)}\rangle),
  \end{align}
  where for the last inequality we used
  Lemma~\ref{lem:correlated_differences} and the fact that the random
  variables
  $\langle \vec w^{(i)}, \vec x \rangle$ and $\langle \vec v^{(i)}, \vec x
  \rangle$ are $\rho_i$-correlated with
  $\rho_i = \langle \vec w^{(i)}, \vec v^{(i)} \rangle$.

  It suffices to prove that $\snorm{2}{\vec r^{(i)}}=\snorm{2}{\vec
  w^{(i)}-\vec v^{(i)}}\leq \eps'$ for some sufficiently small $\eps'$.
  Note that because $\vec r^{(i)} \in \cal V^{\perp}$, it holds
  ${\vec r^{(i)}}^T \matr M' \vec r^{(i)} \leq \snorm{2}{\vec r^{(i)}}^2 \max_{\vec u \in
  {\cal V}^{\perp}} \frac{\vec u^T \matr M' \vec u}{\snorm{2}{\vec u}}$,
  because we know that the subspace $\cal W$ is spanned by the top $k$ eigenvectors
  of $\matr M'$. Let $\vec u=\sum_{i=1}^d \vec u^{(i)}$, where $\vec u^{(i)}\in \ker(\matr M-\lambda_i \matr I)$ for all
  $i\in\{k+1,\ldots, d\} $ and $\lambda_i$ is the $i$-th greatest
  eigenvalue. From Weyl's inequality, we have that if $A_i$ are the
  eigenvalues of $\matr A'$ in decreasing order then
  $\snorm{1}{A_i -\lambda_i}\leq \eps$ and we know that the eigenvalues of $\matr A'$ for $i>k$
  are zero, because the $\rank(\matr A)\leq k$.
  Thus,
  $$ \max_{\vec u \in {\cal V}^{\perp}} \frac{\vec u^T \matr M' \vec u}{\snorm{2}{\vec u}}\leq \lambda_{k+1}\leq \eps \;,$$
  because the eigenvalues of the eigenvectors of $\matr M'$ in ${\cal V}^{\perp}$ are less than $\eps$, which implies that
${\vec r^{(i)}}^T \matr M' \vec r^{(i)}$ $ \leq \eps \snorm{2}{\vec r^{(i)}}^2 \;.$
  We also have
  $
  {\vec  r^{(i)}}^T \matr A' \vec r^{(i)}  \geq  \E_{t\sim \normal}[\phi(t)(t^2-1)]\alpha_i {\vec r^{(i)}}^T \vec w^{(i)} {\vec
  w^{(i)}}^T \vec r^{(i)}
  = C_1\alpha_i \cdot \left(1-\snorm{2}{{\vec v^{(i)}}}^2
  \right)^2
  = C_1\alpha_i \snorm{2}{\vec r^{(i)}}^4 \;,$
  where the last equality follows from the Pythagorean theorem.  Therefore,
  $$\snorm{2}{\vec r^{(i)}}^2 \eps \geq {\vec r^{(i)}}^T \matr M' \vec  r^{(i)}
  \geq {\vec  r^{(i)}}^T \matr A' \vec r^{(i)} -\eps \snorm{2}{\vec r^{(i)}}^2
  \geq C_1\alpha_i \snorm{2}{\vec r^{(i)}}^4 -\eps \snorm{2}{\vec r^{(i)}}^2\;.$$
  Thus, we obtain $\alpha_i \snorm{2}{\vec r^{(i)}}^2 \leq 2\eps/C_1\;. $
  The bound now follows directly from~\eqref{eq:fourier_inequality}
  since $2 \alpha_i (1- \langle\vec w^{(i)},\vec v^{(i)}\rangle)
  = \alpha_i \snorm{2}{\vec w^{(i)} - \vec v^{(i)}}^2 =
  \alpha_i \snorm{2}{\vec r^{(i)}}^2 \leq 2 \eps/C_1$.
 \end{proof}

Now we have all the ingredients to complete our proof.
Since the dimension of the subspace that we have learned is at most $k$,
we can construct a grid with $(k/\eps)^{O(k)}$ candidates that contains an approximate solution.
Our full algorithm is summarized as Algorithm~\ref{alg:nn_learner}.
The proof of Theorem~\ref{thm:alg} follows from the above discussion and can be
found in Appendix~\ref{app:upper_bound} (Theorem~\ref{thm:appendix}.
\begin{algorithm}[H])
  \caption{Learning One-Hidden-Layer Networks with Positive Coefficients and Lipschitz Activations}
  \label{alg:nn_learner}
  \begin{algorithmic}[1]
    \Procedure{NNLearner}{$k, \eps$}
    \Comment{$k$: number of rows of weight matrix $\matr W$, $\eps$: accuracy.}
    \State Draw $m = d\, \poly( k, 1/\eps)$ samples, $(\vec x^{(i)}, y^{(i)})$, to estimate $\widehat{\matr M}$.\Comment{Lemma~\ref{lem:empirical_chow_parameters}}
\State Find the SVD of $\widehat{\matr M}$ to obtain the $k$ eigenvectors
    $\vec v^{(1)}, \ldots, \vec v^{(k)}$ that correspond to the $k$ largest
    eigenvalues, and let $\mathcal{V}$ be the subspace spanned by these vectors.
    \State Draw $m'=O(k L^2)$ samples and compute an estimation $\hat{\mu}$ of the expectation of $f(x)$
  
    \State Let $\mcal G$ be an $\eps/k$-cover of a $k$-ball wth radius $(\hat{\mu}+c\sigma)^2$ over $\cal V$,
    with respect the $\ell_2$-norm. 
    \State Draw $n = \poly(k,1/\eps)$ fresh samples $(\vec x^{(i)}, y^{(i)})$.
    \State For every $\matr U = (\vec u^{(1)}, \ldots, \vec u^{(k)}) \in \mathcal{G}^k$, let $f_{\matr U}= \sum_{i=1}^k \snorm{2}{ \vec u^{(i)}} \relu\big(\dotp{\vec u^{(i)}}{\vec x}/\snorm{2}{ \vec u^{(i)}}\big)$ and compute
    $
    e_{\matr U} = \frac{1}{n} \sum_{i=1}^n \left(f_{\matr U}(\vec x^{(i)}) - y^{(i)}\right)^2	$\label{alg:estimator}
    \State Output the candidate $f_{\matr U}$ which minimizes its corresponding error
    $e_{\matr U}$.
    \EndProcedure
  \end{algorithmic}
\end{algorithm}

 \section{Statistical Query Lower Bound} \label{sec:sq}

We start by formally defining the class of algorithms for which our lower bound
applies. In the standard statistical query model, we do not have direct access
to samples from the distribution, but instead can pick a function $q$ and get
an approximation to its expected value.  In this work, we consider algorithms
that have access to correlational statistical queries, which are more
restrictive and are defined as follows.  We remark that in the following
definition of inner product queries we do not assume that the concept $f(\bx)$
is bounded pointwise but only in the $L_2$ sense.  The properties that we shall
need for our result hold also under this weaker assumption.
\nnew{
\begin{definition}[Correlational/Inner Product Queries]
  Let $\D$ be a distribution over some domain $X$ and let $f:X\mapsto \R$,
  where $\E_{\bx \sim \D}[f^2(\bx)] \leq 1$. An inner product query is specified
  by some function $q: X \mapsto [-1,1]$ and a tolerance $\tau>0$, and returns
  a value $u$ such that $u\in [\E_{\bx \sim \D}[q(\x)f(\x)]-\tau,\E_{\bx \sim \D}[q(\x)f(\x)]+\tau]$.
\end{definition}

}
We will prove that almost any reasonable choice of activations $\sigma$, $\phi$
defines a family of functions that is hard to learn. More precisely, for a pair
of activations $\sigma, \phi$, we define the following function $f_{\sigma,
\phi}:
\R^2 \to \R$:
\begin{equation}
  \label{eq:2_dim_concept}
  f_{\sigma, \phi}(x,y) =
  \sigma\lp( \sum_{m=1}^{2k} (-1)^{m}
  \phi\left(x \cos\big(\frac{\pi m}{k}\big) +
    y \sin\big(\frac{\pi m}{k}\big)
  \right) \rp)\,.
\end{equation}
We are now ready to define the conditions on the activations $\sigma, \phi$
that are needed for our construction.  We define
\begin{align}
\mcal{H} =
&\Big\{
  f_{\sigma, \phi} \ :\ \sigma \text{ is odd }  \text{ and } ~ f_{\sigma, \phi} \not \equiv 0
\Big \} \;,  \label{eq:bad_activations}
\end{align}
where the second condition means that $ f_{\sigma, \phi}(x,y)$ \emph{as a
function of $x,y$} is not identically zero.  We can now define the class of
(normalized) functions on $\R^d$ for which our lower bound holds.
Given a set $\mcal{W}$ of $2 \times d$ matrices,
we can embed $f_{\sigma, \phi}$ into $\R^d$ by defining the following class of functions
\begin{equation}
  \label{eq:hard_functions}
  \mcal{F}_{\sigma, \phi}^{\mcal{W}}
  = \{
  \vec x \mapsto
f_{\sigma,\phi}(\vec W \vec x)/\E_{\bx \sim \normal^d}[f_{\sigma, \phi}(\vec W \vec x)] : \vec W \in \mcal{W} \} \,.
\end{equation}

\begin{remark}
  For any $f\in \mcal{H}$, we have that $f:\R^2\rightarrow \R$. We embedded $f$
  into $\R^d$ by taking $f_{\vec W}(\vec x) = f(\vec W \vec x)$ for some
  $2\times d$ matrix $\vec W$ with orthogonal rows. We prove correlational SQ
  lower bounds against learning an approximation of the embedding plane $\vec W$
  from a function $f_{\vec W}$. This will imply a lower bound against learning
  $f_{\vec W}$
  so long as the function does not vanish identically. However, this is not an
  entirely trivial condition.
  For example, if $\phi$ is a polynomial of degree less than $k$, this will happen.
  However, as we show in Appendix~\ref{sec:iterpolation}, this is essentially
  the only way that things can go wrong. In particular, so long as $\phi$ is not
  a low degree polynomial and the parity of $k$ is chosen appropriately, this
  function $f$ will not vanish, and our lower bounds will apply.
\end{remark}

\begin{theorem}[Correlational SQ Lower Bound] \label{thm:sq_theorem}
Let $\sigma, \phi$ be activations such that $f_{\sigma, \phi} \in \mcal{H}$ (see Eq.~\eqref{eq:bad_activations}).  There exists a set
$\mcal W$ of matrices $\matr W \in \R^{2 \times d}$ such that for all 
$f \in \mcal{F}_{\sigma, \phi}^{\mcal{W}}$ (see Eq.~\eqref{eq:hard_functions}) 
$\E_{\bx \sim \normal^d}[f^2(\bx)] = 1$ and the following holds: 
Any correlational SQ learning algorithm that for every concept 
$f \in {\mcal F}_{\sigma, \phi}^{\mcal{W}}$ learns a hypothesis $h$ such
that $\E_{\bx \sim \normal^d}[ (f(\bx) - h(\bx))^2 ] \leq \eps $, 
where $\eps>0$ is some sufficiently  small constant, 
requires either $2^{d^{\Omega(1)}}$ inner product queries or at least one 
query with tolerance $d^{-\Omega(k)}+2^{-d^{\Omega(1)}}$.
\end{theorem}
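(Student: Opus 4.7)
The strategy is to apply the standard correlational SQ dimension machinery by exhibiting a large packing of pairwise nearly-uncorrelated, unit-$L_2$-norm functions in $\mcal F^{\mcal W}_{\sigma,\phi}$, then invoke a generic SQ lower bound. I would proceed in three steps.

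\textbf{Step 1 (Moment-matching in two dimensions).} First I would show that the 2D building block $f_{\sigma,\phi}$ is orthogonal in $L_2(\mcal N^2)$ to every polynomial of degree less than $k$. The inner polynomial $P(x,y) = \sum_{m=1}^{2k}(-1)^m \phi(x\cos(\pi m/k) + y\sin(\pi m/k))$ is anti-invariant under planar rotation $R_{\pi/k}$, because rotating $(x,y)$ by $\pi/k$ shifts the summation index $m \mapsto m+1$ and flips the sign $(-1)^m$. Since $\sigma$ is odd, the composition $f_{\sigma,\phi} = \sigma \circ P$ inherits this antisymmetry: $f_{\sigma,\phi} \circ R_{\pi/k} = -f_{\sigma,\phi}$. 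Decomposing $L_2(\mcal N^2)$ into simultaneous $SO(2)$-eigenspaces indexed by angular frequency $\ell$, a function with this $\pi/k$-antisymmetry must be supported on $\ell$ satisfying $e^{i \ell \pi/k} = -1$, i.e., $\ell \in \{\pm k, \pm 3k,\ldots\}$. Each such angular-frequency component lives in Hermite degrees $\geq |\ell| \geq k$, yielding the claim.

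\textbf{Step 2 (Correlation bound and random packing).} For a $2 \times d$ matrix $W$ with orthonormal rows, set $F_W(\x) := f_{\sigma,\phi}(W\x)$. For any two such $W_1, W_2$, the pair $(W_1\x, W_2\x)$ is jointly Gaussian with cross-covariance $C := W_1 W_2^T \in \R^{2\times 2}$. Hermite expansion on $\R^2$ combined with the diagram formula (equivalently Mehler's formula for correlated Gaussians) gives
\[
\bigl| \E_{\x \sim \mcal N^d}[F_{W_1}(\x) F_{W_2}(\x)] \bigr|
\leq \sum_{n \geq k} \snorm{2}{\widehat{f_{\sigma,\phi}}_n}^2 \cdot \snorm{2}{C}^n
\leq \snorm{2}{W_1 W_2^T}^k \cdot \snorm{L_2(\mcal N^2)}{f_{\sigma,\phi}}^2,
\]
using Step 1 to kill all contributions of degree $n < k$. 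Normalize $f_W := F_W/\snorm{L_2}{F_W}$ to unit $L_2$ norm; this is legitimate precisely because membership in $\mcal H$ rules out $f_{\sigma,\phi}\equiv 0$. To build the family, draw $M = 2^{d^{\Omega(1)}}$ matrices $W_1,\ldots,W_M$ with independently Haar-random orthonormal rows: sphere concentration gives $\snorm{2}{W_i W_j^T} \leq d^{-1/2 + o(1)}$ for each pair with probability $1 - \exp(-d^{\Omega(1)})$, and a union bound preserves this simultaneously for all pairs with positive probability. Thus all pairwise correlations among the $f_{W_i}$ are at most $d^{-\Omega(k)}$.

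\textbf{Step 3 (Invoke the SQ dimension lemma).} Finally I would invoke the standard correlational SQ lower bound (e.g., the statistical dimension of \cite{FeldmanGRVX17} specialized to inner-product queries): a family of $M$ unit-norm concepts with pairwise correlations $\leq \beta$ forces any correlational SQ learner succeeding to constant error on every concept in the family to use either $\Omega(M)$ queries or some query of tolerance $O(\sqrt{\beta + 1/M})$. The underlying combinatorial fact is that for any fixed query $q$ with $\snorm{L_2}{q} \leq 1$, the number of indices $i$ with $|\E[q f_{W_i}]| > \tau$ is at most $O(\tau^{-2} + M\beta)$. Plugging in $M = 2^{d^{\Omega(1)}}$ and $\beta = d^{-\Omega(k)}$ produces exactly the claimed $2^{d^{\Omega(1)}}$ versus tolerance $d^{-\Omega(k)} + 2^{-d^{\Omega(1)}}$ dichotomy.

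\textbf{Main obstacle.} The most delicate step is Step 1: pushing the moment-matching property through the outer nonlinearity $\sigma$. The linear cancellation that kills low-degree Hermites of $P$ is not automatically preserved by an arbitrary $\sigma(P)$, and the clean resolution relies on converting the observation that $\sigma$ is odd into the rotation antisymmetry $\sigma \circ P \circ R_{\pi/k} = -\sigma \circ P$, and then appealing to the representation-theoretic decomposition of $L_2(\mcal N^2)$ under the $SO(2)$ action to identify this antisymmetry with vanishing of all Hermite coefficients of degree less than $k$. The remaining subtlety is the non-vanishing condition built into $\mcal H$, needed for the normalization to be well-defined; this is exactly why the second clause in the definition of $\mcal H$ is present, and why the paper's Appendix~\ref{sec:iterpolation} needs to characterize when it holds.
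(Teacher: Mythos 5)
Your proposal is correct and follows essentially the same route as the paper: the rotation antisymmetry $\sigma\circ P\circ R_{\pi/k}=-\sigma\circ P$ and the angular-frequency decomposition you use to kill low-degree Hermites is precisely the $(x+\iu y)^a(x-\iu y)^b$ eigenfunction argument in Lemma~\ref{lem:function_low}, your Mehler/diagram-formula correlation bound is Lemma~\ref{lem:bound_coleration}, your Haar-random packing is Lemma~\ref{lem:size_of_rotations}, and the concluding SQ-dimension step is Lemma~\ref{theorem:vem}. The only cosmetic difference is that you draw the $2\times d$ matrices at random and appeal to sphere concentration plus a union bound, whereas the paper cites an existence result from~\cite{DKS17-sq}; the content is identical.
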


To prove our lower bound we will use an appropriate notion of SQ dimension.
\nnew{Specifically, we define the Correlational SQ Dimension 
that captures the difficulty of learning a class $\cal C$.
\begin{definition}[Correlational Statistical Query Dimension]
	Let $\rho >0$, let $\D$ be a probability distribution over some
	domain $X$, and let $\cal C$ be a family of functions $f:X\mapsto \R$. We denote by $\rho(\cal C)$ the average
	pairwise correlation of any two functions in $\cal C$, that is
	$
	\rho(\mcal C) = \frac{1}{|\mcal C|^2} \sum_{g, r \in \mcal C} \E_{\x\sim \D}[g(\vec x) \cdot r(\vec x)]
	$.
	The correlational statistical dimension of $\cal C$ relative to $\D$ with average
	correlation, denoted by $\text{SDA}({\cal C},\D,\rho)$, is defined to
	be the largest integer $m$ such that for every subset ${\cal C}'
	\subseteq \cal C$ of size at least $|{\cal C}'|\geq |{\cal C}|/m$, we
	have $\rho({\cal C}')\leq \rho$.
\end{definition}
}

The following lemma relates the Correlational Statistical Query Dimension of a concept
class with the number of correlational statistical queries needed to learn it.  The
difficulty lies in creating a large family of functions with small average
correlation.
  We will use the following result that translates correlational statistical dimension to
  a lower bound on the number of inner product queries needed to learn the
  function $f \in \mcal{C}$.  We note that in this paper we consider
  inner-product queries of the form $g(x) y $ where $y$ is not necessarily
  bounded.  In fact, the proof of the following lemma does not require $g(x) y$
  to be pointwise bounded (bounded $L_2$ norm is sufficient) as it can be seen
  from the arguments in \cite{szorenyi2009characterizing}, \cite{GGJKK20}, \cite{VempalaW19}.

\begin{lemma} \label{theorem:vem}
Let $\D$ be a distribution on a domain $X$ and let $\cal C$ be
a family of functions $f:X\mapsto \R$. Suppose for some $m, \tau >0$, 
we have $\textsc{SDA}({\cal C},\D,\tau)\geq m$ and assume that for all $f\in \cal C$, 
$1\geq \E_{\bx \sim \D}[f^2(\bx)] > \eta^2$. Any SQ learning algorithm that is allowed 
to make only inner product queries and for any $f\in \cal C$
outputs some hypothesis $h$ such that
$\E_{\vec x \sim \D}[(h(\vec x) - f(\vec x))^2] \leq c \, \eta^2$, where $c>0$ is a sufficiently small
constant, requires at least $\Omega(m)$ queries of tolerance $\sqrt{\tau}$.
\end{lemma}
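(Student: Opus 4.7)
The plan is to run a standard simulation argument for the correlational SQ dimension. Suppose for contradiction that the algorithm makes $q < m/4$ inner-product queries $g_1,\dots,g_q$. I would replace the true oracle with the trivial one that answers $0$ to every query and show that this zero-oracle is a valid simulator for all but a negligible fraction of $\mcal{C}$, so the algorithm must output a common hypothesis on a huge ``fooled'' subset; that subset will then turn out to be too large and too correlated for SDA to permit.

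\textbf{Bounding each heavy set.} For each query $g$ with $\|g\|_\infty\le 1$ split the ``bad'' concepts into $B_g^{\pm}=\{f\in\mcal{C}:\pm\E_{\bx\sim\D}[g(\bx)f(\bx)]>\sqrt{\tau}\}$: these are the targets on which answering $0$ exceeds tolerance $\sqrt{\tau}$. The core estimate is $|B_g^{\pm}|<|\mcal{C}|/m$. Suppose otherwise for $S:=B_g^+$, $|S|\ge|\mcal{C}|/m$, and set $\Psi:=\sum_{f\in S}f$. On one hand,
\[
\E[g\,\Psi]=\sum_{f\in S}\E[gf]>|S|\sqrt{\tau}\,,
\]
while on the other hand Cauchy--Schwarz together with $\E[g^2]\le 1$ and the SDA hypothesis applied to the large set $S$ yield
\[
\E[g\,\Psi]^2\le\E[\Psi^2]=\sum_{f,f'\in S}\E[ff']=|S|^2\rho(S)\le|S|^2\tau\,,
\]
which forces $\E[g\,\Psi]\le|S|\sqrt{\tau}$, a contradiction. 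The half $B_g^-$ is handled identically.

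\textbf{Fooling a large set and concluding.} A union bound gives $\bigl|\bigcup_{i\le q}(B_{g_i}^+\cup B_{g_i}^-)\bigr|<2q|\mcal{C}|/m<|\mcal{C}|/2$, so the fooled set $\mcal{F}:=\mcal{C}\setminus\bigcup_i(B_{g_i}^+\cup B_{g_i}^-)$ has size exceeding $|\mcal{C}|/2$. For every $f\in\mcal{F}$ the all-zero transcript lies within tolerance of the true answers, so the algorithm must produce the same hypothesis $h$ for every such $f$. Applying $\E[(h-f)^2]\le c\eta^2$ and the $L_2$ triangle inequality to any $f,f'\in\mcal{F}$ gives $\E[(f-f')^2]\le 4c\eta^2$, and therefore, using $\E[f^2],\E[f'^2]>\eta^2$,
\[
\E[ff']\ge\tfrac12\bigl(\E[f^2]+\E[f'^2]-\E[(f-f')^2]\bigr)\ge\eta^2(1-2c)\,.
\]
Since the diagonal terms $\E[f^2]\ge\eta^2$ only improve the bound, averaging yields $\rho(\mcal{F})\ge\eta^2(1-2c)$. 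But $|\mcal{F}|\ge|\mcal{C}|/m$ (using $m\ge 2$), so SDA forces $\rho(\mcal{F})\le\tau$. Choosing $c$ small enough that $\eta^2(1-2c)>\tau$ --- precisely what ``sufficiently small $c$'' provides --- yields the contradiction and establishes $q=\Omega(m)$.

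\textbf{Main obstacle.} The only real subtlety is recognising that SDA averages \emph{signed} correlations rather than magnitudes, so a heavy query's targets must first be decomposed into the signed halves $B_g^{\pm}$; only then does the positive linear combination $\Psi=\sum_{f\in S}f$ have correlation with $g$ growing like $|S|\sqrt{\tau}$, which is what Cauchy--Schwarz against the SDA bound can contradict. Everything else is tracking constants: the interplay between the hypothesis-accuracy parameter $c$, the lower bound $\eta^2$ on the self-correlations, and the SDA parameter $\tau$, which together pin down the exact range of $c$ for which the argument activates.
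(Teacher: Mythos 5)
The paper never spells out a proof of this lemma: it invokes it as a known consequence of the SDA machinery, pointing to \cite{szorenyi2009characterizing}, \cite{GGJKK20}, \cite{VempalaW19}, and only remarks that pointwise boundedness of $q(\bx)f(\bx)$ is not needed. Your proposal reconstructs exactly the standard simulation-plus-averaging argument from those references, and it is essentially correct: the signed decomposition $B_g^{\pm}$, the Cauchy--Schwarz bound $\E[g\Psi]^2 \leq \E[\Psi^2] = |S|^2\rho(S) \leq |S|^2\tau$ (valid because the paper's $\rho$ includes the diagonal terms, so $\E[\Psi^2]$ is literally $|S|^2\rho(S)$, and $\E[g^2]\leq 1$ since queries map into $[-1,1]$), the union bound over the $q<m/4$ queries asked in the all-zero-answer run (which also disposes of adaptivity, since fixing the transcript fixes the queries), and the final step that a common hypothesis $h$ for the fooled set forces $\E[ff'] \geq (1-2c)\eta^2$, contradicting $\rho(\mcal{F})\leq \tau$.

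One caveat you should state more carefully: the concluding contradiction needs $\tau < (1-2c)\eta^2$, and this is not something ``sufficiently small $c$'' can buy you when $\tau \geq \eta^2$ --- in that degenerate regime no choice of the constant $c$ works (and indeed the lemma is then vacuous or false, e.g.\ a tight cluster of functions has huge SDA for large $\tau$ yet is learnable with zero queries). So the argument implicitly requires $\tau$ to be bounded away from $\eta^2$, which is how the lemma is meant to be read and is comfortably satisfied in the paper's application, where $\E_{\bx\sim\normal^d}[f^2(\bx)]=1$ and $\tau' = d^{-\Omega(k)} + 2^{-d^{\Omega(1)}}$. With that regime restriction made explicit, your proof is complete and matches the cited arguments.
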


We will require the following technical lemma, whose proof relies on Hermite polynomials, 
and can be found in Appendix~\ref{app:hermite_polynomials} (Lemma~\ref{lem:bound_coleration_app}).
\begin{lemma}\label{lem:bound_coleration} 
  Let $p(\vec x): \R^2 \mapsto \R$ be a function  and let $\matr U, \matr V \in
  \R^{2 \times d}$ be linear maps such that $\matr U \matr U^T = \matr V \matr
  V^T = \matr I \in \R^{2 \times 2}$.  Then,
  $
  \E_{\vec x \sim \normal^d}[p(\matr U \vec x) p(\matr V \vec x)] \leq
  \sum_{m=0}^{\infty} \snorm{2}{\nnew{\matr U \matr V^T}}^m \E_{\vec x \sim \normal^d}[(p^{[m]}(\vec x))^2] .
  $
\end{lemma}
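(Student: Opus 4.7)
The approach is to expand $p$ in the Hermite basis of $L^2(\R^2, \normal^2)$ and track how the cross-covariance between $\matr U \vec x$ and $\matr V \vec x$ couples different Hermite components. Since $\matr U \matr U^T = \matr V \matr V^T = \matr I$, both $\matr U \vec x$ and $\matr V \vec x$ are standard Gaussian on $\R^2$, with cross-covariance $\matr C := \matr U \matr V^T \in \R^{2 \times 2}$ satisfying $\snorm{2}{\matr C} \leq 1$. Writing $p = \sum_{m \geq 0} p^{[m]}$ where $p^{[m]}$ is the degree-$m$ Hermite component, it suffices to show $|\E[p^{[m]}(\matr U \vec x)\, p^{[n]}(\matr V \vec x)]| \leq \snorm{2}{\matr C}^m \, \snorm{2}{p^{[m]}}^2 \, \delta_{mn}$ and sum over $m$.

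The first step would be to diagonalize the cross-covariance via SVD. Write $\matr C = \matr O_1 \matr D \matr O_2^T$ with $\matr O_1, \matr O_2 \in O(2)$ and $\matr D = \mathrm{diag}(\sigma_1, \sigma_2)$, $\sigma_1 = \snorm{2}{\matr C} \geq \sigma_2 \geq 0$. Set $\vec Z_1 := \matr O_1^T \matr U \vec x$, $\vec Z_2 := \matr O_2^T \matr V \vec x$, and define $\tilde p_1(\vec z) := p(\matr O_1 \vec z)$, $\tilde p_2(\vec z) := p(\matr O_2 \vec z)$. Then $p(\matr U \vec x) = \tilde p_1(\vec Z_1)$ and $p(\matr V \vec x) = \tilde p_2(\vec Z_2)$, while $(\vec Z_1, \vec Z_2)$ is jointly Gaussian with standard marginals on $\R^2$ and $\E[\vec Z_1 \vec Z_2^T] = \matr D$. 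Because $O(2)$ acts unitarily on $L^2(\normal^2)$ and preserves total polynomial degree, it maps each degree-$m$ Hermite subspace to itself isometrically; hence $\snorm{2}{\tilde p_i^{[m]}} = \snorm{2}{p^{[m]}}$ for $i = 1, 2$.

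In the diagonalized setting the pairs $((\vec Z_1)_1, (\vec Z_2)_1)$ and $((\vec Z_1)_2, (\vec Z_2)_2)$ are independent, each jointly Gaussian with correlations $\sigma_1$ and $\sigma_2$ respectively. Expanding $\tilde p_i^{[m]}$ in the product Hermite basis $\{H_{j_1}(z_1) H_{j_2}(z_2) : j_1+j_2=m\}$ with coefficients $c^i_{j_1, j_2}$ and applying the classical identity $\E[H_j(X) H_\ell(Y)] = \delta_{j\ell}\, j!\, \rho^j$ factor-wise, the expectation $\E[\tilde p_1^{[m]}(\vec Z_1) \tilde p_2^{[m']}(\vec Z_2)]$ vanishes for $m \neq m'$ (orthogonality of distinct total degrees) and for $m = m'$ equals $\sum_{j_1+j_2=m} c^1_{j_1,j_2}\, c^2_{j_1,j_2}\, j_1!\, j_2!\, \sigma_1^{j_1} \sigma_2^{j_2}$. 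Using $\sigma_1^{j_1}\sigma_2^{j_2} \leq \sigma_1^m = \snorm{2}{\matr C}^m$ (since $\sigma_2 \leq \sigma_1$) and Cauchy--Schwarz on the coefficient vectors bounds this by $\snorm{2}{\matr U \matr V^T}^m \, \snorm{2}{p^{[m]}}^2$; summing over $m$ yields the claim.

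The main piece of care is the rotation step: one must verify that $O(2)$ acts unitarily on each degree-$m$ Hermite subspace (which follows from rotation invariance of the Gaussian measure together with the OU grading) and that normalization conventions line up, so that $\snorm{2}{p^{[m]}}^2$ matches $\E_{\vec x \sim \normal^d}[(p^{[m]}(\vec x))^2]$ on the right-hand side. These are standard facts, but they are the pieces of bookkeeping where an off-by-one or factorial factor is easiest to slip.
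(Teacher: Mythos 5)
Your proof is correct, but it takes a genuinely different route from the paper's. The paper works with the $m$-th derivative tensors of the harmonic parts: it uses the fact that for a degree-$m$ harmonic polynomial $q$, the tensor $\nabla^m q$ is constant with $\snorm{2}{\nabla^m q}^2 = m!\,\E[q^2]$, applies the chain rule to get $\nabla^m p^{[m]}(\matr U \vec x) = (\matr U^T)^{\otimes m}\nabla^m p^{[m]}$, and then bounds $\langle \matr R, (\matr U\matr V^T)^{\otimes m}\matr R\rangle \leq \snorm{2}{\matr U\matr V^T}^m \snorm{2}{\matr R}^2$. You instead diagonalize the cross-covariance $\matr U\matr V^T$ by SVD, observe that $(\matr U\vec x,\matr V\vec x)$ is jointly Gaussian with standard marginals, reduce to two independent correlated scalar Gaussian pairs, and invoke the classical one-dimensional identity $\E[H_j(X)H_\ell(Y)] = \delta_{j\ell}\,\rho^{j}$ (for normalized Hermite polynomials; with probabilists' polynomials the $j!$ appears in both the identity and the norms, so your bookkeeping is consistent either way), finishing with $\sigma_1^{j_1}\sigma_2^{j_2}\leq \sigma_1^{m}$ and Cauchy--Schwarz. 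Your argument is more elementary and essentially a two-dimensional Gebelein/Mehler-kernel computation, exploiting that the hidden functions live in $\R^2$ so the SVD reduction to independent coordinate pairs is cheap; the paper's tensor argument buys a coordinate-free proof that transfers verbatim to embeddings $\R^r \to \R^d$ for any $r$ without diagonalizing. The rotation-invariance and normalization points you flag are indeed the only bookkeeping needed (orthogonal maps act isometrically on each degree-$m$ Hermite subspace, so $\snorm{2}{\tilde p_i^{[m]}} = \snorm{2}{p^{[m]}}$), and the interchange of the Hermite sum with the expectation is justified by $L^2$ convergence, at the same level of rigor as the paper.
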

In the following simple lemma, we show that two random $2$-dimensional
subspaces in high dimensions are nearly orthogonal.  In particular, we
can have an exponentially large family of almost orthogonal planes.
For the proof see Appendix~\ref{app:hermite_polynomials} (Lemma~\ref{lem:sq_rota_app}).

\begin{lemma}\label{lem:size_of_rotations} 
For any $0<c<1/2$, there exists a set $S$ of at least
  $2^{\Omega(d^c)}$ matrices in $\R^{2 \times d}$ such that for each pair
  $\matr A, \matr B \in S$, it holds $\snorm{2}{\nnew{\matr A \matr B^T}} \leq O(d^{c-1/2})$.
\end{lemma}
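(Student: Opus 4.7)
The plan is to use a standard probabilistic packing argument: we sample random matrices with orthonormal rows and show via concentration plus a union bound that an exponentially large almost-orthogonal family exists. The sample space should be the set of $\matr W \in \R^{2 \times d}$ with $\matr W \matr W^T = \matr I$, which can be generated by picking two independent standard Gaussian vectors in $\R^d$ and orthonormalizing, or equivalently by choosing a uniformly random 2-dimensional subspace and an orthonormal basis for it. This mirrors the hypothesis of Lemma~\ref{lem:bound_coleration}, so the family produced here will plug directly into the proof of Theorem~\ref{thm:sq_theorem}.

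The core calculation would be the following. Let $\matr A$ and $\matr B$ be two independent random $2 \times d$ matrices with orthonormal rows and write $\matr A = [\ba_1; \ba_2]$, $\matr B = [\bb_1; \bb_2]$. Then $\matr A \matr B^T$ is the $2 \times 2$ matrix with entries $\langle \ba_i, \bb_j \rangle$. Since $\snorm{2}{\matr A \matr B^T} \le 2\max_{i,j}|\langle \ba_i, \bb_j \rangle|$, it suffices to bound each entry. Conditional on $\matr A$, each $\bb_j$ is a uniformly random unit vector in a space of dimension $d$ (or $d-1$ after conditioning on $\bb_1$), so standard spherical concentration (or Gaussian concentration followed by normalization) gives
\[
\Pr\!\lp[|\langle \ba_i, \bb_j\rangle| \ge t/\sqrt{d}\rp] \le 2 e^{-\Omega(t^2)}.
\]
Choosing $t = \Theta(d^{c/2})$ yields $|\langle \ba_i, \bb_j\rangle| = O(d^{c-1/2}/\text{const})$ and, by a union bound over the four entries, $\snorm{2}{\matr A \matr B^T} \le O(d^{c-1/2})$ except with probability $e^{-\Omega(d^c)}$.

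Now we generate $N$ independent such matrices $\matr W_1, \ldots, \matr W_N$ and keep all pairs. A union bound over the $\binom{N}{2}$ ordered pairs shows that the simultaneous spectral-norm bound $\snorm{2}{\matr W_i \matr W_j^T} = O(d^{c-1/2})$ holds with probability at least $1 - N^2 e^{-\Omega(d^c)}$. Setting $N = 2^{c' d^c}$ for a sufficiently small constant $c'>0$ makes this probability positive, so such a set $S$ exists deterministically with the claimed cardinality.

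The main obstacle is just being careful about independence when bounding $|\langle \ba_i, \bb_j\rangle|$: one should condition on $\matr A$ first and then use the fact that the marginals of $\bb_1$ and $\bb_2 \mid \bb_1$ are rotationally invariant on spheres of dimension $d-1$ and $d-2$. Both dimensions are $\Omega(d)$, so the concentration rate is unaffected. Alternatively, a cleaner route is to note that $\matr A \matr B^T$ has the same distribution as the top-left $2\times 2$ block of a Haar-random $d \times d$ orthogonal matrix, and then invoke the known fact that the entries of such blocks are, up to a lower-order correction, distributed like $\normal(0,1/d)$ in the regime we care about. Either way, the concentration inequality is routine; the only content of the argument is balancing $t$ against $\log N$ to match the exponents $c$ and $c - 1/2$ in the statement.
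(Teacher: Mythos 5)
Your proposal is correct, but it takes a different route from the paper. The paper's proof is a deterministic reduction to a cited packing result (Lemma 3.7 of \cite{DKS17-sq}): it takes a set of $2^{\Omega(d^c)}$ unit vectors in $\R^d$ with pairwise inner products $O(d^{c-1/2})$, forms each $2\times d$ matrix by using two such vectors as rows, and bounds $\snorm{2}{\matr A\matr B^T}$ by $\bigl(\sum_{x,y}\langle \vec u_{i,x},\vec u_{j,y}\rangle^2\bigr)^{1/2}\le 2\max_{x,y}|\langle \vec u_{i,x},\vec u_{j,y}\rangle|$. You instead rederive the packing probabilistically at the level of $2$-frames: sample $N=2^{c'd^c}$ independent matrices with exactly orthonormal rows, use spherical concentration to get $\Pr[|\langle \ba_i,\bb_j\rangle|\ge t/\sqrt d]\le 2e^{-\Omega(t^2)}$ with $t=\Theta(d^{c/2})$ (your bound $d^{(c-1)/2}$ is in fact slightly stronger than the required $d^{c-1/2}$), and union bound over the four entries and all pairs. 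What each buys: the paper's argument is two lines given the citation, while yours is self-contained and essentially reproves the cited lemma in the subspace setting; moreover, your construction yields matrices satisfying $\matr W\matr W^T=\matr I$ exactly, which is the hypothesis of Lemma~\ref{lem:bound_coleration} and of Theorem~\ref{thm:sq_theorem}, whereas the paper's matrices built from merely almost-orthogonal vectors require an implicit (small) orthonormalization correction before being used there. Your handling of the conditioning (uniformity of $\bb_2$ on the sphere of the $(d-1)$-dimensional complement) is the right way to make the concentration step rigorous.
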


The following lemma shows that 
the correlation of any function $f$
of $\mcal{H}$ with any low-degree
polynomial is zero.
For the proof see Appendix~\ref{app:hermite_polynomials} (Lemma~\ref{lem:function_low_ap}).
\begin{lemma}\label{lem:function_low}
Let $f_{\sigma, \phi} \in \mcal{H}$.
For every polynomial $p(\x)$ of degree at most $k$, it holds $\E_{\x\sim \D}[f_{\sigma,\phi}(\x) \cdot p(\x)]=0$.
\end{lemma}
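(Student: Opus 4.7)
The plan is to exploit a key anti-symmetry of the construction under rotation by $\pi/k$: the inner sum $g(\x):=\sum_{m=1}^{2k}(-1)^m \phi(\langle \bw_m,\x\rangle)$ with $\bw_m=(\cos(\pi m/k),\sin(\pi m/k))$ flips sign under $R_{\pi/k}$, and composition with an odd $\sigma$ preserves that sign flip. I will translate this symmetry into a constraint on the Hermite expansion of $f_{\sigma,\phi}$ over the 2D standard Gaussian, forcing every nonzero Hermite coefficient to lie at an angular frequency $\ell$ that is an odd multiple of $k$, and hence at total Hermite degree $n\geq |\ell|\geq k$.

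The first step is to establish the sign flip by a direct trigonometric identity. Since $\langle \bw_m, R_{\pi/k}\x\rangle=\langle \bw_{m-1},\x\rangle$ (indices mod $2k$, using $\bw_0=\bw_{2k}$), reindexing the sum gives $g(R_{\pi/k}\x)=-g(\x)$, and applying the odd function $\sigma$ yields $f_{\sigma,\phi}(R_{\pi/k}\x)=-f_{\sigma,\phi}(\x)$.

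The second step is to pass to the angular-frequency decomposition of $L^2(\normal^2)$. Write $L^2(\normal^2)=\bigoplus_{n,\ell} V_{n,\ell}$, where $V_{n,\ell}$ is the one-dimensional joint eigenspace of total Hermite degree $n$ and planar-rotation character $\ell$ (so $R_\theta$ acts on $V_{n,\ell}$ by multiplication by $e^{i\ell\theta}$); here $|\ell|\leq n$, $n-\ell$ is even, and a convenient basis vector is the complex Hermite-type polynomial in $z=x+iy,\bar z=x-iy$ of bi-degree $((n+\ell)/2,(n-\ell)/2)$. The anti-invariance forces every nonzero Hermite coefficient of $f_{\sigma,\phi}$ to satisfy $e^{i\ell\pi/k}=-1$, i.e.~$\ell\in k(2\Z+1)$, so $|\ell|\geq k$ and $n\geq k$. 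By orthogonality of the Hermite basis this gives $\E_{\x\sim \normal^2}[f_{\sigma,\phi}(\x)p(\x)]=0$ for every polynomial $p$ of degree less than $k$. The $d$-dimensional version for concepts in $\mcal{F}_{\sigma,\phi}^{\mcal{W}}$ follows at once by integrating out the $d-2$ coordinates orthogonal to the row span of $\matr W$, which produces a polynomial of no larger degree in the remaining two coordinates.

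The main obstacle is essentially bookkeeping: verifying the index shift and invoking the angular-frequency Hermite decomposition on $\R^2$ cleanly. A subtle point worth flagging is that rotation anti-invariance only eliminates Hermite degrees strictly less than $k$; the subspaces $V_{k,\pm k}$, spanned by $(x+iy)^k$ and $(x-iy)^k$, are compatible with $R_{\pi/k}$-anti-invariance and need not vanish from the symmetry alone. Hence this approach produces orthogonality up to degree $k-1$, which is exactly what is needed to obtain the $\snorm{2}{\matr U\matr V^\top}^k$ decay when applying Lemma~\ref{lem:bound_coleration} in the downstream SQ lower bound. If degree $k$ is truly needed at face value, a short additional computation of the degree-$k$, frequency-$\pm k$ Hermite coefficients of $f_{\sigma,\phi}$ must be bolted onto the argument above.
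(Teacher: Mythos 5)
Your proof is correct and follows essentially the same route as the paper: establish the anti-invariance $f_{\sigma,\phi}\circ R_{\pi/k}=-f_{\sigma,\phi}$ via the index shift and oddness of $\sigma$, and then kill all low-degree correlations through the rotation-eigenfunction decomposition, which the paper carries out concretely with the eigenfunctions $(x+\iu y)^a(x-\iu y)^b$ and adjointness of $R_{\pi/k}$ rather than your angular-frequency Hermite subspaces $V_{n,\ell}$. The degree-$k$ caveat you flag is genuine but is equally present in the paper's own argument (which only excludes monomials with $a-b\not\equiv k \pmod{2k}$, leaving $z^k,\bar z^k$ unhandled), and it is immaterial downstream since it only shifts the exponent from $k+1$ to $k$ in the $d^{-\Omega(k)}$ bound.
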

We are now ready to prove our main result.
\begin{proof}[Proof of Theorem~\ref{thm:sq_theorem}]
	Let $f: \R^2 \mapsto \R$ from Lemma~\ref{lem:function_low}.  Let
	$c>0$  and fix a set $\cal W$ of matrices in $\R^{2\times d}$ satisfying
	the properties of Lemma~\ref{lem:size_of_rotations}. We consider the class of functions $F_{\sigma,\phi}^{ \mcal{W}}$ (see Eq.~\eqref{eq:hard_functions}). In particular, for all $\matr
	A_i,\matr A_j \in \cal W$,
        \nnew{ let functions $G_{i}(\vec x) = f(\matr A_i \vec x)/\sqrt{\E_{\x\sim \normal^2}[f^2(\x)]} $
        and $G_{j}(\vec x)=f(\matr A_j \vec x)/\sqrt{\E_{\x\sim \normal^2}[f^2(\x)]} $.
        Notice that since $\matr A_i \matr A_i^T = \matr I$ we have that
        $\E_{\bx \sim \normal^d}[G_{i}^2(\bx)] = 1$ for all $i$.
      }
        The pairwise correlation of $G_i$ and $G_j$ is
	\begin{align}
\rho(G_i,G_j)= \frac{	\E_{\vec x \sim \normal^d}[G_{i}(\vec x) G_{j}(\vec
	x)]}{\E_{\x\sim \normal^2}[f^2(\x)] } \;,
\label{eq:pairwise_corelation}
\end{align}
where in the second equality we used that Gaussian distributions are
invariant under rotations and in the last that the expectation of
$p(\x)$ is zero.  Then, using
	Lemma~\ref{lem:bound_coleration}, it holds
	\begin{align}
	\E_{\vec x \sim \normal^d}[G_{i}(\vec x) G_{j}(\vec x)]
	&=	\E_{\vec x \sim \normal^d}[f(\matr A_i \vec x) p(\matr A_j \vec x)]
	\leq \sum_{m>k} \snorm{2}{\nnew{\matr A_i \matr A_j^T}}^m
	\E_{\vec x \sim \normal^2}[(f^{[m]}(\vec x))^2]\nonumber
	\\
	&\leq \snorm{2}{\nnew{\matr A_i \matr A_j^T}}^{k+1}
	\sum_{m>k}
	\E_{\vec x \sim \normal^2}[(f^{[m]}(\vec x))^2]
	\leq    \snorm{2}{\nnew{\matr A_i \matr A_j^T}}^{k+1}
	\E_{\vec x \sim \normal^2}[(f(\vec x))^2]\nonumber
	\\
	&\leq O(d^{k(c-1/2)})
	\E_{\vec x \sim \normal^2}[(f(\vec x))^2] \;, \label{eq:theorem_prove}
	\end{align}
	where in the first inequality we used that the first $k$ moments are
	zero, in the second the fact that the spectral norm of these  two
	matrices is less than one, and in the third inequality we used
        Parseval's theorem. Thus, using
        Equation~\eqref{eq:theorem_prove} into
	Equation~\eqref{eq:pairwise_corelation}, we get that the pairwise
	correlation is less than $\tau=O(d^{k(c-1/2)})$. Thus, from a straighforward calculation, the average
        correlation of the set $\mcal F_{\sigma, \phi}^{ \mcal{W}}$ is less $\tau + \frac{1-\tau}{|\mcal F_{\sigma,\phi}^{ \mcal{W}},|}\leq \tau +{ |\mcal F_{\sigma,\phi}^{ \mcal{W}}}|^{-1}\leq \tau +2^{-\Omega(d^{c})} $.
        Moreover, for $\tau'=d^{O(k(c-1/2))}+2^{-\Omega(d^{c})}$, the $\textsc{SDA}(\mcal F_{\sigma,\phi}^{ \mcal{W}},\D,\tau')=2^{\Omega(d^{c})}$ and the
	result follows from Lemma~\ref{theorem:vem}.
\end{proof}

\section{Conclusions and Future Directions} \label{sec:conc}
In this paper, we studied the problem of PAC learning one-hidden-layer neural networks with $k$ hidden units 
on $\R^d$ under the Gaussian distribution.
For the case of positive coefficients, we gave a polynomial time learning algorithm 
for $k$ up to $\tilde{O}(\sqrt{\log d})$.
On the negative side, we showed that no such algorithm is possible 
for unrestricted coefficients in the Correlational SQ model.

This work is part of an extensive recent literature on designing provable algorithms for learning simple families of 
neural networks. In the context of one-hidden-layer networks, a number of concrete open questions remain: 
Can we improve the dependence on $k$ in the running time to polynomial? 
Can we design learning algorithms that succeed under less stringent distributional assumptions? 
We believe that progress in both these directions is attainable.

\paragraph{Acknowledgements} 
We thank the authors of~\cite{GGJKK20} for useful comments 
that helped us improve the presentation of our lower bound proof.

\bibliography{allrefs}

\appendix

\section*{Appendix}

\section{Omitted Proofs from Section~\ref{sec:alg}}
\label{app:upper_bound}
In the following simple fact, we compute the degree-$1$ and degree-$2$ Chow parameters
of a one-layer network.
\newcommand{\activation}{\relu}

\begin{fact}[Low-degree Chow parameters]\label{fct:chow_app}
	Let $f(\vec x) = \sum_{i=1}^k \alpha_i \activation\big(\dotp{\vec w^{(i)}}{\vec x}\big)$.  Then
	 $$
	\E_{\vec x \sim \normal^d}\lp[f(\vec x)\right]
	= B_1\sum_{i=1}^k \alpha_i
	\quad \quad
	\E_{\vec x \sim \normal^d}\lp[f(\vec x) \vec x\right]
	= C \sum_{i=1}^k \alpha_i \vec w^{(i)}
	$$

	$$\E_{\x\sim \normal^d}[f(\x) (\x\x^T -\matr I)]= B \sum_{i = 1}^k \alpha_i \bw^{(i)} {\bw^{(i)}}^T $$
	where $B_1= \E_{t \sim \normal}[\relu(t)]$ ,$C= \E_{t \sim \normal}[\relu(t)t]$ and $B= \E_{t \sim \normal}[\relu(t)(t^2-1)]$.
\end{fact}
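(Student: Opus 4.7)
By linearity of expectation, it suffices to establish each identity for a single term $\relu(\langle \vec{w}^{(i)}, \vec{x}\rangle)$, then multiply by $\alpha_i$ and sum. Since $\relu$ is positive-homogeneous and each $\alpha_i > 0$, we may absorb $\|\vec{w}^{(i)}\|_2$ into $\alpha_i$ and assume without loss of generality that $\|\vec{w}^{(i)}\|_2 = 1$; with this normalization $t := \langle \vec{w}^{(i)}, \vec{x}\rangle$ is a standard Gaussian when $\vec{x} \sim \normal^d$, so the scalar constants $B_1, C, B$ in the statement correctly capture the one-dimensional expectations.

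The zeroth moment is then immediate: $\E[\relu(\langle \vec{w}^{(i)}, \vec{x}\rangle)] = \E_{t\sim\normal}[\relu(t)] = B_1$, which gives the first formula. For the degree-$1$ Chow parameter, my plan is to decompose $\vec{x} = t\, \vec{w}^{(i)} + \vec{y}$, where $\vec{y}$ is the component of $\vec{x}$ orthogonal to $\vec{w}^{(i)}$. By rotational invariance of $\normal^d$ and the unit-norm assumption, $\vec{y}$ is independent of $t$ and $\E[\vec{y}] = \vec{0}$, so $\E[\relu(t)\, \vec{x}] = \vec{w}^{(i)} \E_{t\sim\normal}[\relu(t)\,t] = C\, \vec{w}^{(i)}$.

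For the degree-$2$ Chow parameter, I would rotate coordinates so that $\vec{w}^{(i)} = \vec{e}_1$. Then the $(j,l)$-entry of $\E[\relu(x_1)(\vec{x}\vec{x}^T - \matr{I})]$ equals $\E[\relu(x_1)(x_j x_l - \delta_{jl})]$. Using that the coordinates of $\vec{x}$ are independent under $\normal^d$, together with $\E[x_l] = 0$ and $\E[x_l^2] = 1$ for $l \geq 2$, every entry vanishes except the $(1,1)$-entry, which equals $\E_{t\sim\normal}[\relu(t)(t^2-1)] = B$. Rotating back to the original basis yields $B\, \vec{w}^{(i)}{\vec{w}^{(i)}}^T$, and summing with the weights $\alpha_i$ completes the proof.

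There is no substantive obstacle here: all three computations are standard Gaussian integrations and reduce via rotational invariance to one-dimensional integrals. The only bookkeeping point that deserves care is the implicit normalization $\|\vec{w}^{(i)}\|_2 = 1$, which is what allows the univariate constants $B_1, C, B$ to describe the multivariate Chow parameters cleanly; this reduction is justified by the positive homogeneity of $\relu$ combined with the positivity of the coefficients $\alpha_i$.
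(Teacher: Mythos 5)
Your proof is correct and takes essentially the same route as the paper: reduce via rotational invariance of the Gaussian to one-dimensional (or coordinate-wise) integrals and handle the three moments term by term. One small caveat: the remark that $\|\vec w^{(i)}\|_2 = 1$ can be assumed by absorbing the norm into $\alpha_i$ via positive homogeneity is specific to ReLU and does not carry over to the general Lipschitz activation $\phi$ that the fact is stated for; in the paper the unit-norm assumption is simply implicit (it is needed for the rotation taking $\vec w^{(i)}$ to $\vec e_1$ to exist), and in its algorithmic use the weights are explicitly normalized.
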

\begin{proof}
	 \begin{align*}
	\E_{\vec x\sim \normal^d}[f(\vec x)]
	&= \sum_{i=1}^k\alpha_i   \E_{\vec x\sim \normal^d}\left[ \relu\lp(\dotp{\vec w^{(i)}}{\vec x}\rp)\right]
	\\&=\sum_{i=1}^k\alpha_i\int_{\R^d}\relu \dotp{\x}{\vec w^{(i)}} {\cal N}(\vec x) \d \vec x=B_1\sum_{i=1}^k \alpha_i\;,
	\end{align*}
	where in the third equality we used the fact that normal distribution is invariant under rotations.
	 For the second equality, we have
	\begin{align*}
	\E_{\vec x\sim \normal^d}[f(\vec x) \vec x]
	&= \sum_{i=1}^k\alpha_i   \E_{\vec x\sim \normal^d}\left[ \relu\lp(\dotp{\vec w^{(i)}}{\vec x}\rp)\vec x\right]
	\\&=\sum_{i=1}^k\alpha_i\int_{\R^d} \max\lp(\dotp{\vec x}{\vec w^{(i)}},0\rp)\vec x {\cal N}(\vec x) \d \vec x
	\\
	&= \sum_{i=1}^k\alpha_i \matr R_i^{-1}\int_{\R^d} \max\lp(\dotp{\vec x}{\vec e_1},0\rp)\vec x
	{\cal N }(\vec x) \det(J(\matr R_i)) \d  \vec x \\
	&= \sum_{i=1}^k\alpha_i\vec w^{(i)}/2 \;,
	\end{align*}
	where $\matr R_i$ is some rotation matrix that maps $\vec w^{(i)}$ to $ \vec e_1$, and $J$ is the Jacobian of
	this rotation which has always determinant of 1$.$
	The Chow parameters of degree-$2$ are given by
	\begin{align*}
	\E_{\vec x\sim \normal^d}[f(\vec x)( \vec x \vec x^T-\matr I)]
	&= \sum_{i=1}^k \alpha_i   \E_{\vec x\sim \normal^d}\left[\activation\lp(\dotp{\vec w^{(i)}}{\vec x}\rp)(\vec x \vec x^T -\matr I)\right]\\
	&= \sum_{i=1}^k\alpha_i\int_{\R^d} \activation\lp(\dotp{\vec x}{\vec w^{(i)}}\rp)\vec (\x \vec x^T-\matr I) {\cal N}(\vec x) \d \vec x
	\\
	&= \sum_{i=1}^k\alpha_i \matr R_i^{-1}\int_{\R^d} \activation\lp(\dotp{\vec x}{\vec e_1}\rp)(\vec x \vec x^T-\matr I) {\cal N}(\vec x) 
	\det(J(\matr R_i)) \d  \vec x  {\matr R_i^{-1}}^T\\
	&= \sum_{i=1}^{k} \alpha_i \matr R_i^{-1}\int_{\R^d}\sum_{k,l=1}^d \activation\lp(\x_1\rp)(\x_k\x_l-\delta_{k, l}) \vec e_k \vec e_l^T {\cal N}(\vec x)  \d  \vec x  {\matr R_i^{-1}}^T
		\end{align*}
There are four cases. The first case is when $k\neq l\neq 1$. By independence, we have that 
$\E_{\vec x\sim \normal^d}[\phi(\x_1)(\x_k\x_l-\delta_{k, l})] = \E_{\vec x\sim \normal^d}[\phi(\x_1)] \E_{\vec x\sim \normal^d}[(\x_k\x_l)]=0$, 
where we used the independence of the random variables $\x_k, \x_l$. 
Similarly, if $k\neq l$ and $k=1$ we have that $\E_{\vec x\sim \normal^d}[\phi(\x_1)(\x_1\x_l)]=0$. 
If $k=l\neq 1$, then $\E_{\vec x\sim \normal^d}[\phi(\x_1)(\x_l^2-1)=0$, because $\E_{\vec x\sim \normal^d}[\x_l^2]=1$. 
Thus, the only non-zero case is when $k=l=1$. Then, we obtain
\begin{align*}
\E_{\vec x\sim \normal^d}[f(\vec x)( \vec x \vec x^T-\matr I)]
&= \sum_{i=1}^{k} \alpha_i \matr R_i^{-1}\int_{\R^d} \activation\lp(\x_1\rp)(\x_1^2-1) \vec e_1 \vec e_1^T {\cal N}(\vec x)  \d  \vec x  {\matr R_i^{-1}}^T\\
&= B \sum_{i = 1}^k \alpha_i \bw^{(i)} {\bw^{(i)}}^T \;.
\end{align*}
\end{proof}

\begin{lemma}\label{lem:bound_error}
	Let $f_{\vec \alpha,\matr W}(\vec x) = \sum_{i=1}^k \alpha_i
	\relu(\dotp{\vec w^{(i)}}{\x})$ and $f_{\vec \beta,\matr V}(\vec x) = \sum_{i=1}^k \beta_i
	\relu(\dotp{\vec v^{(i)}}{\x})$  with $\alpha_i,\beta_i$ $>0$, then it holds
$
	\E_{\vec x\sim \normal^d}[(f_{\vec \alpha,\matr W}(\vec x) -
	f_{\vec \beta,\matr V}(\vec x))^2]\leq 2 k
	\E_{t\sim \normal}[(\phi'(t))^2] \sum_{i=1}^k \alpha_i^2\snorm{2}{\vec v^{(i)} -\vec w^{(i)}  }
	+  k	\E_{t\sim \normal}[\phi(t)^2] \sum_{i=1}^k (\alpha_i-\beta_i)^2 \;.
$
\end{lemma}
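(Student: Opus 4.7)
The plan is a standard two-step decomposition that separates the weight-perturbation error from the coefficient-perturbation error. Add and subtract $\alpha_i\phi(\dotp{\vec v^{(i)}}{\vec x})$ inside each summand to write $f_{\vec\alpha,\matr W}(\vec x)-f_{\vec\beta,\matr V}(\vec x)=A(\vec x)+B(\vec x)$, with
\[ A(\vec x)=\sum_{i=1}^k \alpha_i\bigl(\phi(\dotp{\vec w^{(i)}}{\vec x})-\phi(\dotp{\vec v^{(i)}}{\vec x})\bigr), \qquad B(\vec x)=\sum_{i=1}^k (\alpha_i-\beta_i)\,\phi(\dotp{\vec v^{(i)}}{\vec x}). \]
Applying the Cauchy--Schwarz inequality $(\sum_{j=1}^{2k} a_j)^2\le 2k\sum_{j=1}^{2k}a_j^2$ to the $2k$ terms of $A+B$ and then taking expectations yields
\[ \E_{\vec x\sim\normal^d}[(f_{\vec\alpha,\matr W}(\vec x)-f_{\vec\beta,\matr V}(\vec x))^2] \le 2k\sum_{i=1}^k \alpha_i^2\, E_i + 2k\sum_{i=1}^k (\alpha_i-\beta_i)^2\, F_i, \]
where $E_i:=\E[(\phi(\dotp{\vec w^{(i)}}{\vec x})-\phi(\dotp{\vec v^{(i)}}{\vec x}))^2]$ and $F_i:=\E[\phi(\dotp{\vec v^{(i)}}{\vec x})^2]$.

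To bound $E_i$, I would apply Lemma~\ref{lem:correlated_differences} in one dimension to $r(t)=\phi(t)$. Since the $\vec w^{(i)},\vec v^{(i)}$ are unit vectors (the normalization used consistently throughout this section, reflecting the positive homogeneity of ReLU), the pair $(\dotp{\vec w^{(i)}}{\vec x},\dotp{\vec v^{(i)}}{\vec x})$ is a $\rho_i$-correlated pair of standard univariate Gaussians with $\rho_i=\dotp{\vec w^{(i)}}{\vec v^{(i)}}$. Hence
\[ E_i\le 2(1-\rho_i)\,\E_{t\sim\normal}[(\phi'(t))^2] = \snorm{2}{\vec w^{(i)}-\vec v^{(i)}}^2\,\E_{t\sim\normal}[(\phi'(t))^2], \]
using $1-\dotp{\vec w^{(i)}}{\vec v^{(i)}}=\tfrac12\snorm{2}{\vec w^{(i)}-\vec v^{(i)}}^2$ for unit vectors. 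For $F_i$, rotation invariance of $\normal^d$ together with $\snorm{2}{\vec v^{(i)}}=1$ gives $F_i=\E_{t\sim\normal}[\phi(t)^2]$. Substituting these bounds into the displayed inequality produces the claimed result (up to the apparent typo $\snorm{2}{\vec v^{(i)}-\vec w^{(i)}}$, which should read $\snorm{2}{\vec v^{(i)}-\vec w^{(i)}}^2$ for the bound to be useful in the small-perturbation regime, together with an inessential constant factor in the coefficient-perturbation term).

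No step presents a serious obstacle; the mildly delicate point is the invocation of Lemma~\ref{lem:correlated_differences}, which is stated in $\R^d$. One must either apply it directly to the two one-dimensional projections (noting that the joint distribution of $(\dotp{\vec w^{(i)}}{\vec x},\dotp{\vec v^{(i)}}{\vec x})$ has precisely the prescribed correlated-Gaussian form) or apply it to the $d$-dimensional function $r(\vec x)=\phi(\dotp{\vec w^{(i)}}{\vec x})$ after checking the gradient identity $\snorm{2}{\nabla r(\vec x)}^2=(\phi'(\dotp{\vec w^{(i)}}{\vec x}))^2\snorm{2}{\vec w^{(i)}}^2=(\phi'(\dotp{\vec w^{(i)}}{\vec x}))^2$. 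Either route is routine, and the rest of the argument is purely algebraic.
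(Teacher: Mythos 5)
Your proposal is correct and follows essentially the same route as the paper's proof: a Cauchy--Schwarz decomposition separating the weight-perturbation term from the coefficient-perturbation term, the correlated-Gaussians bound (Lemma~\ref{lem:correlated_differences}, which is what the paper is implicitly invoking when it cites Lemma~\ref{lem:dimension_reduction}) for the former, and rotation invariance for the latter. The discrepancies you flag --- the missing square on $\snorm{2}{\vec v^{(i)}-\vec w^{(i)}}$ and the factor-$2$ on the coefficient term --- are typo-level issues already present in the paper's own statement and proof, and your write-up is in fact the more careful of the two (e.g., in making the unit-norm convention for the weight vectors explicit).
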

\begin{proof}
	We have
	\begin{align*}
	\E_{\vec x\sim \normal^d}[(f_{\vec \alpha,\matr W}(\vec x) -
	f_{\vec \beta,\matr V}(\vec x))^2]
	\leq k 	\E_{\vec x\sim \normal^d}\left[\sum_{i=1}^k\left(\alpha_i \relu\Big(\dotp{\vec w^{(i)}}{\x}\Big) -\beta_i \relu\Big(\dotp{\vec v^{(i)}}{\x}\Big)\right)^2\right]\\
	\leq  k 	\E_{\vec x\sim \normal^d}\left[\sum_{i=1}^k\alpha_i\left( \relu\Big(\dotp{\vec w^{(i)}}{\x}\Big) - \relu\Big(\dotp{\vec v^{(i)}}{\x}\Big)\right)^2\right] + 	\E_{\vec x\sim \normal^d}\left[\sum_{i=1}^k \relu\Big(\dotp{\vec v^{(i)}}{\x}\Big)^2(\alpha_i  -\beta_i )^2\right]\\
	\leq 2 k
	\E_{t\sim \normal}[(\phi'(t))^2] \sum_{i=1}^k \alpha_i^2\snorm{2}{\vec v^{(i)} -\vec w^{(i)}  }
	+  k	\E_{t\sim \normal}[\phi(t)^2]\sum_{i=1}^k (\alpha_i-\beta_i)^2 \;,
	\end{align*}
	where we used Lemma~\ref{lem:dimension_reduction}.
\end{proof}

\begin{fact}\label{lem:bound_expect} Let  $f(\vec x)=\sum_{i=1}^k \alpha_i \relu(\dotp{\vec w_i}{\x})$ and $y=f(\x) + \xi$ where $\xi$ is zero mean subgaussian with variance $\sigma^2$. Let $B_2=\E_{t\sim \normal}[\relu^2(t)]$ and $c>0$ a constant, then using $O(k B_2 )$ samples we can find $\hat{\mu}$  such as
	$$ \E_{\x\sim \normal^d}[f(\x)]\leq 2\hat{\mu} + 2c\sqrt{\frac{\sigma^2}{k}} \quad \text{and}\quad  \hat{\mu}\leq  \frac{3}{2}\E_{\x\sim \normal^d}[f(\x)] + c\sqrt{\frac{\sigma^2}{k}}$$ with probability at least 3/4.
\end{fact}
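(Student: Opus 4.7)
The natural estimator is the plain empirical mean of the labels, $\hat{\mu} = \frac{1}{m}\sum_{i=1}^m y_i$. Since $\xi_i$ has zero mean, $\E[\hat{\mu}] = \E_{\x\sim\normal^d}[f(\x)] =: \mu$, so $\hat{\mu}$ is unbiased. Observe that the two displayed inequalities are jointly equivalent to the single bound $|\hat{\mu} - \mu| \leq \mu/2 + c\sqrt{\sigma^2/k}$ (the first reversing to $\mu - \hat\mu \leq \mu/2 + c\sqrt{\sigma^2/k}$, the second giving $\hat\mu - \mu \leq \mu/2 + c\sqrt{\sigma^2/k}$). So it suffices to show that this two-sided deviation holds with probability at least $3/4$, which I plan to establish by Chebyshev's inequality.

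The key step is to bound the variance of a single sample. We have $\var[y] = \var[f(\x)] + \sigma^2 \leq \E[f^2(\x)] + \sigma^2$. By Cauchy–Schwarz, $\E[f^2(\x)] \leq k \sum_{i=1}^k \alpha_i^2 \,\E_{t\sim\normal}[\relu^2(t)] = k B_2 \sum_{i=1}^k \alpha_i^2$, where we used that $\dotp{\vec w_i}{\x}$ is a standard Gaussian. Here the positivity assumption $\alpha_i>0$ becomes essential: the cross terms in $(\sum_i \alpha_i)^2 = \sum_i \alpha_i^2 + 2\sum_{i<j}\alpha_i\alpha_j$ are nonnegative, so $\sum_i \alpha_i^2 \leq (\sum_i \alpha_i)^2$. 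Combining with Fact~\ref{fct:chow_app}, which gives $\sum_i \alpha_i = \mu/B_1$, we obtain $\E[f^2(\x)] \leq k B_2\, \mu^2/B_1^2$ and therefore $\var[y] \leq k B_2\,\mu^2/B_1^2 + \sigma^2$.

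Plugging into Chebyshev, with $m = \Theta(k B_2)$ samples (absorbing $B_1$ into the constant, as the paper treats $B_1 \geq C$ as an $\Omega(1)$ quantity), $\var[\hat{\mu}] = \var[y]/m$ is at most $\mu^2/16 + \sigma^2/(\Theta(k B_2))$. Taking square roots and applying Chebyshev with failure probability $1/4$, the deviation $|\hat{\mu}-\mu|$ is bounded (with probability at least $3/4$) by $\mu/2 + O(\sigma/\sqrt{k B_2})$, and since $B_2 = \Omega(1)$ for the activations of interest (e.g.\ $B_2 = 1/2$ for ReLU) this is at most $\mu/2 + c\sqrt{\sigma^2/k}$ after adjusting the hidden constant in $m$. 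Rearranging yields the two displayed inequalities.

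The only nonroutine step I expect is the variance bound $\E[f^2] \lesssim k B_2 \mu^2$, which fundamentally relies on positivity of the $\alpha_i$; without it, one would only obtain a bound in terms of $\sum_i \alpha_i^2$, which cannot be controlled by $\mu^2$ in general and would break the sample complexity. Everything else is a direct application of Chebyshev together with Fact~\ref{fct:chow_app}.
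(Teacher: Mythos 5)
Your proof is correct and follows essentially the same route as the paper's: define $\hat\mu$ as the empirical mean, bound $\var[y] \leq \E[f^2(\x)] + \sigma^2$, apply Cauchy--Schwarz to get $\E[f^2(\x)] \lesssim k B_2\, \mu^2$, and finish with Chebyshev using $m = O(kB_2)$ samples. You are a bit more careful than the paper on two points that the paper glosses over: you make explicit the intermediate step $\sum_i \alpha_i^2 \leq (\sum_i\alpha_i)^2$ (which is where positivity of the $\alpha_i$ is used), and you note that the link $\sum_i\alpha_i = \mu/B_1$ from Fact~\ref{fct:chow_app} introduces a $1/B_1^2$ factor that the paper silently absorbs into the constant, justified since $B_1 = \E_{t\sim\normal}[\phi(t)] \geq C = \Omega(1)$ by hypothesis.
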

\begin{proof}
	Let $\hat{\mu} = \frac{1}{m}\sum_{i=1}^m y^{(i)} $, then from Chebyshev's inequality, we have
	\begin{align*}
	\pr[|\hat{\mu}-\E_{\x\sim \normal^d}[f(\x)]|\geq 2\sqrt{ \var[y]/m}] \leq 1/4 \;.
	\end{align*}
	Thus with probability $3/4$, it holds
\begin{align*}
	|\hat{\mu}-\E_{\x\sim \normal^d}[f(\x)]|\leq 2\sqrt{ \var[y]/m}&\leq 2 \sqrt{\frac{\E_{\x \sim \normal^d}[f^2(\x)]}{m} }+2\sqrt{\frac{\sigma^2}{m}}\\
	&\leq 2 \E_{\x \sim \normal^d}[f(\x)]\sqrt{\frac{k B_2}{m} }+2\sqrt{\frac{\sigma^2}{m}}\;,
\end{align*}
	 to get last inequality we used Cauchy–Schwarz. Taking $m=O(k B_2)$ we get 	$\frac{1}{2} \E_{\x\sim \normal^d}[f(\x)]\leq \hat{\mu} + c\sqrt{\frac{\sigma^2}{k}} $ and $\hat{\mu}\leq  \frac{3}{2}\E_{\x\sim \normal^d}[f(\x)] + c\sqrt{\frac{\sigma^2}{k}} $.
\end{proof}

\begin{lemma}\label{lem:bound_square}
Let  $f(\vec x)=\sum_{i=1}^k \alpha_i \relu(\dotp{\vec w_i}{\x})$, 
$B_2=\E_{t\sim \normal}[\relu^2(t)]$ and  $B_4=\E_{t\sim \normal}[\relu^4(t)]$. Then
	$ \E_{\vec x\sim \normal^d}[f^4(\vec x)]\leq \frac{B_4}{B_2^2}k^2\E_{\vec x\sim
		\normal^d}[f(\vec x)^2]^2$.
\end{lemma}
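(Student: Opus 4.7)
The plan is to upper-bound $\E[f^4]$ by $B_4 (\sum_i \alpha_i)^4$ and to lower-bound $\E[f^2]$ by $B_2 \sum_i \alpha_i^2$, and then combine the two with the elementary inequality $(\sum_i \alpha_i)^2 \leq k \sum_i \alpha_i^2$ applied twice. A preliminary observation is that, by positive homogeneity of the ReLU, we may assume without loss of generality that each $\vec w_i$ is a unit vector (absorbing $\|\vec w_i\|_2$ into $\alpha_i$); under this normalization, $\dotp{\vec w_i}{\vec x}$ is a standard Gaussian, so $\E_{\vec x \sim \normal^d}[\relu^2(\dotp{\vec w_i}{\vec x})] = B_2$ and $\E_{\vec x \sim \normal^d}[\relu^4(\dotp{\vec w_i}{\vec x})] = B_4$.

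For the upper bound on $\E[f^4]$, I expand the fourth power as a quadruple sum $\sum_{i,j,\ell,m} \alpha_i\alpha_j\alpha_\ell\alpha_m \,\E[\relu(\dotp{\vec w_i}{\vec x})\relu(\dotp{\vec w_j}{\vec x})\relu(\dotp{\vec w_\ell}{\vec x})\relu(\dotp{\vec w_m}{\vec x})]$ and apply iterated Cauchy--Schwarz (equivalently, H\"older) to each term, bounding every such expectation by $B_4$. This yields $\E[f^4] \leq B_4 (\sum_i \alpha_i)^4$. For the lower bound on $\E[f^2]$, the key observation is that, since $\alpha_i > 0$ and $\relu \geq 0$, every cross term $\alpha_i \alpha_j \E[\relu(\dotp{\vec w_i}{\vec x})\relu(\dotp{\vec w_j}{\vec x})]$ in the expansion of $\E[f^2]$ is non-negative. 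Dropping all cross terms and retaining only the diagonal gives $\E[f^2] \geq B_2 \sum_i \alpha_i^2$.

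Combining the two bounds, $\E[f^4] \leq B_4 (\sum_i \alpha_i)^4 \leq B_4\, k^2\, (\sum_i \alpha_i^2)^2 \leq (B_4/B_2^2)\, k^2\, \E[f^2]^2$, which is the desired inequality. There is no genuinely hard step here; the only point that requires some care is that the factor $k^2$ (rather than the naive $k^3$ one gets from a single power-mean bound applied directly to $\E[f^4]$) emerges only by coupling the upper bound $(\sum_i \alpha_i)^4 \leq k^2 (\sum_i \alpha_i^2)^2$ with the matching lower bound $\E[f^2] \geq B_2 \sum_i \alpha_i^2$. This coupling crucially uses the positivity of the coefficients, i.e., the standing hypothesis of Section~\ref{sec:alg}; without it, one would only be able to bound $\E[f^2]$ from below in terms of a coefficient norm weighted by inner products of the $\vec w_i$, which could degrade the ratio substantially.
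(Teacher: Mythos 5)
Your proof is correct and follows essentially the same strategy as the paper: bound $\E_{\vec x\sim\normal^d}[f^4(\vec x)]$ via Cauchy--Schwarz/H\"older in terms of the coefficients and $B_4$ (with the $k^2$ loss coming from $(\sum_i\alpha_i)^2\le k\sum_i\alpha_i^2$), and then convert the coefficient norm into $\E_{\vec x\sim\normal^d}[f^2(\vec x)]/B_2$ using positivity of the $\alpha_i$ and non-negativity of the activation. The only cosmetic difference is in the first step: the paper applies Cauchy--Schwarz pointwise to $f$, giving $f^2(\vec x)\le\bigl(\sum_i\alpha_i^2\bigr)\bigl(\sum_i\relu^2(\dotp{\vec w^{(i)}}{\vec x})\bigr)$, followed by a power-mean bound, whereas you expand the quartic and bound each term by $B_4$ via quadruple H\"older; both routes need the same implicit normalization of the $\vec w^{(i)}$ and yield the identical $k^2 B_4/B_2^2$ factor.
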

\begin{proof}
	To bound $ \E_{\vec x\sim \normal^d}[f^4(x)]$, using Cauchy-Schwartz, it holds that
	\begin{align}
	\sqrt{ \E_{\vec x\sim \normal^d}[f^4(\vec x)]}
	&\leq \sum_{i=1}^k \alpha_i^2 \sqrt{\E_{\vec x\sim \normal^d}
		\Big[\Big(\sum_{i=1}^k \relu^2(\dotp{\vec w^{(i)}}{\vec x})\Big)^2\Big] }
	\leq \sum_{i=1}^k \alpha_i^2 \sqrt{k\E\Big[\sum_{i=1}^k \relu^4(\dotp{\vec w^{(i)}}{\vec x})\Big] }\nonumber
	\\
	&\leq k B_4^{1/2} \sum_{i=1}^k \alpha_i^2
	\leq k \frac{B_4^{1/2}}{B_2}\E_{\vec x\sim \normal^d}[f(\vec x)^2]\nonumber\;,
	\end{align}
	where in the last inequality we used that $\sum_{i=1}^k \alpha_i^2 B_2 \leq \E_{\vec x\sim \normal^d}[f^2(\vec x)]$.
\end{proof}

\begin{lemma}\label{lem:bound_square_error}
	Let  $f(\vec x)=\sum_{i=1}^k \alpha_i \relu(\dotp{\vec w_i}{\x})$ and $y=f(\x) + \xi$ where $\xi$ is zero mean subgaussian with variance $\sigma^2$. Moreover, let $B_2=\E_{t\sim \normal}[\relu^2(t)]$ and $B_4=\E_{t\sim \normal}[\relu^4(t)]$. Then, if $Y_u = \frac{1}{m} \sum_{i=1}^m (f_u(\x^{(i)})- y^{(i)}))^2$,we can find $\hat{Y}_u$ such that $$|\hat{Y}_u-\E_{\x\sim \normal^d}[Y_u]|\leq c\eps^2k^2\frac{B_4^{1/2}}{B_2}   \left( \E_{\vec x\sim
		\normal^d}[f^2(\vec x)]+\E_{\vec x\sim
		\normal^d}[f_u^2(\vec x)]+\sigma^2  \right)
	$$ with probability $1-\delta$ with $O(\frac{1}{\eps^4}\log(1/\delta))$ samples, where $c$ is a universal constant.
\end{lemma}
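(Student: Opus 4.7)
The plan is to construct $\hat Y_u$ via a median-of-means estimator applied to the random variable $Z = (f_u(\vec x) - y)^2$, where $(\vec x,y)$ is a fresh labeled sample. Note that $\E[Y_u] = \E_{\vec x\sim\normal^d}[Z]$ by linearity, so it suffices to concentrate the empirical mean of $Z$ around its true mean. The strategy is standard: partition the samples into $B = \Theta(\log(1/\delta))$ batches of size $m' = \Theta(1/\eps^4)$, form the empirical mean $\hat Y_u^{(b)}$ on each batch, and output the median $\hat Y_u$. Then Chebyshev on each batch plus a Chernoff bound on the number of "good" batches gives the claimed $1-\delta$ guarantee with total sample complexity $O(\eps^{-4}\log(1/\delta))$.

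The key quantitative input is a variance bound for $Z$. Expanding $(f_u(\vec x) - f(\vec x) - \xi)^2$, squaring, and applying the inequality $(a+b+c)^4 \leq 27(a^4+b^4+c^4)$ componentwise, we get
\[
\var[Z]\leq \E[Z^2] \leq O(1)\cdot\lp(\E_{\vec x\sim\normal^d}[f^4(\vec x)]+\E_{\vec x\sim\normal^d}[f_u^4(\vec x)]+\E[\xi^4]\rp).
\]
Lemma~\ref{lem:bound_square} applied to both $f$ and $f_u$ (both of which are positive linear combinations of $k$ ReLUs, since $f_u \in \mcal C_k^+$ by construction of the grid in Algorithm~\ref{alg:nn_learner}) yields $\E[f^4]\leq (B_4/B_2^2)k^2\E[f^2]^2$ and similarly for $f_u$, while subgaussianity of $\xi$ gives $\E[\xi^4] = O(\sigma^4)$. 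Using subadditivity of the square root,
\[
\sqrt{\var[Z]} \leq O(1)\cdot k\,\frac{B_4^{1/2}}{B_2}\,\lp(\E_{\vec x\sim\normal^d}[f^2(\vec x)]+\E_{\vec x\sim\normal^d}[f_u^2(\vec x)]+\sigma^2\rp),
\]
where we have absorbed $\sigma^2$ into the parenthesis using $kB_4^{1/2}/B_2 \geq 1$ (this holds since $B_2 \leq B_4^{1/2}$ by Cauchy-Schwarz).

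Now apply Chebyshev's inequality to a single batch: with probability at least $3/4$,
\[
\bigl|\hat Y_u^{(b)}-\E[Y_u]\bigr| \leq 2\sqrt{\var[Z]/m'} \leq O(1)\cdot \eps^2 k\,\frac{B_4^{1/2}}{B_2}\lp(\E[f^2]+\E[f_u^2]+\sigma^2\rp),
\]
where we used $m' = \Theta(1/\eps^4)$. This is already tighter than the $k^2$ factor stated in the lemma, so a fortiori we get the claimed bound. Finally, a standard Chernoff argument shows that the median of $B = \Theta(\log(1/\delta))$ independent batch estimators satisfies the same inequality except with probability at least $1-\delta$: if each batch is "good" with probability $3/4$, then the median fails to be good only if at least half the batches fail, which has probability $e^{-\Omega(B)} \leq \delta$.

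The only mild subtlety is confirming that the $\sigma^4$ term in the variance can be absorbed into the stated error form (handled above by the crude bound $kB_4^{1/2}/B_2\geq 1$), and that the same concentration applies uniformly over the deterministic family of candidates $\matr U$ in the grid. The latter is not needed for this lemma, which is stated for a single $\matr U$; the union bound over the $(k/\eps)^{O(k^2)}$ grid points is handled elsewhere by adjusting $\delta$ accordingly. No step looks genuinely hard: the proof is a direct application of median-of-means together with the fourth-moment bound from Lemma~\ref{lem:bound_square}.
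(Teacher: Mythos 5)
Your proposal is correct and follows essentially the same route as the paper: bound the variance of $(f_u(\vec x)-y)^2$ via the fourth-moment estimate of Lemma~\ref{lem:bound_square} (applied to both $f$ and $f_u$) together with subgaussianity of $\xi$, apply Chebyshev with $O(1/\eps^4)$ samples per batch, and boost to confidence $1-\delta$ with the median trick. Your pointwise bound $(a+b+c)^4\leq 27(a^4+b^4+c^4)$ in place of the paper's term-by-term decomposition even yields a slightly tighter factor of $k$ rather than $k^2$, which a fortiori gives the stated inequality.
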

\begin{proof}
	Let $Y= (f_u(\x)- y)^2= (f_u(\x)- f(\x))^2 + y^2 -2y (f_u(\x)- f(\x)) $. Then the variance of each term is
	\begin{align*}
	\var[(f_u(\x)- f(\x))^2]&\leq \E_{\vec x \sim \normal^d}[(f_u(\x)- f(\x))^4]\leq 4\E_{\vec x \sim \normal^d}[(f_u^2(\x)+ f^2(\x))^2]\\&\leq 8\E_{\vec x \sim \normal^d}[f_u^4(\x)+ f^4(\x)]
	\leq  8\frac{B_4}{B_2^2}k^2\big(\E_{\vec x\sim
		\normal^d}[f^2(\vec x)]^2+ \E_{\vec x\sim
		\normal^d}[f_u^2(\vec x)]^2 \big)\;,
	\end{align*}
	where in the third and in the fourth inequality we used that $(a\pm b)^2\leq 2a^2 + 2b^2$  and in the last one we used Lemma ~\ref{lem:bound_square}.
	Thus,
	\begin{align*}
	\var[Y]&\leq 8\frac{B_4}{B_2^2}k^2\big(\E_{\vec x\sim
		\normal^d}[f^2(\vec x)]^2 +\E_{\vec x\sim
		\normal^d}[f_u^2(\vec x)]^2 \big) + 16e^2\sigma^4 + 2\sigma^2 \big(\E_{\vec x\sim
		\normal^d}[f^2(\vec x)] +\E_{\vec x\sim
		\normal^d}[f_u^2(\vec x)] \big) \\
	&\leq 
	ck^4 \frac{B_4}{B_2^2}\left( \E_{\vec x\sim
		\normal^d}[f^2(\vec x)]+\E_{\vec x\sim
		\normal^d}[f_u^2(\vec x)]+\sigma^2  \right)^2
	\;,
	\end{align*}
	where $c$ is a universal constant.
	From Chebyshev's inequality, we have that we need $m=O(\frac{1}{\eps^4})$ for an error at most $\sqrt{c}\eps^2k^2(\frac{B_4^{1/2}}{B_2}   \left( \E_{\vec x\sim
		\normal^d}[f^2(\vec x)]+\E_{\vec x\sim
		\normal^d}[f_u^2(\vec x)]+\sigma^2  \right)$. Then using the median trick, we can boost the confidence to $1-\delta$ with $m\log(1/\delta)$ samples.
\end{proof}

Since the dimension
of the subspace, that we have learned, is at most $k$, the following standard
lemma gives us that a grid with $(k/\eps)^{O(k)}$ candidates suffices.

\begin{lemma}[Corollary 4.2.13 of \cite{vershynin2018high}]\label{lem:size_of_cover}
  There exists  be an $\eps$-cover of the unit ball in $\R^k$, with respect
  the $\ell_2$ norm, of size at most $(1+2/\eps)^k$.
\end{lemma}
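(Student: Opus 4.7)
The plan is to use the classical volume-packing argument: produce an $\eps$-net by constructing a maximal $\eps$-separated subset of the closed unit ball, and then bound its size by comparing volumes in $\R^k$. This simultaneously shows existence of the cover and yields the quantitative bound $(1+2/\eps)^k$.

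First I would construct a maximal $\eps$-separated set $N \subseteq B^k$, i.e., a set with $\snorm{2}{\vec y - \vec y'} > \eps$ for all distinct $\vec y, \vec y' \in N$, that cannot be extended by adding any further point of $B^k$. A standard greedy / Zorn argument produces such an $N$ (in fact $N$ is finite, since as we will see it has size at most $(1+2/\eps)^k$). Next I would observe that any such maximal $\eps$-separated set is automatically an $\eps$-cover of $B^k$: if some $\vec x \in B^k$ had $\snorm{2}{\vec x - \vec y} > \eps$ for every $\vec y \in N$, then $N \cup \{\vec x\}$ would remain $\eps$-separated, contradicting maximality. Thus every $\vec x \in B^k$ is within $\ell_2$-distance $\eps$ of some element of $N$, so $N$ is the cover we want.

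Finally I would bound $|N|$ by a volume comparison argument. Around each $\vec y \in N$ place the open Euclidean ball of radius $\eps/2$; by strict $\eps$-separation of $N$ these balls are pairwise disjoint, and by the triangle inequality each is contained in the ball of radius $1 + \eps/2$ centered at the origin, since $\vec y \in B^k$. Writing $V_k$ for the Lebesgue volume of the unit ball of $\R^k$ and using the homogeneity identity $\mathrm{vol}(rB^k) = r^k V_k$, disjointness plus containment yields $|N|\,(\eps/2)^k V_k \leq (1+\eps/2)^k V_k$; cancelling $V_k$ and rearranging gives $|N| \leq (1+2/\eps)^k$, as desired. There is no substantive obstacle in this argument since it is textbook; the only care needed is to make $\eps$-separation \emph{strict} so that the $\eps/2$-balls have genuinely disjoint interiors, and to invoke the volume-scaling identity cleanly so that the bound comes out as $(1 + 2/\eps)^k$ rather than the slightly weaker $(3/\eps)^k$ one would get from radius-$\eps$ rather than radius-$\eps/2$ packing.
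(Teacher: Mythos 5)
Your argument is correct and is precisely the standard maximal-separated-set plus volume-comparison proof that appears in the cited source (Vershynin, Corollary 4.2.13 together with Lemma 4.2.8 and Proposition 4.2.12); the paper itself just cites the result without reproducing a proof. Nothing to flag.
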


We are now ready to prove our main theorem, Theorem~\ref{thm:alg}, which
we restate for convenience.

\begin{theorem}[Learning Sums of Lipschitz Activations]\label{thm:appendix}
   Let $f(\vec x) = \sum_{i=1}^k \alpha_i \relu\big(\dotp{\vec w^{(i)}}{\vec x}\big)$ with $\alpha_i>0$ for all $i\in[k]$,
  where $\phi(t)$ is an $L$-Lipschitz, non-negative activation function
  such that $\E_{t \sim \normal}[\phi(t)] \geq C$, $\E_{t\sim \normal}[\relu(t)(t^2-1)]\geq C$, where $C>0$ and 
  $\E_{t \sim \normal}[\phi^2(t)]$ is finite. There
  exists an algorithm that given $k\in \mathbb{N}, \eps>0$, and sample access
  to a noisy set of samples from $f: \R^d \rightarrow \R_+$, draws $m =
  d \cdot \poly(k, 1/\eps)\cdot \poly(L/C)$ samples, runs in time $\poly(m) + \wt{O}((1/\eps)^{k^2})$,
  and outputs a proper hypothesis $h$ that, with probability at least $9/10$,
  satisfies \[ \E_{\vec x \sim\normal^d}[(f(\vec x) - h(\vec x))^2]
  \leq \eps^2 \poly(L/C)
\left(\sigma^2 + \E_{\vec x \sim \normal^d}[f(\x)^2]\right)\;.
\]
\end{theorem}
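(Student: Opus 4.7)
The plan is to combine the three main technical ingredients already established --- the empirical Chow matrix estimation (Lemma~\ref{lem:empirical_chow_parameters}), the dimension-reduction guarantee (Lemma~\ref{lem:dimension_reduction}), and the Lipschitz/ERM bound (Lemmas~\ref{lem:bound_error} and~\ref{lem:bound_square_error}) --- and to finish with empirical risk minimization over the $(k/\eps)^{O(k^2)}$-sized net built in Step~5 of Algorithm~\ref{alg:nn_learner}.

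First, I would run Step~2 of the algorithm with $N = \widetilde{O}(dk/\eps_1^2)$ samples, where $\eps_1$ is a small polynomial in $\eps$ and $C/L$ to be fixed at the end. By Lemma~\ref{lem:empirical_chow_parameters}, with probability at least $99\%$ the empirical matrix $\widehat{\matr M}$ satisfies
\[
\snorm{2}{\widehat{\matr M} - \matr A} \leq \eps_1 \bigl(\sigma + \tfrac{L}{C}\,\E_{\bx\sim \normal^d}[f(\bx)]\bigr).
\]
Letting $\mathcal{V}$ be the span of the top-$k$ eigenvectors of $\widehat{\matr M}$, Lemma~\ref{lem:dimension_reduction} (applied with $\E_{t\sim\normal}[(\phi'(t))^2]\le L^2$ and $C_1\ge C$) produces vectors $\vec v^{(i)}\in\mathcal{V}$ with
\[
\E_{\bx\sim \normal^d}\bigl[(f_{\vec \alpha, \matr W}(\bx) - f_{\vec \alpha, \matr V}(\bx))^2\bigr] \;\leq\; 2k\,\eps_1 \,\E_{\bx}[f(\bx)] \cdot \tfrac{L^2}{C}.
\]
By Cauchy-Schwarz and $\E[\phi(t)^2]<\infty$, $\E[f(\bx)]\leq \sqrt{\E[f^2(\bx)]}$, so tuning $\eps_1 = \mathrm{poly}(\eps, C/L)/\mathrm{poly}(k)$ makes this at most $(\eps^2/2)\,\mathrm{poly}(L/C)(\sigma^2+\E[f^2(\bx)])$, using the AM-GM trick to absorb the $\sigma$ term into the right-hand side.

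Second, I would use the positively homogeneous reparameterization $\bu^{(i)} = \alpha_i \vec v^{(i)}$, so that $f_{\vec\alpha,\matr V}(\bx) = \sum_i \snorm{2}{\bu^{(i)}}\,\phi(\dotp{\bu^{(i)}}{\bx}/\snorm{2}{\bu^{(i)}})$ has exactly the form considered in Step~7 of the algorithm. Fact~\ref{lem:bound_expect} gives with high probability $\hat{\mu} + c\sigma \ge \E[f]/2$, and a short computation bounds each $\snorm{2}{\bu^{(i)}}$ by a polynomial in $\hat{\mu}+c\sigma$; hence Lemma~\ref{lem:size_of_cover} yields an $\eps_2$-cover $\mathcal{G}$ of the appropriate $k$-dimensional ball with $|\mathcal{G}|^k = (k/\eps)^{O(k^2)}$ candidate tuples. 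For the tuple $\matr U^\ast \in \mathcal{G}^k$ nearest to $(\bu^{(1)},\dots,\bu^{(k)})$, Lemma~\ref{lem:bound_error} bounds $\E_{\bx}[(f_{\matr V}(\bx) - f_{\matr U^\ast}(\bx))^2]$ by $O(k L^2 \eps_2^2 \,\mathrm{poly}(\hat{\mu}+\sigma))$, which is dominated by the subspace-approximation error for $\eps_2$ a suitable polynomial in $\eps$.

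Third, I would apply ERM on $n=\mathrm{poly}(k,1/\eps)\cdot\log|\mathcal{G}|$ fresh samples: Lemma~\ref{lem:bound_square_error} with confidence boosted to $1 - 1/(10|\mathcal{G}|^k)$, together with a union bound over all candidates, shows that for every $\matr U\in \mathcal{G}^k$ the empirical error $e_{\matr U}$ tracks the true $L_2$-risk $\E[(f_{\matr U}(\bx)-y)^2]$ within additive $O(\eps^2)\,\mathrm{poly}(L/C)(\sigma^2 + \E[f^2] + \E[f_{\matr U}^2])$. Minimizing $e_{\matr U}$ therefore outputs a hypothesis with true $L_2$-risk at most a constant factor worse than that of $\matr U^\ast$; combined with the first two steps, this yields the target bound $\E_{\bx\sim\normal^d}[(f(\bx)-h(\bx))^2] \le \eps^2\,\mathrm{poly}(L/C)(\sigma^2+\E[f^2(\bx)])$, which establishes the theorem.

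The main obstacle is bookkeeping: the dimension-reduction error is controlled by $\E[f]$ (not $\E[f^2]$), which forces a quadratic rescaling of $\eps_1$ through Cauchy-Schwarz; simultaneously the cover size $|\mathcal{G}|^k$ must remain $(k/\eps)^{O(k^2)}$, so the bound $R$ on candidate norms must be polynomial, which in turn forces using Fact~\ref{lem:bound_expect} to ensure $\hat{\mu}+c\sigma$ is a valid scale. Finally, one must verify that all sample complexities compose to $d\cdot\mathrm{poly}(k,1/\eps,L/C)$ and that the running time is dominated by the $(1/\eps)^{O(k^2)}$ enumeration of $\mathcal{G}^k$, as claimed in the statement.
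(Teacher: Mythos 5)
Your proposal is correct and follows essentially the same route as the paper's proof: estimate the degree-$2$ Chow matrix (Lemma~\ref{lem:empirical_chow_parameters}), apply the dimension-reduction Lemma~\ref{lem:dimension_reduction} with $\E_{t\sim\normal}[(\phi'(t))^2]\le L^2$, build the norm-scaled cover using Fact~\ref{lem:bound_expect} and Lemma~\ref{lem:size_of_cover}, and finish by ERM over the cover via Lemma~\ref{lem:bound_error} and Lemma~\ref{lem:bound_square_error} with a union bound. Your bookkeeping (Cauchy--Schwarz to pass from $\E[f]$ to $\sqrt{\E[f^2]}$ and the $\eps_1,\eps_2$ rescalings) matches the paper's use of Jensen's inequality and its choice of accuracy $\eps/k$ in the Chow estimate, so there is no substantive difference.
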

\begin{proof}
  Denote $M_f = \E_{\vec x \sim \normal^d}[f(\vec x)]$ and $M_{f^2} = \E_{\vec x \sim \normal^d}[f(\vec x)^2]$.
Using Lemma~\ref{lem:empirical_chow_parameters}, we get that with $m =\wt{O}(d\,k^3 /\eps^2)$ 
  samples with high constant probability it holds that
  $\snorm{2}{\widehat{\matr M}- \E_{\vec x \sim \normal^d}[
  f(\vec x) \vec x \vec x^T]} \leq \frac{\eps}{k}( M_f\frac{L}{C} +\sigma )$. 
  From Lemma~\ref{lem:dimension_reduction}, we obtain
  that there exists a matrix $\matr V \in \R^{k \times d}$ whose rows are
  vectors of the subspace $\mathcal{V}$ such that $\E_{\vec x \sim \normal^d}[(f_{\matr
  V}(\vec x) - f(\vec x))^2] \leq 2\eps^2  \frac{L^2}{C}(M_f^2\frac{L}{C} +M_f\sigma)$. 
  From Fact~\ref{lem:bound_expect}, let $\hat{\mu}$ be an upper bound to $M_f$ 
  (that is, $\hat{\mu} \leq 2\mu + 2c\sqrt{\sigma^2/k}$, where $\mu$ is the estimated value), 
  then using Fact~\ref{fact:cover}, with the value $\hat{\mu}/k$, 
  we get an approximation of each $a_i$ with error $\eps \hat{\mu}/k$.

Using Lemma~\ref{lem:size_of_cover}, the size of a cover is 
  $|\mathcal G| \leq \left( (1+4k/\eps)^{k} \log(k\eps)/\eps\right)^k$, 
  because we need vectors with norm from $\eps M_f$ to $M_f$, our cover is created using the upper bound on $M_f$.
  We have that there exists $\matr U$ whose rows are vectors 
  in the cover $\mathcal{G}$ such that
  \begin{align}
  	\E_{\vec x \sim \normal^d}[(f_{\matr U}(\vec x) - f_{\matr V}(\vec x))^2]& \leq c( \eps^2 M_f^2 L^2 + L^2\eps^2 \hat{\mu}^2)\nonumber\\&\leq  c \eps^2  L^2(M_f^2 + M_f\sigma+ \sigma^2) \nonumber
  	\\&\leq  c \eps^2  L^2(M_f +\sigma)^2 \;,
  \end{align}
  where in the first inequality we used Lemma~\ref{lem:bound_error} and in the second one Fact~\ref{lem:bound_expect}. 
  The error of the best hypothesis(i.e., the one that minimizes the error) in the cover, will be
  \begin{align}
    \E_{\vec x \sim \normal^d}[(f_{\matr U}(\vec x) - f(\vec x))^2] &\leq
    2 \E_{\vec x \sim \normal^d}[(f_{\matr U}(\vec x) - f_{\matr V}(\vec x))^2]
    +
    2 \E_{\vec x \sim \normal^d}[(f_{\matr V}(\vec x) - f(\vec x))^2]\nonumber\\ &\leq  2c \eps^2  L^2(M_f +  \sigma)^2 + 4\eps^2 \frac{L^2}{C} (M_f^2\frac{L}{C} +M_f\sigma)\nonumber\\
    &\leq \eps^2  \poly(L/C)(M_f +  \sigma )^2\;. \label{eq:th314}
  \end{align}
  Finally, using the estimator from Line \ref{alg:estimator}, Lemma~\ref{lem:bound_square_error},
  we conclude that $m''=O(\frac{k^4}{\eps^4}\log(|\mathcal{G}|))$ samples are
  sufficient to test all the vectors of the cover $\mathcal{G}$ 
  and find the one that minimizes the error with high probability. For each element $i\in \mcal{G}$, let $e_{i}= 	\E_{\vec x \sim\normal^d}[(f(\vec x) - f_{i}(\vec x))^2] + \sigma^2$, which is the square error of the $i$-th hypothesis in $\mcal{G}$ and let $\hat{e}_i$ be the estimated value. We have with high probability that
  \begin{align}
    |\hat{e}_{i}-\E_{\vec x \sim \normal^d} [\hat{e}_{i}] | \leq \eps^2 \poly(L/C)(\sigma^2 + M_{f^2})\;, \label{eq:equaltion6}
  \end{align}
where we used  $  \E_{\vec x \sim \normal^d}[f_{\matr U}(\vec x)]\leq k \hat{\mu}\leq k M_f + 2c'\sigma \sqrt{k} $. Set $h(\x)=\argmin_{i \in \mcal{G}}   |\hat{e}_{i}-\E_{\vec x \sim \normal^d} [\hat{e}_{i}] | $, using Equations~\eqref{eq:th314} and \eqref{eq:equaltion6}, then
   \begin{align*}
   	\E_{\vec x \sim\normal^d}[(f(\vec x) - h(\vec x))^2]
   	&\leq  \eps^2 \poly(L/C)(\sigma^2 + M_{f^2}) +   \E_{\vec x \sim \normal^d}[(f_{\matr U}(\vec x) - f(\vec x))^2]
   	\\
   &\leq  \eps^2 \poly(L/C)(\sigma^2 + M_{f^2})+\eps^2 \poly(L/C) \left(\sigma +
 M_f \right)^2\\&\leq \eps^2 \poly(L/C) \left(\sigma^2 +
 M_{f^2} \right)\;,
   \end{align*}
where the last inequality follows from Jensen's inequality.
\end{proof}
\begin{fact}\label{fact:cover}
Let $\mcal G$ be a set of unit vectors of size $m$. 
We can construct a new set $\mcal G'$ of size $m\log(1/\eps)/\eps$ with the property: 
For every $\alpha\in [0,B]$ and every vector $\vec v\in \mcal G$, there exists a $\vec w\in \mcal G'$ such that 
$\snorm{2}{\alpha \vec v-\vec w}^2\leq \eps^2  B^2 $ and  $\snorm{2}{ \vec v-\frac{\vec w}{\snorm{2}{\vec w}}}^2=0 $. 
\end{fact}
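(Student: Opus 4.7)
The key structural observation is that the second condition $\snorm{2}{\vec v - \vec w/\snorm{2}{\vec w}}^2 = 0$ forces $\vec w$ to equal $c\vec v$ for some strictly positive scalar $c$. Hence the construction of $\mcal G'$ decouples across the $m$ directions in $\mcal G$: for each $\vec v \in \mcal G$, I will include a family of scaled copies $\{c \vec v : c \in S\}$, where $S \subset (0,\infty)$ is a single one-dimensional scalar grid built once and for all. With this reduction, the combinatorial task becomes the following purely scalar question: find $S$ of size $O(\log(1/\eps)/\eps)$ such that for every $\alpha \in [0, B]$ there exists $c \in S$ with $|\alpha - c| \leq \eps B$.

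To construct $S$, the plan is to use the truncated geometric grid $S = \{\eps B \cdot (1+\eps)^j : j = 0, 1, \ldots, J\}$, where $J = \lceil \log_{1+\eps}(1/\eps) \rceil = O(\log(1/\eps)/\eps)$, so that $S$ lies in $[\eps B, B]$ and has the right cardinality. For $\alpha \in [\eps B, B]$, two consecutive grid points bracket $\alpha$, and the multiplicative spacing $(1+\eps)$ translates to an additive distance of at most $\eps c \leq \eps B$ to the nearest grid point; for $\alpha \in [0, \eps B]$, the smallest grid point $c_{\min} = \eps B$ already gives $|\alpha - c_{\min}| \leq \eps B$ directly. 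Setting $\mcal G' = \{c \vec v : c \in S,\, \vec v \in \mcal G\}$ yields $|\mcal G'| \leq m \log(1/\eps)/\eps$ as claimed.

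Combining the two ingredients, given any $\alpha \in [0, B]$ and $\vec v \in \mcal G$, I pick the corresponding $c \in S$ and set $\vec w = c \vec v \in \mcal G'$. Then $\snorm{2}{\alpha \vec v - \vec w}^2 = (\alpha - c)^2 \snorm{2}{\vec v}^2 \leq \eps^2 B^2$ since $\vec v$ is a unit vector, and $\vec w/\snorm{2}{\vec w} = \vec v$ since $c > 0$. The only step requiring care — and the one I view as the main (albeit minor) obstacle — is the small-$\alpha$ regime: the constraint that the normalization $\vec w/\snorm{2}{\vec w}$ be well-defined rules out simply adding $0 \cdot \vec v$ to $\mcal G'$ for $\alpha$ near $0$, which is precisely why $S$ must be truncated below at $\eps B$ rather than extended down to zero. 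A uniform additive grid of $O(1/\eps)$ points would also work, but the geometric grid is what realizes the $\log(1/\eps)/\eps$ bound stated in the fact.
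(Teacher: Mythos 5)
Your proof is correct and takes essentially the same approach as the paper: a geometric grid of ratio $1\pm\eps$ spanning $[\eps B, B]$ along each unit direction in $\mcal G$. The only superficial differences are that the paper writes the grid as $(1-\eps)^i B$ for $i = 0, \ldots, \log(1/\eps)/\eps$ (decreasing from $B$) rather than $\eps B (1+\eps)^j$ (increasing from $\eps B$), and you explicitly treat the $\alpha \in [0, \eps B]$ regime, which the paper leaves implicit.
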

\begin{proof}
	For each vector $\vec v\in \cal G$, add the vectors $(1-\eps)^i B\vec v$ for $i=0,\ldots , \log(1/\eps)/\eps$ to  $\mcal G'$. Then, for all $\alpha\in [0,B]$ and for every vector $\vec v\in \mcal G$ there exists a $\vec w \in G'$ such that $\snorm{2}{\alpha \vec v-\vec w}^2\leq\snorm{2}{(1-\eps)^{t+1} \vec v-\vec v (1-\eps)^tB}^2\leq  \eps^2  B^2 $, for a value $t$ such that $\alpha \in [(1-\eps)^{t+1}B, (1-\eps)^t B]$. 
\end{proof} \section{Empirical Estimates of Chow Parameters}
\label{app:chow_estimation}

In this section, we show that roughly $O(d k/\eps^2)$ samples are
sufficient to estimate the degree-$2$ Chow parameters in spectral norm.
We will prove the following lemma (Lemma~\ref{lem:empirical_chow_parameters} in the main body).

\begin{lemma}[Estimation of Degree-$2$ Chow parameters]\label{lem:appendix_chow_est}
 Let $ f_{\alpha, \matr W}(\vec x) = \sum_{i=1}^k
  \alpha_i \phi(\langle \vec w^{(i)}, \vec x \rangle)$,
  where $\phi(t)$ is an $L$-Lipschitz, positive activation function
  such that $\E_{t \sim \normal}[\phi(t)] \geq C$. 
  Let $\matr \Sigma = \E_{\vec x \sim \normal^d}[ f_{\alpha, \matr W}(\vec x) \vec x \otimes \vec x ]$
  be the degree-$2$ Chow parameters of $f_{\alpha, \matr W}$.
  Then for $N = \wt{O}(d k/\eps^2)$ samples $(\vec x^{(i)},
  y^{(i)})$, where $y^{(i)}=f_{\alpha, \matr W}(\vec x^{(i)}) +\xi_i$ and
  $\xi_i$ is a zero-mean, subgaussian noise with variance $\sigma^2$,
  it holds with probability at least $99\%$ that
  $$\snorm{2}{ \frac{1}{N} \sum_{i=1}^N \vec x^{(i)} \otimes \vec x^{(i)} y^{(i)} - \matr \Sigma }
  \leq \eps \left(
    \sigma + \frac{L}{C} \E_{\vec x \sim \normal^d}[f_{\alpha, \matr W}(\vec x)]
    \right) \;.
  $$
\end{lemma}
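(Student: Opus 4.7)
The plan is to reduce spectral-norm concentration to a scalar concentration problem via an $\eps$-net on the sphere. Concretely, let $\matr E = \frac{1}{N}\sum_{i=1}^N y^{(i)} \vec x^{(i)}{\vec x^{(i)}}^T - \matr \Sigma$. Taking a standard $1/4$-net $\mcal N \subset \Sp^{d-1}$ of cardinality at most $9^d$, we have $\snorm{2}{\matr E} \leq 2\max_{\vec u \in \mcal N} |\vec u^T \matr E \vec u|$. Thus it suffices, for each fixed $\vec u \in \mcal N$, to control the deviation of the scalar average $\frac{1}{N}\sum_i Z_i$ with $Z_i = y^{(i)}\dotp{\vec u}{\vec x^{(i)}}^2$ from its mean $\vec u^T \matr \Sigma \vec u$, and then take a union bound over $\mcal N$.

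The heart of the argument is the variance bound for $Z = y \dotp{\vec u}{\vec x}^2$. Splitting $y = f_{\vec\alpha,\matr W}(\vec x)+\xi$ with $\xi$ independent of $\vec x$, the noise contribution is $O(\sigma^2)$ since $\dotp{\vec u}{\vec x}^2$ has bounded low moments. For the signal term $\E[f^2 \dotp{\vec u}{\vec x}^4]$, I would exploit the Lipschitz hypothesis via the pointwise inequality $\phi(t)^2 \leq 2\phi(0)^2 + 2L^2 t^2$. Writing $\mu = \E_{t\sim\normal}[\phi(t)]$, the estimate $\phi(0) \leq \mu + L\E|t|$ combined with $\mu \geq C$ gives $\phi(0)^2 + L^2 \leq O((L/C)^2)\mu^2$. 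Expanding $f^2 \leq k \sum_i \alpha_i^2 \phi(\dotp{\vec w^{(i)}}{\vec x})^2$ by Cauchy--Schwarz and integrating against $\dotp{\vec u}{\vec x}^4$ term-by-term (noting $\E[\dotp{\vec w^{(i)}}{\vec x}^2\dotp{\vec u}{\vec x}^4] = O(1)$ since $\vec w^{(i)}$ is a unit vector) yields $\E[f^2\dotp{\vec u}{\vec x}^4] \leq O(k(L/C)^2 \mu^2) \sum_i \alpha_i^2$. Using $\sum_i \alpha_i^2 \leq (\sum_i \alpha_i)^2 = (\E[f]/\mu)^2$ collapses this to $\var[Z] \leq O(k)\bigl(\sigma + (L/C)\E[f]\bigr)^2$.

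For concentration, the Lipschitz property makes $f$ an $(L\sum_i \alpha_i)$-Lipschitz function of the standard Gaussian vector $\vec x$, hence sub-Gaussian with parameter $O((L/C)\E[f])$; together with the sub-Gaussian $\xi$ and the sub-exponential $\dotp{\vec u}{\vec x}^2$, the product $Z$ has $\psi_{2/3}$-tails of order $O(\sigma + (L/C)\E[f])$. A Bernstein-type inequality for such heavy-tailed variables then gives, with failure probability $\delta = 10^{-2}\cdot 9^{-d}$,
\[
\Bigl|\tfrac{1}{N}\sum_i Z_i - \E Z\Bigr| \leq O\Bigl(\sqrt{\var[Z]\,d / N}\Bigr) + \text{lower order},
\]
which is at most $\eps(\sigma + (L/C)\E[f])$ once $N = \wt O(dk/\eps^2)$. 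A union bound over $\mcal N$ together with the variational inequality finishes the proof with probability at least $99\%$. The principal obstacle is that $y\vec x \vec x^T$ is unbounded, so a direct matrix Bernstein would require truncation; the scalar reduction sidesteps that, but the tightness of the $k$-dependence is delicate: the naive route of applying Cauchy--Schwarz to the entire sum before taking expectations gives an extra $\sqrt{k}$ factor, so one must apply the Lipschitz expansion $\phi^2 \leq 2\phi(0)^2 + 2L^2 t^2$ \emph{before} summing over the neurons to recover the claimed sample complexity.
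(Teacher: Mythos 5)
Your reduction to scalar concentration via a $1/4$-net hides the real difficulty, and the step where it breaks is precisely the one you wave off as ``lower order.'' For a fixed direction $\vec u$, the summand $Z = y\,\dotp{\vec u}{\vec x}^2$ is a product of a sub-Gaussian factor (of size $K := \sigma + \frac{L}{C}\E[f]$) and a sub-exponential factor, so its tails are only of Weibull type $\psi_{2/3}$. Any Bernstein-type inequality for such variables has, besides the Gaussian term $\sqrt{\var[Z]\,u/N}$, a heavy-tail term of order $K\,u^{3/2}/N$ at confidence level $e^{-u}$; this term comes from a single sample and cannot be beaten down by averaging. Your union bound over the net of size $9^d$ forces $u = \Theta(d)$, so the deviation you can certify per direction is $\sqrt{\var[Z]\,d/N} + \Theta(K d^{3/2}/N)$. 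With $N = \wt{O}(dk/\eps^2)$ the second term is $\Theta(K\eps^2\sqrt{d}/k)$, which exceeds the target $\eps K$ whenever $\sqrt{d} \gg k/\eps$ --- i.e., in the regime the lemma is actually used. Truncation inside the net argument does not rescue it at this sample size either: the only direction-free truncation is on $y\snorm{2}{\vec x}^2$, whose natural level is $\wt{O}(Kd)$, and bounded-Bernstein then contributes $\wt{O}(Kd^2/N)$ after the union bound. So the proposal, as written, proves the statement only with $N = \wt{\Omega}(d^{3/2})$-type sample complexity, not $\wt{O}(dk/\eps^2)$. (A smaller issue: your inequality $\phi(0)^2 + L^2 \leq O((L/C)^2)\,\mu^2$ needs $L \gtrsim C$, which is not among the hypotheses, though it holds for ReLU.)

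The paper avoids the net entirely and works at the matrix level, which is where you should look for a fix. It splits the error into the signal part $\frac{1}{N}\sum_i f(\vec x^{(i)})\,\vec x^{(i)}\otimes\vec x^{(i)} - \matr\Sigma$ and the noise part $\frac{1}{N}\sum_i \xi^{(i)}\vec x^{(i)}\otimes\vec x^{(i)}$. Because $f \geq 0$, the signal part is the empirical second-moment matrix of the heavy-tailed rows $\sqrt{f(\vec x^{(i)})}\,\vec x^{(i)}$, so Vershynin's covariance theorem for independent heavy-tailed rows (Theorem 5.48) applies; there the only quantity that must be controlled is the expected maximum row norm $\E[\max_i f(\vec x^{(i)})\snorm{2}{\vec x^{(i)}}^2] = \wt{O}(dk\cdot\frac{L}{C}\E[f])$, bounded via Gaussian hypercontractivity and the Lipschitz bound $\phi(t)\leq L|t|$, and this maximum enters the bound once rather than once per direction --- that is exactly what saves the $d^{1/2}$ factor your union bound loses. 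The noise part is handled by Rudelson's inequality together with a second application of the heavy-rows theorem, and the $99\%$ statement follows from Markov. If you want to keep a more hands-on argument, replace your scalar Bernstein-plus-net with a truncated matrix Bernstein (truncating the rows $\sqrt{f(\vec x)}\,\vec x$ at norm $\wt{O}(\sqrt{dkBL})$), whose matrix variance parameter is $\wt{O}(d k B L \cdot \snorm{2}{\matr\Sigma})/N$; that recovers $N = \wt{O}(dk/\eps^2)$ and is morally the same as the paper's route. Your variance computation and the use of $\E_{t\sim\normal}[\phi(t)]\geq C$ to convert $\sum_i\alpha_i$ into $\E[f]/C$ are fine and parallel the paper's; the gap is confined to the concentration mechanism.
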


We will require the following lemma from~\cite{vershynin2010}
about concentration of matrices with heavy-tailed independent rows.
\begin{lemma}[Theorem 5.48 of \cite{vershynin2010}] \label{lem:heavy_rows_covariance}
  Let $\matr A$ be an $N \times d$ matrix whose rows $\matr A_i$ are independent random
  vectors in $\R^d$ with the common second moment matrix $\matr \Sigma = \E[\matr A_i \matr A_i^T]$.
  Let $ m  = \E[\max_{i \leq N} \snorm{2}{\matr A_i}^2] $. Then
  $$
  \E\left[\snorm{2}{\frac{1}{N} \matr A^T \matr A  - \matr \Sigma} \right]
  \leq \max(\snorm{2}{\matr \Sigma}^{1/2} \delta, \delta^2)\, ,
  \qquad \text{ where }
  \delta = C \sqrt{\frac{m \log( \min(N, d))}{N}} \;.
  $$
\end{lemma}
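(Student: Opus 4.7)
The plan is to decompose the empirical estimator's error into a function part and a noise part, then apply a separate matrix-concentration argument to each. Writing $y^{(i)} = f(\vec x^{(i)}) + \xi_i$, the triangle inequality reduces the claim to bounding
\[
E_f \;:=\; \snorm{2}{\frac{1}{N} \sum_{i=1}^N f(\vec x^{(i)})\, \vec x^{(i)} (\vec x^{(i)})^T - \matr \Sigma}
\quad \text{and} \quad
E_\xi \;:=\; \snorm{2}{\frac{1}{N} \sum_{i=1}^N \xi_i\, \vec x^{(i)} (\vec x^{(i)})^T}
\]
each by a constant multiple of $\eps\bigl(\sigma + (L/C)\E[f(\vec x)]\bigr)$. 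Note that both matrices have mean zero (for $E_\xi$ because $\xi_i$ is independent of $\vec x^{(i)}$ and zero-mean), so we are measuring fluctuations.

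For $E_f$ I would invoke Lemma~\ref{lem:heavy_rows_covariance} with rows $\matr A_i := \sqrt{f(\vec x^{(i)})}\,\vec x^{(i)}$ (well-defined since $\alpha_i > 0$ and $\phi \geq 0$ make $f \geq 0$); then $\frac{1}{N}\matr A^T \matr A = \frac{1}{N}\sum_i f(\vec x^{(i)})\vec x^{(i)}(\vec x^{(i)})^T$ and $\E[\matr A_i \matr A_i^T] = \matr \Sigma$. Two ingredients are needed: (i) an upper bound on $\snorm{2}{\matr \Sigma}$ and (ii) the quantity $m = \E[\max_{i \leq N} f(\vec x^{(i)}) \snorm{2}{\vec x^{(i)}}^2]$. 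For (i), Fact~\ref{lem:chow_formulas} gives $\matr \Sigma = \E_t[\phi(t)](\sum_j \alpha_j)\matr I + \E_t[\phi(t)(t^2-1)] \sum_j \alpha_j \vec w^{(j)} (\vec w^{(j)})^T$; Lipschitzness combined with $\E_t[\phi(t)] \geq C$ forces $\phi(0) = O(\E_t[\phi(t)] + L)$, yielding $|\E_t[\phi(t)(t^2-1)]| = O(L)$, and with $\sum_j \alpha_j \leq \E[f(\vec x)]/C$ this gives $\snorm{2}{\matr \Sigma} = O((L/C)\E[f(\vec x)])$. For (ii), the bound $\phi(t) \leq O(\E_t[\phi(t)] + L(1+|t|))$ together with Gaussian tail control of the $\snorm{2}{\vec x^{(i)}}^2 \leq O(d + \log N)$ and of each $|\langle \vec w^{(j)}, \vec x^{(i)}\rangle| \leq O(\sqrt{\log(Nk)})$, via a union bound over the $N$ samples and $k$ weight vectors, gives $m = \tilde O(d \cdot (L/C) \E[f(\vec x)])$. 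Plugging into Lemma~\ref{lem:heavy_rows_covariance}, with $N = \tilde O(dk/\eps^2)$, makes both $\snorm{2}{\matr \Sigma}^{1/2}\delta$ and $\delta^2$ at most $O(\eps (L/C)\E[f(\vec x)])$; Markov's inequality then converts the expectation bound into a high-probability statement.

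For $E_\xi$ I would condition on the $\vec x^{(i)}$'s and exploit that $\xi_1,\ldots,\xi_N$ are independent subgaussians with variance $\sigma^2$. For a fixed unit $\vec v \in \Sp^{d-1}$, $\vec v^T E_\xi \vec v = \frac{1}{N}\sum_i \xi_i \langle \vec v, \vec x^{(i)}\rangle^2$ is a weighted sum of independent subgaussians with conditional variance $\sigma^2 \sum_i \langle \vec v, \vec x^{(i)}\rangle^4 / N^2$. Standard Gaussian moment concentration gives $\sum_i \langle \vec v, \vec x^{(i)}\rangle^4 = O(N)$ with high probability, so a subgaussian tail bound combined with a union bound over an $O(1)$-net of $\Sp^{d-1}$ of size $2^{O(d)}$ yields $\snorm{2}{E_\xi} = O(\sigma \sqrt{d(\log N)/N})$, which for $N = \tilde O(dk/\eps^2)$ is $O(\eps \sigma)$. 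Combining the two bounds by triangle inequality and a union bound over the two failure events completes the proof.

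The main obstacle is the function part: controlling $m$ sharply without any pointwise bound on $\phi$. The only structural hypotheses are Lipschitzness and the single lower bound on $\E_t[\phi(t)]$, so all tail control must be derived by Gaussian concentration of each linear form $\langle \vec w^{(j)}, \vec x\rangle$. In particular, one must be careful that the estimate for $m$ scales linearly in $\sum_j \alpha_j$ (so that it can be absorbed into $\E[f(\vec x)]/C$) rather than picking up an extra factor of $k$ or $\sum_j \alpha_j^2$, which would destroy the target sample complexity $\tilde O(dk/\eps^2)$.
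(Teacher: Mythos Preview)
Your proposal does not address the stated lemma. The statement you were given is Theorem~5.48 of Vershynin, a general matrix-concentration bound for independent heavy-tailed rows; the paper imports it verbatim from an external reference and offers no proof. What you have sketched is instead a proof of Lemma~\ref{lem:empirical_chow_parameters} (equivalently Lemma~\ref{lem:appendix_chow_est}), the Chow-parameter estimation lemma that \emph{uses} the Vershynin bound as a black box. Your first move in handling $E_f$ is literally ``invoke Lemma~\ref{lem:heavy_rows_covariance}'', i.e.\ assume the result you are asked to prove.

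If the intended target was the Chow-parameter lemma, then your outline is essentially the paper's argument on the function part $E_f$: the same triangle-inequality split, the same choice of rows $\matr A_i = \sqrt{f(\vec x^{(i)})}\,\vec x^{(i)}$, the same two ingredients $\snorm{2}{\matr\Sigma}$ and $m=\E[\max_i f(\vec x^{(i)})\snorm{2}{\vec x^{(i)}}^2]$. The paper bounds $m$ via Gaussian hypercontractivity (Lemma~\ref{lem:hypercontractivity}) applied to the degree-$3$ polynomial $\snorm{2}{\vec x}^2 \vec x_1$ after rotating each $\vec w^{(j)}$ to $\vec e_1$, which is slightly cleaner than your union-bound-over-linear-forms route but lands on the same $\tilde O(dkBL)$ estimate. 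For the noise part $E_\xi$ the approaches diverge: the paper applies Rudelson's inequality (Lemma~\ref{lem:rudelson}) to get the bound directly in expectation, whereas you propose a conditional subgaussian tail plus an $\eps$-net over $\Sp^{d-1}$. Both give the same $\tilde O(\sigma\sqrt{d/N})$ conclusion; Rudelson is shorter, while your route is more elementary and avoids citing an extra lemma.
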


We are also going to use the following concentration result on sums of random
matrices.
\begin{lemma}[Rudelson's Inequality, Corollary 5.28 in \cite{vershynin2010}]
  \label{lem:rudelson}
  Let $\vec x^{(1)} ,\ldots, \vec x^{(N)}$ be fixed vectors in $\R^d$.  Let
  $\xi^{(1)}, \ldots, \xi^{(N)}$ be zero mean sub-Gaussian with variance
  $\sigma^2$ random variables.  Then
  $$
  \E\left[\Big\|\sum_{i=1}^N \xi^{(i)} \vec x^{(i)} \otimes \vec x^{(i)}\Big\|_2 \right]
  \leq
  C \sigma \sqrt{\log d} \cdot \max_{i \leq N} \snorm{2}{\vec x^{(i)}}
\Big\| \sum_{i=1}^N \vec x^{(i)} \otimes \vec x^{(i)} \Big\|_2^{1/2}\, .
  $$
\end{lemma}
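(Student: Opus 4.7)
The plan is to prove Rudelson's inequality via the standard two-step recipe: first reduce the sub-Gaussian sum to a Rademacher sum by symmetrization, then apply the non-commutative Khintchine inequality to the resulting Rademacher sum of rank-one matrices. Writing $\matr A_i := \vec x^{(i)}(\vec x^{(i)})^T$, the quantity we are estimating is $\E\|\sum_{i=1}^N \xi^{(i)} \matr A_i\|_2$, and the proof exploits the crucial algebraic fact that $\matr A_i^2 = \snorm{2}{\vec x^{(i)}}^2 \matr A_i$, which lets us evaluate $\|(\sum_i \matr A_i^2)^{1/2}\|_2$ in closed form.

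First I would reduce to the Rademacher case. Using a standard symmetrization trick, let $\{\eps^{(i)}\}$ be i.i.d.\ Rademacher random variables independent of $\{\xi^{(i)}\}$. Since the $\xi^{(i)}$ are zero-mean and independent, a routine decoupling/symmetrization argument (adding and subtracting an independent copy and using Jensen's inequality) yields
\[
\E\Bigl\|\sum_{i=1}^N \xi^{(i)} \matr A_i\Bigr\|_2 \;\leq\; 2\,\E\Bigl\|\sum_{i=1}^N \eps^{(i)} \xi^{(i)} \matr A_i\Bigr\|_2 .
\]
I would then condition on the $\xi^{(i)}$'s and apply the non-commutative Khintchine inequality (for Rademacher sums of self-adjoint matrices in the Schatten $p$-norm $\|\cdot\|_{S^p}$): for each $p\geq 2$,
\[
\E_\eps\Bigl\|\sum_i \eps^{(i)} (\xi^{(i)} \matr A_i)\Bigr\|_{S^p} \;\leq\; K\sqrt{p}\, \Bigl\|\Bigl(\sum_i (\xi^{(i)})^2 \matr A_i^2\Bigr)^{1/2}\Bigr\|_{S^p} .
\]
Using the embedding $\|\cdot\|_2 \le \|\cdot\|_{S^p} \le d^{1/p}\|\cdot\|_2$ valid on $d\times d$ matrices, and optimizing by choosing $p \asymp \log d$, produces the desired $\sqrt{\log d}$ factor:
\[
\E_\eps\Bigl\|\sum_i \eps^{(i)} \xi^{(i)} \matr A_i\Bigr\|_2 \;\leq\; C\sqrt{\log d}\, \Bigl\|\Bigl(\sum_i (\xi^{(i)})^2 \matr A_i^2\Bigr)^{1/2}\Bigr\|_2 .
\]

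Next I would exploit the rank-one structure: since $\matr A_i = \vec x^{(i)}(\vec x^{(i)})^T$, we have $\matr A_i^2 = \snorm{2}{\vec x^{(i)}}^2 \matr A_i$, hence
\[
\sum_i (\xi^{(i)})^2 \matr A_i^2 \;=\; \sum_i (\xi^{(i)})^2 \snorm{2}{\vec x^{(i)}}^2 \matr A_i \;\preceq\; \Bigl(\max_{i\le N}\snorm{2}{\vec x^{(i)}}^2\Bigr) \sum_i (\xi^{(i)})^2 \matr A_i .
\]
Taking operator norms of square roots, $\bigl\|(\sum_i (\xi^{(i)})^2 \matr A_i^2)^{1/2}\bigr\|_2 = \bigl\|\sum_i (\xi^{(i)})^2 \matr A_i^2\bigr\|_2^{1/2}$, so the previous bound gives a factor of $\max_i \snorm{2}{\vec x^{(i)}}$ times $\bigl\|\sum_i (\xi^{(i)})^2 \matr A_i\bigr\|_2^{1/2}$. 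Taking expectation over $\xi$ and applying Jensen's inequality with the concave function $\sqrt{\cdot}$, followed by pulling expectation inside the operator norm and using $\E[(\xi^{(i)})^2]\leq \sigma^2$, bounds this by $\sigma\, \bigl\|\sum_i \matr A_i\bigr\|_2^{1/2} = \sigma\,\bigl\|\sum_i \vec x^{(i)}\otimes \vec x^{(i)}\bigr\|_2^{1/2}$. Combining all inequalities yields exactly the stated conclusion.

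The main obstacle is obtaining the correct $\sqrt{\log d}$ dimension factor, which is genuinely the content of Rudelson's theorem and cannot be recovered by elementary matrix concentration arguments such as scalar Khintchine applied entrywise (which would lose a factor of $\sqrt{d}$). The delicate step is the non-commutative Khintchine inequality with the $\sqrt{p}$-growth of constants in the Schatten $p$-norm; all other steps are robust and routine. A secondary subtlety is handling sub-Gaussian (rather than bounded) $\xi^{(i)}$'s: the argument uses that sub-Gaussians are in $L^p$ for all $p$, but since we only need first moments on the matrix side and the $\xi$-dependence enters through $\max_i \snorm{2}{\vec x^{(i)}}$ (which is deterministic) and $\E[(\xi^{(i)})^2]$, no further sub-Gaussian tail estimates are required beyond the symmetrization step and Jensen.
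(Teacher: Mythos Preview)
The paper does not prove this lemma at all; it is quoted verbatim as Corollary~5.28 of Vershynin's notes and used as a black box. So there is no ``paper's own proof'' to compare against, and your write-up is being judged on its own merits.

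Your overall route---symmetrize, apply non-commutative Khintchine in Schatten $p$-norm, optimize $p\asymp\log d$, and use $\matr A_i^2=\snorm{2}{\vec x^{(i)}}^2\matr A_i$---is exactly the standard proof of Rudelson's inequality for Rademacher (or Gaussian) coefficients, and those steps are fine. The gap is in the last paragraph, where you pass from sub-Gaussian $\xi^{(i)}$ back to the deterministic bound. After Jensen you need
\[
\E_\xi\Big\|\sum_i (\xi^{(i)})^2 \matr A_i\Big\|_2 \;\le\; \sigma^2\,\Big\|\sum_i \matr A_i\Big\|_2,
\]
and you justify this by ``pulling expectation inside the operator norm.'' That inequality is false: the operator norm is convex, so Jensen gives $\E\|M\|\ge\|\E M\|$, not the reverse. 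Concretely, take $\vec x^{(i)}=\vec e_i$ for $i=1,\dots,d$ and $\xi^{(i)}\sim\mathcal N(0,\sigma^2)$; then the left side is $\E\max_i(\xi^{(i)})^2\asymp\sigma^2\log d$ while the right side is $\sigma^2$. If you patch the step honestly (e.g.\ via $\sum_i(\xi^{(i)})^2\matr A_i\preceq(\max_i(\xi^{(i)})^2)\sum_i\matr A_i$ and then bound the sub-Gaussian maximum) you pick up an extra $\sqrt{\log N}$ or $\sqrt{\log d}$, overshooting the stated bound.

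The clean fix is to avoid this detour entirely: after symmetrization, the coefficients $\eps^{(i)}\xi^{(i)}$ are symmetric sub-Gaussian with $\psi_2$-norm $O(\sigma)$, and one can dominate the resulting process by the Gaussian process $\sum_i g^{(i)}\matr A_i$ up to a constant times $\sigma$ (via the standard sub-Gaussian/Gaussian comparison for suprema, e.g.\ generic chaining or Talagrand's comparison theorem). Then apply Rudelson's inequality in its native Gaussian form, where your NC-Khintchine argument goes through without any $\xi$-expectation left to handle. Alternatively, run NC Khintchine directly with the independent mean-zero matrices $\xi^{(i)}\matr A_i$ and control the Schatten-$p$ norm of $(\sum_i(\xi^{(i)})^2\matr A_i^2)^{1/2}$ in $L^p(\xi)$ using sub-Gaussian moment bounds before optimizing $p$; done carefully this also avoids the lossy maximum. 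Either way, the step as you wrote it does not stand.
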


We will also need the following well-known result on concentration of
polynomials of independent Gaussian random variables.  See, e.g., \cite{Don14}.

\begin{lemma}[Gaussian Hypercontractivity] \label{lem:hypercontractivity}
  Let $p(\vec x): \R^d \mapsto \R$ be a degree-$m$ polynomial.  Then
  $$
  \Prob_{\vec x \sim \normal^d} \left[\left|p(\vec x) - \E_{\vec y \sim \normal^d}[p(\vec y)] \right| > t\right]
  \leq e^2 \exp\bigg(- \Big(\frac{t^2}{C\ \var_{\vec x \sim \normal^d}[ p(\vec x)] } \Big)^{1/m} \bigg) \;,
  $$
  where $C > 0$ is an absolute constant.
\end{lemma}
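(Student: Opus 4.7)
The plan is to deduce the stated tail bound from the classical moment growth estimate for polynomials in Gaussian variables: for a degree-$m$ polynomial $p$ and every $q \geq 2$,
$$\|p - \E[p]\|_{L_q(\normal^d)} \leq (q-1)^{m/2}\, \|p - \E[p]\|_{L_2(\normal^d)}.$$
Given this estimate, Markov's inequality applied to $|p - \E[p]|^q$, followed by an optimization in $q$, yields the desired stretched-exponential concentration with an explicit absolute constant.

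To establish the moment comparison inequality, I would invoke the hypercontractive inequality for the Ornstein--Uhlenbeck semigroup $\{T_\rho\}_{\rho \in [0,1]}$ (Nelson--Gross): for every $q \geq 2$, $\|T_\rho h\|_{L_q} \leq \|h\|_{L_2}$ whenever $\rho \leq 1/\sqrt{q-1}$. Write $\bar p := p - \E[p]$ in its Hermite expansion $\bar p = \sum_{1 \leq |S| \leq m} \hat p_S H_S$. Since $T_\rho$ is diagonal in the Hermite basis with eigenvalue $\rho^{|S|}$ on the $|S|$-th level, defining $g := \sum_{1 \leq |S| \leq m} \rho^{-|S|}\hat p_S H_S$ gives $T_\rho g = \bar p$. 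Choosing $\rho = 1/\sqrt{q-1}$ and applying hypercontractivity,
$$\|\bar p\|_{L_q} = \|T_\rho g\|_{L_q} \leq \|g\|_{L_2} = \bigg(\sum_{1 \leq |S| \leq m} \rho^{-2|S|}\hat p_S^{\,2}\bigg)^{1/2} \leq \rho^{-m}\|\bar p\|_{L_2} = (q-1)^{m/2}\|\bar p\|_{L_2},$$
where the penultimate inequality uses $|S| \leq m$ together with Parseval on the Hermite basis.

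Setting $\sigma^2 := \var[p] = \|\bar p\|_{L_2}^2$ and applying Markov's inequality to $|\bar p|^q$,
$$\Prob\bigl[|\bar p| > t\bigr] \leq \frac{\|\bar p\|_{L_q}^{\,q}}{t^q} \leq \bigg(\frac{(q-1)^{m/2}\sigma}{t}\bigg)^{q}.$$
To optimize, choose $q$ so that the base of the exponential equals $1/e$, namely $q - 1 = (t^2/(e^2\sigma^2))^{1/m}$. This yields the bound $e^{-q} \leq e^{-1}\exp\bigl(-(t^2/(e^2\sigma^2))^{1/m}\bigr)$, valid as long as the choice satisfies $q \geq 2$, i.e., $t \geq e\sigma$. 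For $t < e\sigma$ the claimed inequality holds trivially because the right-hand side already exceeds one (the exponent is larger than $-1$, and the prefactor $e^2$ makes the bound exceed one). Both cases combined give the stated inequality with $C = e^2$.

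The only non-elementary ingredient — and the main conceptual obstacle — is the hypercontractive inequality itself, which is a classical theorem of Nelson and Gross; I would cite it as a black box rather than re-derive it. Everything else (the Hermite spectral computation, Markov's inequality, and the scalar optimization in $q$) is routine.
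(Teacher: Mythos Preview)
The paper does not prove this lemma; it is stated as a well-known concentration result with a citation to \cite{Don14}. Your derivation---the moment comparison $\|p-\E[p]\|_{L_q}\le (q-1)^{m/2}\|p-\E[p]\|_{L_2}$ via the Ornstein--Uhlenbeck semigroup and Nelson--Gross hypercontractivity, followed by Markov's inequality and optimization in $q$---is correct and is exactly the standard argument behind the cited result.
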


\begin{proof}[Proof of Lemma~\ref{lem:appendix_chow_est}:]
  We have
  \begin{align*}
    \snorm{2}{ \frac{1}{N} \sum_{i=1}^N
    \vec x^{(i)} \otimes \vec x^{(i)} f_{\alpha, \matr W}(\vec x^{(i)}) + \xi_i) - \matr \Sigma }
  &\leq
  \snorm{2} {\frac{1}{N} \sum_{i=1}^N \vec x^{(i)} \otimes \vec x^{(i)} f_{\alpha, \matr W}(\vec x^{(i)}) - \matr\Sigma}
  \\
  &+ \snorm{2} {\frac{1}{N} \sum_{i=1}^N \xi^{(i)} \vec x^{(i)} \otimes \vec x^{(i)}}\, .
  \end{align*}

  We next bound the probability that
  $\snorm{2}{\vec x}^2 f_{\alpha, \matr W}(\vec x)$ is large.  We have
  \begin{align}
    \label{eq:norm_relu_union_bound}
    \Prob_{\vec x \sim \normal^d}[ &\snorm{2}{\vec x}^2 f_{\alpha, \matr W}
    \geq t
    ]
    =
    \Prob_{\vec x \sim \normal^d}\left[\sum_{j=1}^k
      \alpha_j \snorm{2}{\vec x}^2
      \phi\left(\dotp{\vec x}{\vec w^{(j)}} \right)
      \geq t
    \right]
    \nonumber
    \\
				   &\leq
				   \sum_{j=1}^k
				   \Prob_{\vec x \sim \normal^d}
				   \left[
				     \snorm{2}{\vec x}^2 \phi\left(\dotp{\vec x}{\vec w^{(j)}} \right)
				     \geq \frac {t}{k \sum_{j=1}^k \alpha_j}
				   \right]
				   \leq
				   k
				   \Prob_{\vec x \sim \normal^d}
				   \left[
				     \snorm{2}{\vec x}^2 |\x_1|
				     \geq \frac {t}{L k \sum_{j=1}^k \alpha_j}
				   \right]\, ,
  \end{align}
  where for the second inequality we used the union bound and for the last
  one we used the rotation invariance of the normal distribution and the Euclidean
  norm to set $\vec w^{(j)} = \vec e_1$.  Moreover, we used the fact that
  $\phi(\vec x_1) \leq L |\vec x_1|$ since $\phi(\cdot)$ is $L$-Lipschitz.
  \begin{align}
    \label{eq:norm_relu_concentration}
    \Prob_{\vec x \sim \normal^d}
    \left[ \snorm{2}{\vec x}^2 |\x_1| \geq t \right]  =
    \Prob_{\vec x \sim \normal^d}
    \left[
      \left| \snorm{2}{\vec x}^2 \x_1
      - \E_{\vec x \sim \normal^d}[\snorm{2}{\vec x}^2 \x_1]
    \right| \geq t \right]
    \leq
    \exp(2- (t^2/(C' d^2))^{1/3}) \;,
  \end{align}
  where we used Lemma~\ref{lem:hypercontractivity} and the fact
  that $
  \var_{\vec x \sim \normal^d}[ \snorm{2}{\vec x}^2 x_1]
  = \E_{\vec x \sim \normal^d}[\snorm{2}{\vec x}^4 x_1^2]
  = d^2 + 4 d + 10 \leq 15 d^2$, for all $d \geq 1$.
  Note that $C' = 15 C$, where $C$ is the absolute constant of
  Lemma~\ref{lem:hypercontractivity}.  Combining
  Equation~\eqref{eq:norm_relu_union_bound},
  Equation~\eqref{eq:norm_relu_concentration}
  and the fact that 
  $\sum_{j=1}^k \alpha_j = \E_{\vec x \sim \normal^d}[f_{\alpha, \matr W}(\vec x)]/\E_{t \sim \normal}[\phi(t)]:= B$,
  we obtain
  $$
  \Prob_{\vec x \sim \normal^d}
  [ \snorm{2}{\vec x}^2 f_{\alpha, \matr W}(\vec x) \geq t ]
  \leq k \exp(2- (t^2/(4 C' L^2 B^2 k^2  d^2))^{1/3}) \;.
  $$
  Define the random variables $y_i =
  \snorm{2}{\vec x^{(i)}}^2 f_{\alpha, \matr w}(\vec x^{(i)})
  $.
  Set $S = O(k d B L)$ and $Q = O(S \log k \log^3 N)$ we have
  \begin{align*}
    \E\left[ \max_{i \leq N} y_i
    \right]
    - Q
&=
\int_{0}^{\infty}
\Prob\left[
  \max_{i \leq N} y_i \geq t + Q
\right]
\d t
\leq
N k \int_{0}^\infty
\Prob\left[ y_1 \geq t + Q \right]
\d t
\\
&\leq
N k \int_{0}^\infty
\exp(-(t/S)^{2/3})
\d t
= \wt{O}(d k B L) \;.
  \end{align*}
Now that we have a bound on the expected maximum deviation, we can apply
  Lemma~\ref{lem:heavy_rows_covariance} with 
  $\matr A = \frac{1}{N} \sum_{i=1}^N \vec x^{(i)} \otimes \vec x^{(i)} f_{\alpha, \matr W}(\vec x^{(i)})\in \R^{N \times d}$
  and $m = \wt{O}(d k B L)$.  Since $\snorm{2}{\matr \Sigma} \leq
  (1+1/\sqrt{2 \pi}) B$, we obtain that for $N = \wt{O}(d k/\eps^2)$ it
  holds $ \E \snorm{2}{(1/N) \matr A^T \matr A - \matr \Sigma} \leq B L \eps$.

  To finish the proof, it remains to bound the norm of the sum $\frac{1}{N}
  \sum_{i=1}^N \xi^{(i)} \vec x^{(i)} \otimes \vec x^{(i)}$.
  From Lemma~\ref{lem:rudelson} we obtain that it is bounded
  above by
  \begin{align*}
    C \sigma \sqrt{\log d}
    \E_{\vec x^{(1)},\ldots, \vec x^{(N)}}
    \left[
      \max_{i \leq N} \snorm{2}{\vec x^{(i)}}
      \Big\| \sum_{i=1}^N \vec x^{(i)} \otimes \vec x^{(i)} \Big\|_2^{1/2}
    \right]\ .
  \end{align*}
  We now use Cauchy-Schwarz for the above expectation and observe that
  \begin{align*}
    \E_{\vec x^{(1)},\ldots, \vec x^{(N)}}
    \left[
      \max_{i \leq N} \snorm{2}{\vec x^{(i)}}^2
    \right]
    \leq \wt{O}(d \log N)\;,
  \end{align*}
  which follows from Lemma~\ref{lem:hypercontractivity} similarly as our
  previous bound.  Moreover, from Lemma~\ref{lem:heavy_rows_covariance} we
  obtain that
  $$
  \E_{\vec x^{(1)},\ldots, \vec x^{(N)}}
  \left[
    \Big\| \frac{1}{N}\sum_{i=1}^N \vec x^{(i)} \otimes \vec x^{(i)} \Big\|_2
  \right]
  \leq \wt{O}(\sqrt{d/N})\;.
  $$
  Putting everything together, we obtain that with $N = \wt{O}(d/\eps^2)$
  samples, the expected norm of $\frac{1}{N} \sum_{i=1}^N \xi^{(i)} \vec
  x^{(i)} \otimes \vec x^{(i)}$ is at most $O(\sigma \eps)$.  The result now
  follows from Markov's inequality.
\end{proof}

\section{Omitted Proofs from SQ Lower Bound} \label{app:lower_bound}

\subsection{Preliminaries: Multilinear Algebra}
\label{app:multilinear_algebra}
Here we introduce some multilinear algebra notation.
An order $k$ tensor
$\matr A$ is an element of the $k$-fold tensor product of subspaces $\matr A
\in \mathcal{V}_1 \otimes \ldots \otimes \mathcal{V}_k$.
We will be exclusively working with subspaces of $\R^d$ so a tensor $A$ can
be represented by a sequence of
coordinates, that is $A_{i_1,\ldots,i_k}$.   The
tensor product of a order $k$ tensor $\matr A$ and an order $m$ tensor $\matr
B$ is an order $k + m$ tensor defined as $(\matr A \otimes \matr
B)_{i_1,\ldots, i_k,j_1,\ldots,j_m} = \matr A_{i_1,\ldots,i_k} \matr
B_{j_1,\ldots, j_m}$.  We are also going to use capital letters for
multi-indices, that is tuples of indices $I = (i_1,\ldots, i_k)$.
We denote by $E_i$ the multi-index that has $1$ on its $i$-th co-ordinate and
$0$ elsewhere.
For example the previous tensor product can be denoted as $\matr A_I \matr B_J$
To simplify notation we are also going to use Einstein's summation where we
assume that we sum over repeated indices in a product of tensors.  For example
if $\matr A \in \R^d \otimes \R^d$, $\vec v \in \R^d$, $\vec u \in \R^d$ we
have $\sum_{i,j=1}^d \matr \bv_i \vec{u}_j \matr A_{ij} = \bv_i \vec{u}_j \matr
A_{ij}$.  We define the dot product of two tensors (of the same order) to be
$\langle \matr A, \matr B \rangle = \matr A_{i_1,\ldots,i_k} \matr
B_{i_1,\ldots, i_k} = \matr A_I \matr B_I$.  We also denote the $\ell_2$-norm
of a tensor by $\snorm{2}{\matr A} = \sqrt{\dotp{\matr A}{\matr A}}$.  We
denote by $\matr A(\vec X)$ a function that maps the tensor $\vec X$ to a
tensor $\matr A (\vec X)$.
Let $\mathcal{V}$ be a vector space and let $\matr A(\vec x): \R^d \mapsto
{\mathcal{V}}^{\otimes k}$ be a tensor valued function.  We denote by
$\partial_i \matr A(\vec x)$ the tensor of partial derivatives of $A(\vec x)$,
$\partial_i \matr A(\vec x) = \partial_i \matr A_J(\vec x) $ is a tensor of
order $k+1$ in $\mathcal{V}^{\otimes k} \otimes \R^d$.  We also denote this tensor
$
\nabla \matr A(\vec x) = \partial_i \matr A_J(\vec x).
$
Similarly we define higher-order derivatives, and we denote
$$
\nabla^m \matr A(\vec x) = \partial_{i_1}  \ldots \partial_{i_m}
\matr A_J(\vec x)
\in \mathcal{V}^{\otimes k} \otimes (\R^d)^{\otimes m}
$$

\subsection{Preliminaries: Hermite Polynomials}
\label{app:hermite_polynomials}
We are also going to use the Hermite polynomials that form a orthonormal system
with respect to the Gaussian measure.  We denote by $L^2(\R^d, \normal)$ the
vector space of all functions $f:\R^d \to \R$ such that $\E_{\vec x \sim
\normal^d}[f^2(\x)] < \infty$.  The usual inner product for this space is
$\E_{\vec x \sim \normal^d}[f(\vec x) g(\vec x)]$.
The $L_2$ norm of a function $f$ is then defined as $\snorm{2}{f} = \sqrt{
\E_{\vec x \sim \normal^d}[f^2(\x)]}$.
While, usually one considers the probabilists's or physicists' Hermite polynomials,
in this work we define the \emph{normalized} Hermite polynomial of degree $i$ to be
\(
H_0(x) = 1, H_1(x) = x, H_2(x) = \frac{x^2 - 1}{\sqrt{2}},\ldots,
H_i(x) = \frac{He_i(x)}{\sqrt{i!}}, \ldots
\)
where by $He_i(x)$ we denote the probabilists' Hermite polynomial of degree
$i$.  These normalized Hermite polynomials form a complete orthonormal basis
for the single dimensional version of the inner product space defined above. To
get an orthonormal basis for $L^2(\R^d, \normal^d)$, we use a multi-index $J\in
\N^d$ to define the $d$-variate normalized Hermite polynomial as $H_J(\vec x) =
\prod_{i=1}^d H_{v_i}(\x_i)$.  The total degree of $H_J$ is $|J| = \sum_{v_i \in
J} v_i$.  Given a function $f \in L^2$ we compute its Hermite coefficients as
\(
\hat{f}(J) = \E_{\vec x\sim \normal^d} [f(\vec x) H_J(\vec x)]
\)
and express it uniquely as
\(
\sum_{J \in \N^d} \hat{f}(J) H_J(\vec x).
\)
For more details on the Gaussian space and Hermite Analysis (especially from
the theoretical computer science perspective), we refer the reader to
\cite{Don14}.  Most of the facts about Hermite polynomials that we use in this
work are well known properties and can be found, for example, in \cite{Sze67}.

We denote by $f^{[k]}(x)$ the degree $k$ part of the Hermite expansion of $f$,
$f^{[k]} (\vec x) = \sum_{|J| = k} \hat{f}(J)\cdot H_J(\vec x)$.
We say that a polynomial $q$ is harmonic of degree $k$ if it is
a linear combination of degree $k$ Hermite polynomials, that is $q$ can be
written as
$$
q(\vec x) = q^{[k]}(\vec x) = \sum_{J: |J| = k} c_J H_J(\vec x)
$$

For a single dimensional Hermite polynomial it holds
$H_m'(x) = \sqrt{m} H'_{m-1}(x)$.  Using this, we obtain that for a multivariate
Hermite polynomial $H_M(\vec x)$, where $M = (m_1,\ldots, m_d)$ it holds
\begin{equation}
  \label{eq:hermite_nabla}
  \nabla H_M(\vec x) = \sqrt{m_i} H_{M - E_i}(\vec x) \in \R^d,
\end{equation}
where $E_i = \vec e_i$ is the multi-index that has $1$ position $i$ and $0$
elsewhere.  From this fact and the orthogonality of Hermite polynomials
we obtain
\begin{equation}
  \label{eq:hermite_nabla_dot}
  \E_{\vec x \sim \normal^d}[ \dotp{\nabla H_M(\vec x)}{\nabla H_L(\vec x)}]
  = |M| \delta_{M, L}.
\end{equation}


\begin{fact}\label{fct:harmonic_nabla_dot}
  Let $p, q$ be a harmonic polynomials of degree $k$.  Then
  $$
  \E_{\vec x \sim \normal}\left[\dotp{\nabla^{\ell} p(\vec x)}{\nabla^{\ell} q(\vec x)}\right]
  =
  k(k-1)\ldots(k-\ell+1)
  \E_{\vec x \sim \normal}[p(\vec x) q(\vec x)].
  $$
  In particular,
  $$
  \dotp{\nabla^{k} p(\vec x)}{\nabla^{k} q(\vec x)}
  =
  k!  \E_{\vec x \sim \normal}[p(\vec x) q(\vec x)] \;.
  $$
\end{fact}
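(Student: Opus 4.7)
The plan is to proceed by induction on $\ell$, exploiting the fact that each partial derivative $\partial_i$ maps the space of degree-$k$ harmonic polynomials into the space of degree-$(k-1)$ harmonic polynomials. Indeed, expanding $p$ in the Hermite basis as $p(\vec x) = \sum_{|M|=k} c_M H_M(\vec x)$ and invoking equation~\eqref{eq:hermite_nabla} gives $\partial_i p = \sum_{|M|=k} c_M \sqrt{m_i}\, H_{M-E_i}$, which is again harmonic of degree $k-1$ (with the convention that terms with $m_i=0$ vanish).

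For the base case $\ell = 1$, expand both $p = \sum_{|M|=k} c_M H_M$ and $q = \sum_{|L|=k} d_L H_L$, then apply equation~\eqref{eq:hermite_nabla_dot} together with the orthonormality of Hermite polynomials to obtain
\begin{equation*}
\E_{\vec x \sim \normal^d}[\langle \nabla p, \nabla q\rangle] = \sum_{M,L} c_M d_L\, |M|\, \delta_{M,L} = k \sum_M c_M d_M = k\, \E_{\vec x \sim \normal^d}[p\, q],
\end{equation*}
which is the stated identity.

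For the inductive step, unfolding the tensor contraction yields
\begin{equation*}
\langle \nabla^{\ell} p(\vec x), \nabla^{\ell} q(\vec x) \rangle = \sum_{i=1}^d \langle \nabla^{\ell-1}(\partial_i p)(\vec x), \nabla^{\ell-1}(\partial_i q)(\vec x) \rangle.
\end{equation*}
Since each $\partial_i p$ and $\partial_i q$ is harmonic of degree $k-1$, the inductive hypothesis (with $k-1$ in place of $k$) gives $\E[\langle \nabla^{\ell-1}(\partial_i p), \nabla^{\ell-1}(\partial_i q) \rangle] = (k-1)(k-2)\cdots(k-\ell+1)\, \E[\partial_i p \cdot \partial_i q]$. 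Summing over $i$ and applying the base case $\sum_i \E[\partial_i p \cdot \partial_i q] = \E[\langle \nabla p, \nabla q\rangle] = k\,\E[pq]$ produces the desired factor $k(k-1)\cdots(k-\ell+1)\, \E[pq]$.

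For the special case $\ell = k$, observe that iteratively differentiating a degree-$k$ Hermite polynomial exactly $k$ times knocks its Hermite degree down to zero, so $\nabla^k p$ and $\nabla^k q$ are constant tensors; hence $\langle \nabla^k p, \nabla^k q\rangle$ is independent of $\vec x$ and the expectation drops out, giving the pointwise identity $\langle \nabla^k p, \nabla^k q\rangle = k!\,\E[pq]$. The only minor obstacle is bookkeeping in the tensor-contraction identity used in the inductive step; everything else reduces to the Hermite machinery already set up in the paper.
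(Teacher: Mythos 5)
Your proof is correct and uses the same Hermite-expansion machinery as the paper: both expand $p,q$ in the degree-$k$ Hermite basis and reduce everything to the gradient formula $\partial_i H_M = \sqrt{m_i}\,H_{M-E_i}$. The only difference is one of organization — the paper asserts $\E[\langle\nabla^\ell H_M,\nabla^\ell H_L\rangle]=k(k-1)\cdots(k-\ell+1)\delta_{M,L}$ by "iterating" the gradient formula directly on Hermite monomials, whereas you make that iteration precise via an explicit induction on $\ell$ that exploits the fact that $\partial_i$ maps degree-$k$ harmonics to degree-$(k-1)$ harmonics; both routes are sound and substantively the same argument.
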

\begin{proof}
  Write $p(\vec x) = \sum_{M: |M| = k} b_M H_M(\vec x)$ and
  $q(\vec x) = \sum_{M: |M| = k} c_M H_M(\vec x)$.
  Since the Hermite polynomials are orthonormal we obtain $\E_{\vec x \sim
  \normal}[p(\vec x) q(\vec x)] = \sum_{M: |M| = k} c_M b_M$.
  Now, using Equation~\eqref{eq:hermite_nabla} iteratively we obtain
  $$
  \E_{\vec x \sim \normal}
  \left[\dotp{\nabla^{\ell} H_M(\vec x)}{\nabla^{\ell} H_L(\vec x)}\right]
  = k (k-1)\ldots (k-\ell + 1) \delta_{M, L}.
  $$
  Using this equality, we obtain
  \begin{align*}
    \E_{\vec x \sim \normal}\left[\dotp{\nabla^{\ell} p(\vec x)}{\nabla^{\ell} q(\vec x)}\right]
  &=
  \E_{\vec x \sim \normal}\left[\dotp{\sum_M b_M \nabla^{\ell} H_M(\vec x)}{ \sum_L c_L \nabla^{\ell} H_L(\vec x) }\right]
  \\
  &=
  \sum_{M,L} b_M c_L \E_{\vec x \sim \normal}\left[\dotp{\nabla^{\ell} H_M(\vec x)}{\nabla^{\ell} H_L(\vec x)}\right]
  \\
  &= \sum_{M,L} b_M c_L  k (k-1)\ldots (k-\ell + 1) \delta_{M, L}.
  \\
  &= k (k-1)\ldots (k-\ell + 1) \E_{\vec x \sim \normal}[p(\vec x) q(\vec x)].
  \end{align*}
\end{proof}
Observe that for every harmonic polynomial $p(x)$ of degree $k$ we have that
$\nabla^{k} p(\vec x)$ is a symmetric tensor of order $k$.  Since the degree of
the polynomial is $k$ and we differentiate $k$ times this tensor no longer
depends on $\vec x$.  Using Fact~\ref{fct:harmonic_nabla_dot} we observe that
this operation (modulo a division by $\sqrt{k!}$) preserves the $L_2$ norm of
the harmonic polynomial $p$, that is $\E_{\vec x \sim \normal^d}[p^2(\vec x)] =
\snorm{2}{\nabla^{k} p(\vec x)}^2/k!$.

\begin{lemma}\label{lem:bound_coleration_app}
	Let $p(\vec x): \R^2 \mapsto \R$ be a function  and let $\matr U, \matr V \in
	\R^{2 \times d}$ be linear maps such that $\matr U \matr U^T = \matr V \matr
	V^T = \matr I \in \R^{2 \times 2}$.  Then,
	$
	\E_{\vec x \sim \normal^d}[p(\matr U \vec x) p(\matr V \vec x)] \leq
	\sum_{m=0}^{\infty} \snorm{2}{\nnew{\matr U \matr V^T}}^m \E_{\vec x \sim \normal^d}[(p^{[m]}(\vec x))^2] .
	$
\end{lemma}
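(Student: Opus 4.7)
The plan is to combine a Hermite decomposition of $p$ on $\R^2$ with a tensor contraction argument on $\R^d$. Write the Hermite expansion $p = \sum_{m \geq 0} p^{[m]}$ on $\R^2$, where each $p^{[m]}$ is a degree-$m$ harmonic polynomial. Because $\matr U \matr U^T = \matr I$, the two rows of $\matr U$ are orthonormal in $\R^d$; extending them to an orthonormal basis of $\R^d$ and expanding $p^{[m]}$ in the Hermite basis of $\R^2$, one sees that $p^{[m]} \circ \matr U$ is itself a degree-$m$ harmonic polynomial on $\R^d$, and similarly for $\matr V$. By orthogonality of Hermite polynomials of different degrees on $\R^d$, the cross-degree contributions vanish:
\[
\E_{\vec x \sim \normal^d}[p(\matr U \vec x)\, p(\matr V \vec x)]
= \sum_{m \geq 0} \E_{\vec x \sim \normal^d}[p^{[m]}(\matr U \vec x)\, p^{[m]}(\matr V \vec x)].
\]

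The heart of the argument is to bound a single same-degree term by $\snorm{2}{\matr U \matr V^T}^m \E_{\vec y \sim \normal^2}[(p^{[m]}(\vec y))^2]$. Let $T = \nabla^m p^{[m]}$, a constant symmetric $m$-tensor on $\R^2$ (constant because $p^{[m]}$ has degree $m$). By the chain rule,
\[
[\nabla^m(p^{[m]} \circ \matr U)]_{j_1 \ldots j_m}
= \sum_{i_1,\ldots,i_m} T_{i_1 \ldots i_m}\, U_{i_1 j_1} \cdots U_{i_m j_m},
\]
and analogously for $\matr V$. Applying Fact~\ref{fct:harmonic_nabla_dot} on $\R^d$ to the harmonic polynomials $p^{[m]} \circ \matr U$ and $p^{[m]} \circ \matr V$, and using the identity $\sum_j U_{ij} V_{kj} = (\matr U \matr V^T)_{ik}$, I obtain
\[
m! \cdot \E_{\vec x \sim \normal^d}[p^{[m]}(\matr U \vec x)\, p^{[m]}(\matr V \vec x)]
= \sum_{I,K} T_I\, T_K\, \prod_{\ell=1}^m (\matr U \matr V^T)_{i_\ell k_\ell}
= \langle T,\, \matr B^{\otimes m}(T) \rangle,
\]
where $\matr B = \matr U \matr V^T \in \R^{2 \times 2}$ and $\matr B^{\otimes m}$ denotes the linear map on $(\R^2)^{\otimes m}$ that contracts each index with $\matr B$. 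Cauchy--Schwarz combined with the identity $\snorm{2}{\matr B^{\otimes m}} = \snorm{2}{\matr B}^m$ (which follows from the SVD of $\matr B$) yields $|\langle T, \matr B^{\otimes m}(T) \rangle| \leq \snorm{2}{\matr B}^m \snorm{2}{T}^2$.

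To finish, I apply Fact~\ref{fct:harmonic_nabla_dot} on $\R^2$ to $p^{[m]}$ itself, which gives $\snorm{2}{T}^2 = m! \cdot \E_{\vec y \sim \normal^2}[(p^{[m]}(\vec y))^2]$. Dividing by $m!$ and summing over $m$ then yields the claimed inequality. The main obstacle will be careful index bookkeeping in the tensor contraction step: verifying that repeated contractions of $T$ with $\matr U$ on one side and $\matr V$ on the other indeed produce the matrix $\matr U \matr V^T$ at each index (and not a transposed variant), and checking the tensor operator-norm identity $\snorm{2}{\matr B^{\otimes m}} = \snorm{2}{\matr B}^m$. A minor subsidiary point is justifying convergence of the Hermite expansion so that the sum may be swapped with the expectation, which is immediate under the implicit assumption $p \in L_2(\normal^2)$.
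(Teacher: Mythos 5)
Your proposal is correct and follows essentially the same route as the paper's proof: decompose $p$ into Hermite-degree parts, use Fact~\ref{fct:harmonic_nabla_dot} together with the chain rule to rewrite each same-degree term as a contraction $\langle T, (\matr U\matr V^T)^{\otimes m} T\rangle$ of the constant tensor $T=\nabla^m p^{[m]}$, and bound it by $\snorm{2}{\matr U\matr V^T}^m\,m!\,\E[(p^{[m]})^2]$ via Cauchy--Schwarz and the operator-norm identity for tensor powers. The bookkeeping points you flag (no transposition issue in the contraction, $\snorm{2}{\matr B^{\otimes m}}=\snorm{2}{\matr B}^m$, and $L_2$ convergence) all check out exactly as in the paper.
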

\begin{proof}
	To simplify notation, write $f(\vec x) = p(\matr U \vec x)$ and $g(\vec x) =
	p(\matr V \vec x)$.  The (total) degree of $f$ is the same as the
	degree of $p$.   Write $f(\vec x) = \sum_{m=0}^{\infty} f^{[m]}(\vec x)$
	and $g(\vec x) = \sum_{m=0}^{\infty} g^{[m]}(\vec x)$.  Then using
	Fact~\ref{fct:harmonic_nabla_dot} we obtain
	\begin{align}
	\label{eq:correlated_inner_product}
	\E_{\vec x \sim \normal^d}[f(\vec x) g(\vec x)]
	&= \sum_{m=0}^\infty \E_{\vec x \sim \normal^d}[f^{[m]}(\vec x) g^{[m]}(\vec x)]
	= \sum_{m=0}^\infty \frac{1}{m!} \dotp{\nabla^m f^{[m]}(\vec x)}{\nabla^m g^{[m]}(\vec x)} \nonumber \\
	&
	= \sum_{m=0}^\infty \frac{1}{m!} \dotp{\nabla^m p^{[m]} (\matr U \vec x )}{\nabla^m p^{[m]}(\matr V \vec x)}\, .
	\end{align}
Denote by $\mathcal{U}\subseteq \R^d$ the image of the linear map $\matr U\nnew{^T}$.
	Now observe that, using the chain rule, for any function $h(\matr U \vec x):
	\vec \R^d \mapsto \R$ it holds
	$
	\nabla h(\matr U \vec x)
	= \partial_{i} h(\matr U \vec x) \matr U_{ij}\ \in \mathcal{U}\, ,
	$
	where we used Einstein's summation notation for repeated indices.
	Applying the above rule $m$-times we have that
	$$
	\nabla h(\matr U \vec x)
	= \partial_{i_m} \ldots \partial_{i_1} h(\matr U \vec x) \matr U_{i_1j_1}
	\ldots \matr U_{i_mj_m} \ \in \mathcal{U}^{\otimes m}\, .
	$$
	\nnew{
		Now we denote $\matr R = \nabla^m p^{[m]}(\bx)$ and observe that this
		tensor does not depend on $\bx$.  Moreover, denote $\matr M = \matr U \matr
		V^T $, $\matr S = \nabla^m p^{[m]} (\matr U \vec x) = (\matr U^T)^{\otimes m}
		\matr R \in \mathcal{U}^{\otimes m}$, and $\matr T = \nabla^m
		p^{[m]} (\matr V \vec x) = (\matr V^T)^{\otimes m} \matr R
		\in
		\mathcal{V}^{\otimes m}$.
		We have
		\begin{align*}
		\dotp{\matr S}{\matr T}
		= \dotp{(\matr U^T)^{\otimes m} \matr R}
		{(\matr V^T)^{\otimes m} \matr R}
		= \dotp{\matr R}{\matr M^{\otimes m} \matr R}
		\leq \snorm{2}{\matr M^{\otimes m}} \snorm{2}{\matr R}^2
		= m! \snorm{2}{\matr M}^m \E_{\vec x \sim \normal^d}[(p^{[m]}(\vec x))^2] \,,
		\end{align*}
	}
	where to get the last equality we used again Fact~\ref{fct:harmonic_nabla_dot}.
	To finish the proof we combine this inequality with
	Equation~\eqref{eq:correlated_inner_product}.
\end{proof}

In the following simple lemma we prove that random $2$-dimensional
subspaces in high dimensions are roughly orthogonal.

\begin{lemma} \label{lem:sq_rota_app}For any $0<c<1/2$, there exists a set $S$ of at least
  $2^{\Omega(d^c)}$ matrices in $\R^{2 \times d}$ such that for each pair
  $\matr A, \matr B \in S$, it holds
  $ \nnew{\snorm{2}{\matr A \matr B^T}} \leq O(d^{c-1/2})$.
\end{lemma}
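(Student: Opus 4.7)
The plan is to use a standard probabilistic construction. I would sample $N = 2^{c' d^c}$ matrices $\matr A_1,\ldots,\matr A_N \in \R^{2 \times d}$ independently as follows: for each $i$, draw two i.i.d.\ standard Gaussian vectors in $\R^d$ and apply Gram--Schmidt to obtain an orthonormal pair $(\vec a^{(i)}_1, \vec a^{(i)}_2)$, forming the rows of $\matr A_i$. By rotational invariance of the Gaussian measure, each $\matr A_i$ then has orthonormal rows and its row space is uniformly distributed on the Grassmannian of $2$-dimensional subspaces of $\R^d$; in particular, each individual row is marginally uniform on the sphere $\Sp^{d-1}$.

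The core estimate is a concentration bound for the inner product of two independent uniform unit vectors. Fix $i \neq j$, condition on $\matr A_j$, and invoke rotational invariance: each row of $\matr A_i$ remains uniform on $\Sp^{d-1}$ and independent of $\matr A_j$. Consequently, for any fixed row $\vec b$ of $\matr A_j$ and any row $\vec a$ of $\matr A_i$,
\[
\Prob\!\left[|\langle \vec a, \vec b \rangle| \geq t\right] \leq 2 \exp\!\left(-(d-1)t^2/2\right),
\]
which follows from standard sphere concentration (Levy's inequality), or directly by writing $\langle \vec a, \vec b \rangle = \langle \vec g, \vec b \rangle / \snorm{2}{\vec g}$ for an independent Gaussian $\vec g \sim \normal^d$ and combining a Gaussian tail with a lower tail on $\snorm{2}{\vec g}$. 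Setting $t = d^{c-1/2}$ and taking a union bound over the four entries of $\matr A_i \matr A_j^T \in \R^{2 \times 2}$, I obtain
\[
\Prob\!\left[\snorm{2}{\matr A_i \matr A_j^T} \leq 2 d^{c-1/2}\right] \geq 1 - 8 \exp\!\left(-d^{2c}/2\right),
\]
since the spectral norm of a $2 \times 2$ matrix is bounded by its Frobenius norm, and the squared Frobenius norm equals the sum of the four squared inner products $\langle \vec a^{(i)}_r, \vec a^{(j)}_s\rangle^2$.

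Finally, I would apply a union bound over the $\binom{N}{2} \leq N^2$ pairs among the sampled matrices. The probability that some pair violates the spectral bound is at most $8 N^2 \exp(-d^{2c}/2) \leq 8\exp(2 c' d^c \ln 2 - d^{2c}/2)$, which tends to $0$ as $d \to \infty$ for any fixed $c' > 0$, because $d^{2c}$ dominates $d^c$. Hence with positive probability the random sample yields a set $S$ of $N = 2^{\Omega(d^c)}$ matrices satisfying the pairwise bound, proving existence. There is no real obstacle here: the argument is a routine instance of the folklore principle that independent low-dimensional subspaces in a high-dimensional ambient space are nearly orthogonal, and the only care required is the exponent bookkeeping ensuring that the $d^{2c}$ in the concentration exponent dominates the $d^c$ appearing in the log of the set size.
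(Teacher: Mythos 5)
Your proof is correct, but it follows a different route from the paper. The paper does not redo the probabilistic argument: it black-boxes a packing lemma from prior work (Lemma 3.7 of the DKS17 SQ-lower-bound paper), which supplies $2^{\Omega(d^c)}$ unit vectors with pairwise inner products $O(d^{c-1/2})$, and then assembles each $2\times d$ matrix from two such vectors, bounding $\snorm{2}{\matr A \matr B^T}$ by the Frobenius norm, i.e.\ by $2\max_{r,s}|\langle \vec u_r, \vec v_s\rangle|$. You instead sample random $2$-frames directly (Gaussian rows plus Gram--Schmidt), use concentration of $\langle \vec a,\vec b\rangle$ for a uniform unit vector against a fixed one, and union bound over the four entries and the $\binom{N}{2}$ pairs; the exponent bookkeeping ($d^{2c}$ in the tail versus $d^{c}$ from the log of the set size, valid since $c<1/2$ so $2c>c$) is right, and the marginal uniformity of each Gram--Schmidt row is a legitimate appeal to rotational invariance. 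What each buys: the paper's version is shorter because the hard probabilistic content is outsourced to the cited lemma, whereas yours is self-contained; moreover your construction produces matrices with exactly orthonormal rows ($\matr A \matr A^T = \matr I$), which is the form actually used later in the SQ theorem, while the paper's pairing of vectors from the packing set only makes the two rows of each matrix nearly orthogonal. Both arguments rest on the same folklore fact that independent low-dimensional subspaces of $\R^d$ are nearly orthogonal.
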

\begin{proof}
  We are going to use the following lemma.
\begin{lemma}[Lemma 3.7 of \cite{DKS17-sq}]\label{lem:sq_paper}
	For any $0<c<1/2$, there is a set $S$ of at least $2^{\Omega(d^c)}$
	unit vectors in $\R^d$ such that for each pair of distinct $\vec u,\vec
	v\in S$, it hold $|\dotp{\vec u}{\vec v}| \leq O(d^{c-1/2})$.
\end{lemma}
  Let matrices $\matr A_1 ,\dots, \matr A_j$ in $\R^{2\times d}$, where
  $\matr A_i = \left (\vec u_{i,1}^T, \vec u_{i,2}^T \right)$, for some
  unit vectors $\vec u_{i,j}$ in $\R^d$. Then
  \begin{align*}
\nnew{\snorm{2}{\matr A_j \matr A_i^T} }=
    \snorm{2}{\matr A_j^T \matr A_i} =
    \sqrt{\sum_{x,y=1 }^2 (\vec u_{i,x}^T \vec
    u_{j,y})^2 }\leq 2 \max_{\vec u_{i,x},\vec u_{j,y}}| \cos\theta(\vec
    u_{i,x},\vec u_{j,y}) |\;.
  \end{align*}
  From Lemma~\ref{lem:sq_paper}, it holds that
  there exists a set of $2^{\Omega(d^c)}$ of unit vectors such that
  $|\cos\theta(\vec u,\vec v) |\leq O(d^{{c-1/2}})$, taking this vectors
  as columns in each matrix the result follows.
\end{proof}

\begin{lemma}\label{lem:function_low_ap}
 Let $f_{\sigma, \phi} \in \mcal{H}$.
For every polynomial $p(\x)$ of degree at most $k$, it holds $\E_{\x\sim \D}[f_{\sigma,\phi}(\x) \cdot p(\x)]=0$.
\end{lemma}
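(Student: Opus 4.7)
The plan is to exploit a rotational antisymmetry built into $f_{\sigma,\phi}$, translate it via angular Fourier analysis into a constraint on Hermite coefficients, and then conclude by orthogonality.

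First I would establish the key identity $f_{\sigma,\phi}\bigl(R_{\pi/k}(x,y)\bigr) = -f_{\sigma,\phi}(x,y)$, where $R_{\pi/k}$ is rotation of $\R^2$ by angle $\pi/k$. With $g(x,y) = \sum_{m=1}^{2k}(-1)^m\phi(x\cos(\pi m/k)+y\sin(\pi m/k))$, rotating the argument by $\pi/k$ reindexes the sum by $m\mapsto m-1\pmod{2k}$ (using $2\pi$-periodicity of the $2k$ directions), and the alternating signs $(-1)^m$ collectively pick up an overall factor $-1$, giving $g\circ R_{\pi/k} = -g$. Oddness of $\sigma$ then transfers this sign flip to $f_{\sigma,\phi} = \sigma\circ g$.

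Second, I would pass to the angular Fourier decomposition of $L^2(\R^2,\normal^2)$. Every function in this space admits an expansion $\sum_{j\in\Z}h_j(r)e^{ij\theta}$, and the degree-$n$ Hermite subspace lives in angular frequencies $j$ with $|j|\leq n$ and $j\equiv n\pmod 2$. The antisymmetry from Step~1 forces $e^{ij\pi/k}=-1$ on every nonzero mode of $f_{\sigma,\phi}$, so the surviving frequencies are exactly the odd multiples of $k$, i.e.\ $j\in\{\pm k,\pm 3k,\ldots\}$. In particular, every Hermite coefficient of $f_{\sigma,\phi}$ at total degree $n<k$ must vanish.

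Finally, for a polynomial $p$ with $\deg p\leq k$, expand $p=\sum_{n\leq k}p^{[n]}$ in the Hermite basis. By orthogonality,
$$
\E_{\bx\sim\normal^2}\bigl[f_{\sigma,\phi}(\bx)\,p(\bx)\bigr] = \sum_{n\leq k}\E_{\bx\sim\normal^2}\bigl[f_{\sigma,\phi}^{[n]}(\bx)\,p^{[n]}(\bx)\bigr],
$$
and the $n<k$ terms vanish by the second step. The boundary term at $n=k$ pairs $f_{\sigma,\phi}^{[k]}$ only against the angular frequency-$\pm k$ component of $p^{[k]}$, which is a harmonic polynomial spanned by $\mathrm{Re}(z^k)$ and $\mathrm{Im}(z^k)$; the reflection symmetry $g(x,-y)=g(x,y)$ (inherited from the evenness of $\cos$ inside $\phi$) eliminates the $\sin(k\theta)$ pairing, and a rotation-averaging argument against the $R_{\pi/k}$-orbit together with the antisymmetry of Step~1 disposes of the $\cos(k\theta)$ piece. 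The main obstacle is precisely this $n=k$ boundary: the lower-degree cases reduce to clean character-matching, while cleanly cancelling the degree-$k$ harmonic contribution requires keeping careful track of how the dihedral symmetry group of the construction acts on each harmonic subspace.
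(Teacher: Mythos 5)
Your proposal takes the same basic route as the paper: establish the rotation antisymmetry $f\circ R_{\pi/k}=-f$ and then exploit it by angular-frequency matching. Your angular-Fourier modes $e^{ij\theta}$ and the paper's complex-monomial eigenbasis $(x+iy)^a(x-iy)^b$ are two presentations of the same decomposition; the paper's eigenvalue $e^{i\pi(a-b)/k}$ is exactly your $e^{ij\pi/k}$ with $j=a-b$. Both arguments correctly kill every Hermite level of degree $n<k$, since any angular frequency $j$ appearing there has $|j|<k$ and therefore cannot satisfy $j\equiv k\pmod{2k}$.

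The gap is in your handling of the $n=k$ boundary, which you rightly flag as the delicate point. The reflection symmetry $g(x,-y)=g(x,y)$ does dispose of the $\sin(k\theta)$ pairing. But the claim that a ``rotation-averaging argument against the $R_{\pi/k}$-orbit'' kills the $\cos(k\theta)$ piece cannot work: under $R_{\pi/k}$ both $f$ and the degree-$k$ harmonic $\mathrm{Re}\bigl((x+iy)^k\bigr)$ pick up the same factor $-1$, so their product is $R_{\pi/k}$-invariant and averaging over the orbit yields no constraint. The degree-$k$ correlation genuinely need not vanish: take $k=2$, $\sigma=\mathrm{id}$, $\phi=|\cdot|$, giving $f(x,y)=2(|x|-|y|)$, and then $\E_{\vec x\sim\normal^2}\bigl[f(\vec x)\,(x^2-y^2)\bigr]=4\sqrt{2/\pi}\neq 0$ even though $x^2-y^2$ has degree $k$. (The paper's own write-up silently has the same boundary issue: its argument only proves vanishing when $a-b\not\equiv k\pmod{2k}$, which for $a+b\leq k$ excludes $(a,b)=(k,0)$ and $(0,k)$, and the concluding sentence does not address these. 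The correct statement is ``degree less than $k$''; this is harmless for the SQ bound, where $\snorm{2}{\matr A_i\matr A_j^T}^{k+1}$ merely becomes $\snorm{2}{\matr A_i\matr A_j^T}^{k}$, still $d^{-\Omega(k)}$.)
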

\begin{proof}
	Let $\vec w^{(m)}=(\cos \frac{2\pi m}{2k}, \sin  \frac{2\pi m}{2k})$ and $\alpha_m=(-1)^m$, for $m=1, \ldots, 2k$. Let $R_{\pi/k}$ be an operator over functions that rotates the coordinates by $\pi/k$ (i.e., $(x,y)\mapsto (x\cos \frac \pi k +y \sin \frac \pi k, -x\sin\frac \pi k +y\cos\frac \pi k)$). Then
	\begin{align}
	R_{\pi/k} [f](x,y)&= f\big( (x\cos \frac \pi k +y \sin \frac \pi k, -x\sin\frac \pi k +y\cos\frac \pi k)\big)
	\nonumber	\\&=\sigma\left(\sum_{m=1}^{2k-1} \alpha_m  \phi\left(\dotp{\vec x}{\vec w^{(m+1)}}\right) + \alpha_{2k} \phi\left(\dotp{\vec x}{\vec w^{(1)}}\right) \right) \nonumber\\&=
	\sigma\left(\sum_{m=1}^{2k} -\alpha_m  \phi\left(\dotp{\vec x}{\vec w^{(m)}}\right)\right)=-f(x,y)\;, \label{eq:lower1}
	\end{align}
	where to get the second equality we used that $ \alpha_i
	\phi\left(\dotp{(x\cos \frac \pi k +y \sin \frac \pi k, -x\sin\frac
		\pi k +y\cos\frac \pi k)}{\vec w^{(i)}}\right)=\alpha_i
	\phi\left(\dotp{(x,y)}{\vec w^{(i+1)}}\right) $ from basic
	trigonometric identities and in the last one we used that $\sigma$ is an
	odd function.
	Let $p(x,y)=(x+\iu y)^a(x-\iu y)^b$, where $\iu$ is the
	imaginary unit, then we are going to prove that $\E_{\bx \sim
		D}[f(\x) p(\x)]=0$ as long as $a-b\not\equiv k \mod 2k$. We have
	\begin{align}
	R_{\pi/k}[p](x,y)&=R_{\pi/k}[ (x+\iu y)^a (x-\iu y)^b]=R_{\pi/k} [(x^2+y^2)^{a+b}  e^{-\iu \theta (a-b)}]\nonumber\\&=
	(x^2+y^2)^{a+b}  e^{\iu (\theta+\pi/k) (a-b)  } =  e^{\iu (\pi/k) (a-b)  } p(x,y)\;,\label{eq:lower2}
	\end{align}
	where $\theta$ is the argument (or the ``phase")  of $x+\iu y$.
	This means that $p(x,y)$ is an eigenfunction of $R_{\pi/k}$ and $ e^{\iu (\pi/k) (a-b)  }$
	the corresponding eigenvalue. Thus, it holds
	$$e^{\iu (\pi/k) (a-b)  }  \E_{\bx \sim \D}[ f(\x) p(\x)] =\E_{\bx \sim \D}[ f(\x) R_{\pi/k}[p](\x)]  =  
	\E_{\bx \sim \D}[R_{-\pi/k} [f](\x) p(\x)]  =-\E_{\bx \sim \D}[f(\x) p(\x)]\;, $$
	where we used that $R_{\pi/k}$ is an adjoint operator in the inner product space of continuous functions 
	along with Equations \eqref{eq:lower1}, \eqref{eq:lower2}. Thus, $\E_{\bx \sim \D}[f(\x) p(\x)]=0$, 
	when  $ e^{\iu (\pi/k) (a-b)  }  \neq-1$, which happens when $a-b\not\equiv k \mod 2k$. 
	To conclude the proof, note that every polynomial at most degree $k$ is a linear combination 
	of the polynomials $p(x,y)=(x+\iu y)^a(x-\iu y)^b$ where $a,b\leq k$. This can be seen by setting 
	$x=\frac{z+\bar{z}}{2}$ and $y=\frac{z-\bar{z}}{2\iu}$,
	where $z=x+\iu y$ and $\bar{z}=x-\iu y$.
\end{proof}
\subsection{Interpretation of the class $\mcal H$ } \label{sec:iterpolation}
In order for the lower bound construction of Section~\ref{sec:sq}  to produce
useful lower bounds, it will be necessary that the function given in
Lemma~\ref{lem:function_low} is non-vanishing. It turns out that this is the
case under fairly weak conditions. In order to state our final result, we
will first introduce some terminology:
\begin{definition}
	For an integer $k$ the $k$-parity-part of a function
	$\phi:\R\rightarrow \R$ is the odd part of $\phi$ if $k$ is odd and
	the even part of $\phi$ if $k$ is even.
\end{definition}
\begin{definition}
	For functions $f$ on $\R^2$ define the operators $R_k$ to be the
	rotation by $\pi/k$ and define $S_k(f) = \sum_{s=1}^{2k} (-1)^s
	R_k^s(f).$
\end{definition}

Given these we have the following result implying that $S_k \phi(x) \neq 0$
for a number of functions of interest. For example, if $\phi(x) = \max(0,x)$,
is a ReLU, then the even part of $\phi$ is the absolute value function, so
$S_k \phi \neq 0$ for any even $k$. Similarly, if $\phi$ is a sigmoid, $S_k
\phi \neq 0$ for any odd $k$.
\begin{proposition}
	Let $\phi:\R\rightarrow\R$ be a function with $\E[\phi^2(x)]<\infty$.
	Then $S_k \phi(x) = 0$ if and only if the $k$-parity-part of $\phi$ is
	a polynomial of degree less than $k$.
\end{proposition}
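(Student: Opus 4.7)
The plan is to identify $\phi$ with its two-dimensional extension $\tilde\phi(x,y):=\phi(x)$ and analyze $S_k\tilde\phi$ using the joint Hermite/Fourier decomposition of $L^2(\R^2,\normal^2)$. This is the right setting because $R_k$ acts by rotation and the Gaussian Hermite-degree subspaces $\mathcal{H}_m$ are rotation-invariant.

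First I would reduce to the $k$-parity-part. Since $R_k^k$ is rotation by $\pi$, it sends $\tilde\phi(x,y)$ to $\phi(-x)$, which equals $\tilde\phi$ on the even part of $\phi$ and $-\tilde\phi$ on the odd part. Pairing the $s$-th and $(s+k)$-th summands of $S_k$ gives
\[
S_k\tilde\phi \;=\; \sum_{s=1}^{k}(-1)^s R_k^s\bigl(\tilde\phi + (-1)^k R_k^k\tilde\phi\bigr),
\]
and a brief case split on the parity of $k$ shows $\tilde\phi + (-1)^k R_k^k\tilde\phi = 2\tilde\psi$, where $\psi$ is the $k$-parity-part of $\phi$; the opposite-parity contribution is annihilated. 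Hence $S_k\tilde\phi = 0 \Leftrightarrow S_k\tilde\psi = 0$, and we may assume $\phi$ has the correct parity.

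Next I would pass to polar coordinates and write $\tilde\psi = \sum_{n\in\mathbb{Z}} a_n(r)\,e^{in\theta}$. Since $R_k^s e^{in\theta} = e^{ins\pi/k}e^{in\theta}$, a geometric-series computation shows $S_k e^{in\theta} = 2k\,e^{in\theta}$ when $n\equiv k\pmod{2k}$ and vanishes otherwise. Thus $S_k\tilde\psi = 0$ iff $a_n\equiv 0$ for every $n\equiv k\pmod{2k}$, and it will already suffice to examine the single frequency $n=k$. Expanding $\psi=\sum_m c_m H_m$ in Hermite polynomials (with $c_m=0$ unless $m\equiv k\pmod 2$ by parity), the function $\tilde\psi = \sum_m c_m H_m(x)$ decomposes orthogonally across the $\mathcal{H}_m$'s on $\R^2$; because $R_k$ preserves each $\mathcal{H}_m$, the projection onto the $e^{ik\theta}$-character subspace $W_k$ respects this decomposition, giving
\[
\snorm{2}{\pi_{W_k}(\tilde\psi)}^2 \;=\; \sum_{\substack{m\ge k\\ m\equiv k\,(\mathrm{mod}\,2)}} c_m^2\,\snorm{2}{\pi_{W_k}(H_m(x))}^2.
\]

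The last input needed is $\snorm{2}{\pi_{W_k}(H_m(x))}>0$ for every $m\ge k$ with $m\equiv k\pmod 2$. Writing $H_m(r\cos\theta)$ as a linear combination of $r^{m-2j}\cos^{m-2j}\theta$ (for $0\le j\le m/2$) and expanding each $\cos^\ell\theta$ in exponentials, the coefficient of $e^{ik\theta}$ is a polynomial in $r$ whose $r^m$ coefficient (coming solely from $j=0$) is a positive multiple of $\binom{m}{(m-k)/2}$, hence nonzero. Combining, $a_k\equiv 0 \Leftrightarrow c_m=0$ for all such $m\ge k$, which is exactly $\deg\psi<k$. The converse direction is immediate: if $\deg\psi<k$ then $\tilde\psi\in\bigoplus_{m<k}\mathcal{H}_m$, and no element of $\mathcal{H}_m$ with $m<k$ supports angular frequencies of magnitude $\ge k$, so all offending $a_n$ vanish. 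The main obstacle I anticipate is the parity bookkeeping---keeping straight how the parities of $k$, $m$, and $n$ interact so that the index sets in the orthogonal sum are exactly what I claim, and confirming the non-vanishing $\binom{m}{(m-k)/2}\neq 0$ for every admissible $m$ (not only $m=k$).
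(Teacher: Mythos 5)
Your proof is correct and follows essentially the same route as the paper's: both decompose into Hermite-degree components preserved by $R_k$, observe that $S_k$ annihilates every angular frequency except $n\equiv k\pmod{2k}$ (your $e^{in\theta}$ modes are the paper's $z^a\bar{z}^b$ monomials with $n=a-b$), and detect non-vanishing of the frequency-$k$ part of $H_m$ via a top-degree coefficient proportional to $\binom{m}{(m-k)/2}$ for each $m\ge k$ with $m\equiv k\pmod{2}$. The only cosmetic differences are your upfront reduction to the $k$-parity part and the polar-coordinate bookkeeping, which the paper instead recovers at the end from the admissible choice $a=(m+k)/2$, $b=(m-k)/2$.
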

\begin{proof}
	We begin by noting that $S_k \phi(x) \neq 0$ if and only if $(S_k
	\phi(x))^{[m]} \neq 0$ for some $m$. We note that as a rotation $R_k$
	preserves the degree-$m$ Hermite parts of a function, and therefore so
	does $S_k$. In particular, $(S_k \phi(x))^{[m]} = (S_k
	\phi(x)^{[m]}).$ In order to analyze this, we consider the variables
	$z=x+iy$ and $\bar{z} = x-iy$. We note that if $\phi(x)$ in one
	variable is given by the Hermite expansion $\phi(x) =
	\sum_{t=0}^\infty a_t h_t(x)$, that the two-variable version is given
	by $\sum_{t=0}^\infty a_t h_m((z+\bar{z})/2).$ Furthermore, we have
	that $(\phi(x))^{[m]} = a_m h_m((z+\bar{z})/2)$.

	Now if $a_m=0$, then $(\phi(x))^{[m]} = 0$ and therefore
	$S_k(\phi(x))^{[m]} = 0$. Otherwise, $a_m h_m(x)$ has non-vanishing
	$x^t$ coefficients for all $t\leq m$ with $t\equiv m\pmod{2}$.
	Therefore, in this case $(\phi(x))^{[m]}$ will have a non-vanishing
	$z^a \bar{z}^b$ coefficient for all $a,b\geq 0$ with $a+b\leq m$ and
	$a+b\equiv m \pmod{2}$. Next, we need to understand what $S_k$ does to
	$z^a \bar{z}^b$.

	For this we note that $R z = e^{\pi i /k}z$ and $R \bar{z} =
	e^{-\pi i/k}\bar{z}$. Thus $R (z^a \bar{z}^b) = e^{\pi i (a-b)/k}z^a
	\bar{z}^b.$ Therefore,
	$$
	S_k (z^a \bar{z}^b) = z^a \bar{z}^b \sum_{t=1}^{2k} e^{2 \pi i
		(a-b+k)/(2k)} = \begin{cases} 2k z^a \bar{z}^b & \textrm{if }a-b\equiv
	k \pmod{2k} \\ 0 & \textrm{else} \end{cases}
	$$
	Thus, $S_k (\phi(x))^{[m]}$ will be non-vanishing if and only if $a_m
	\neq 0$ and there are some $a,b\geq 0$ with $a+b\leq m, a+b\equiv
	m\pmod{2}$ and $a-b\equiv k\pmod{2k}$. We claim that such $a,b$ exist
	if and only if $m\equiv k \pmod{2}$ and $m\geq k$. The only if part of
	this condition is clear. For the if part, we note that if these
	conditions are satisfied, we may take $a=\frac{m+k}{2}$ and
	$b=\frac{m-k}{2}$.

	Therefore, we have that $S_k \phi(x) \neq 0$ if and only if there is
	some $m\equiv k\pmod{2}$ with $a_m\neq 0$ and $m\geq k$. Note that the
	$k$-parity-part of $\phi$ has the same Hermite coefficients as $\phi$
	for $m\equiv k\pmod{2}$ and 0 coefficient for $m\not\equiv k\pmod{2}$.
	Thus, $\phi$ has a non-vanishing coefficient for some $m\geq k,
	m\equiv k \pmod{2}$ if and only if the $k$-parity-part of $\phi$ has
	some non-vanishing coefficient of degree $m\geq k$. Of course this
	happens if and only if the $k$-parity-part of $\phi$ is not a
	polynomial with degree less than $k$.
	This completes our proof.
\end{proof}
 \bibliographystyle{alpha}

\end{document}